\newcommand\R{\mathbb R}
\def\del{\partial}
\providecommand\abs[1]{\lvert#1\rvert}
\providecommand\norm[1]{\|#1\|}
\newtheorem{theorem}{Theorem}
\newtheorem{definition}{Definition}
\newtheorem{lemma}{Lemma}
\newtheorem{corollary}{Corollary}
\newtheorem{remark}{Remark}
\newcommand*\samethanks[1][\value{footnote}]{\footnotemark[#1]}
\title{Towards a Scalable Reference-Free Evaluation of Generative Models}
\author{%
  Azim Ospanov\thanks{Department of Computer Science \& Engineering, The Chinese University of Hong Kong, Hong Kong} \\
  \texttt{aospanov9@cse.cuhk.edu.hk}
  \And
  Jingwei Zhang\samethanks \\
  \texttt{jwzhang22@cse.cuhk.edu.hk}
  \And
  Mohammad Jalali\samethanks \\
  \texttt{mjalali24@cse.cuhk.edu.hk}
  \And
  Xuenan Cao \thanks{Department of Cultural and Religious Studies, The Chinese University of Hong Kong, Hong Kong}\\
  \texttt{xuenancao@cuhk.edu.hk} \\
  \And
  Andrej Bogdanov \thanks{School of Electrical Engineering and Computer Science, University of Ottawa, Canada}\\
  \texttt{abogdano@uottawa.ca}
  \And
  Farzan Farnia\samethanks[1] \\
  \texttt{farnia@cse.cuhk.edu.hk} \\
}
\begin{document}

\maketitle

\begin{abstract}
While standard evaluation scores for generative models are mostly reference-based, a reference-dependent assessment of generative models could be generally difficult due to the unavailability of applicable reference datasets. Recently, the reference-free entropy scores, VENDI \cite{friedman_vendi_2023} and RKE \cite{jalali_information-theoretic_2023}, have been proposed to evaluate the diversity of generated data. However, estimating these scores from data leads to significant computational costs for large-scale generative models. In this work, we leverage the random Fourier features framework to reduce the computational price and propose the \emph{Fourier-based Kernel Entropy Approximation (FKEA)} method. We utilize FKEA's approximated eigenspectrum of the kernel matrix to efficiently estimate the mentioned entropy scores. Furthermore, we show the application of FKEA's proxy eigenvectors to reveal the method's identified modes in evaluating the diversity of produced samples. We provide a stochastic implementation of the FKEA assessment algorithm with a complexity $O(n)$ linearly growing with sample size $n$. We extensively evaluate FKEA's numerical performance in application to standard image, text, and video datasets. Our empirical results indicate the method's scalability and interpretability applied to large-scale generative models. The codebase is available at \url{https://github.com/aziksh-ospanov/FKEA}.
\end{abstract}

\section{Introduction}
\label{intro}
A quantitative comparison of generative models requires evaluation metrics to measure the quality and diversity of the models' produced data. Since the introduction of variational autoencoders (VAEs) \cite{kingma2013auto}, generative adversarial networks (GANs) \cite{goodfellow_generative_2014}, and diffusion models \cite{ho2020denoising} that led to impressive empirical results in the last decade, several evaluation scores have been proposed to assess generative models learned by different training methods and architectures. Due to the key role of evaluation criteria in comparing generative models, they have been extensively studied in the literature.


While various statistical methods have been applied to measure the fidelity and variety of a generative model's produced data, the standard scores commonly perform a reference-based evaluation of generative models, i.e., they quantify the characteristics of generated samples in comparison to a reference distribution. The reference distribution is usually chosen to be either the distribution of samples in the test data partition or a comprehensive dataset containing a significant fraction of real-world sample types, e.g. ImageNet \cite{russakovsky_imagenet_2015} for evaluating image-based generative models.

To provide well-known examples of reference-dependent metrics, note that the distance scores, Fréchet Inception Distance (FID) \cite{heusel_gans_2018} and Kernel Inception Distance (KID) \cite{binkowski2018demystifying}, are explicitly reference-based, measuring the distance between the generative and reference distributions. Similarly, the standard quality/diversity score pairs, Precision/Recall \cite{sajjadi_assessing_2018,kynkaanniemi_improved_2019} and Density/Coverage \cite{naeem_reliable_2020}, perform the evaluation in comparison to a reference dataset. Even the seemingly reference-free Inception Score (IS) \cite{salimans_improved_2016} can be viewed as implicitly reference-based, since it quantifies the variety and fidelity of data based on the labels and confidence scores assigned by an ImageNet pre-trained neural net, where ImageNet implicitly plays the role of the reference dataset. The reference-based nature of these evaluation scores is desired in many instances including standard image-based generative models, where either a sufficiently large test set or a comprehensive reference dataset such as ImageNet is available for the reference-based evaluation.    

On the other hand, a reference-based assessment of generative models may not always be feasible, because the selection of a reference distribution may be challenging in a general learning scenario. For example, in prompt-based generative models where the data are created in response to a user's input text prompts, the generated data could follow an a priori unknown distribution depending on the specific distribution of the user's input prompts. Figure~\ref{fig:intro figure} shows one such example where we compare reference-based diversity scores of regular and colored elephant image samples generated by Stable Diffusion XL \cite{podell2024sdxl}. While the diversity of the colored images looks significantly higher to the human eye, the evaluated reference-based FID, Recall, and Coverage metrics do not suggest a higher diversity.  
As this example suggests, a proper reference-based evaluation of every user's generated data would require a distinct reference dataset, which may not be available to the user during the assessment time. Moreover, finding a comprehensive text or video dataset to choose as the reference set would be more difficult compared to image data, because the higher length of text and video samples could significantly contribute to their variety, requiring an inefficiently large reference set to cover all text or video sample types.

\begin{figure}
    \centering
    \vspace{-3mm}
    \includegraphics[width=\textwidth]{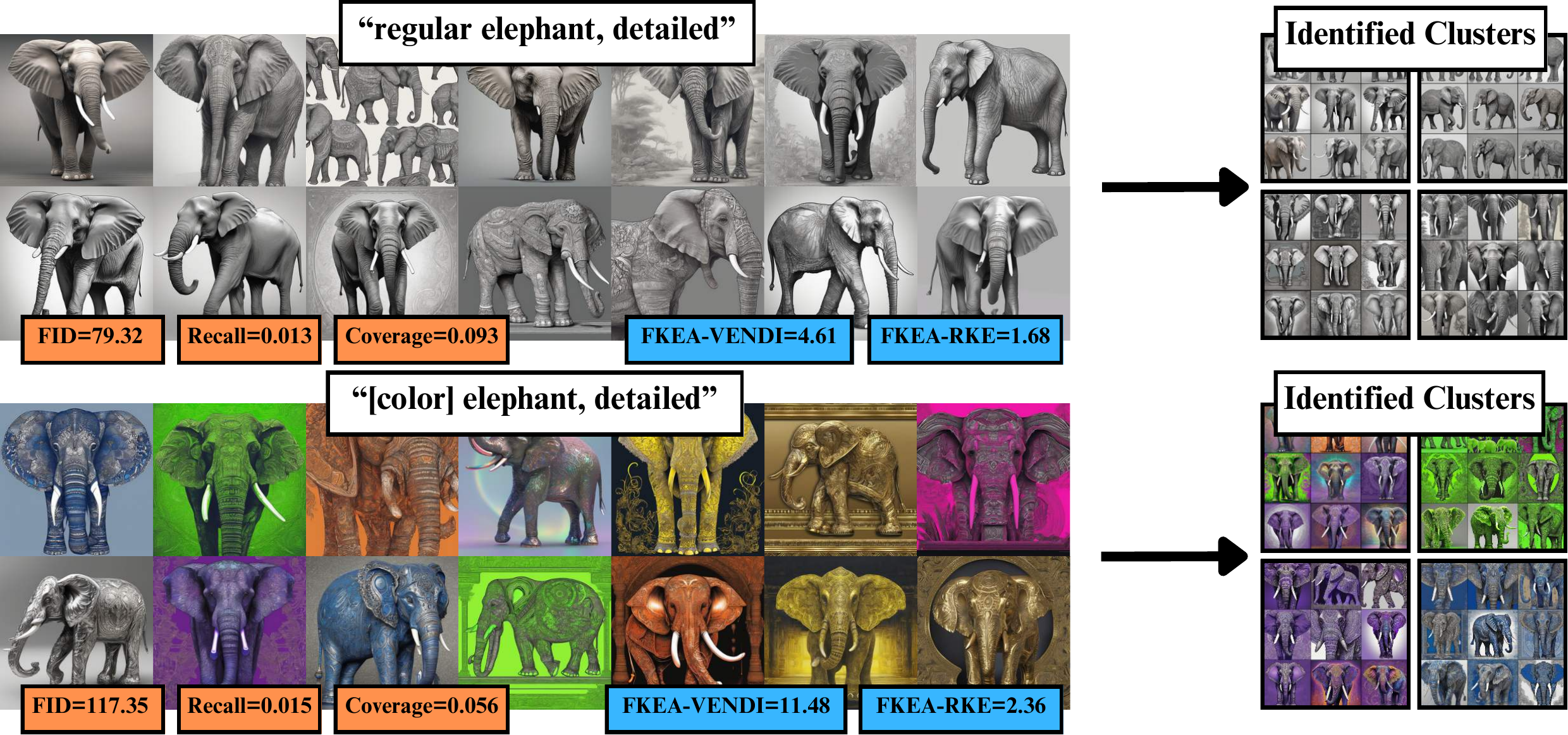}
    \caption{Reference-based vs. reference-free scores on two datasets of Stable Diffusion XL generated elephant images. FID, Recall, and Coverage scores (colored orange) are reference-based, whereas VENDI and RKE scores (colored blue) are reference-free. Inception.V3 is used as the backbone embedding. Reference-based metrics use 'Indian elephant' samples in ImageNet as reference data.}
    \vspace{-5mm}
    \label{fig:intro figure}
\end{figure}

The discussed challenging scenarios of conducting a reference-based evaluation highlight the need for reference-free assessment methods that remain functional in the absence of a reference dataset. Recently, entropy-based diversity evaluation scores, the  $\mathrm{VENDI}$ metric family \cite{friedman_vendi_2023,pasarkar2023cousins} and $\mathrm{RKE}$ score \cite{jalali_information-theoretic_2023}, have been proposed to address the need for reference-free assessment metrics. These scores calculate the entropy of the eigenvalues of a kernel similarity matrix for the generated data.
Based on the theoretical results in \cite{jalali_information-theoretic_2023}, the evaluation process of these scores can be interpreted as an unsupervised identification of the generative model's produced sample clusters, followed by the entropy calculation for the frequencies of the detected clusters. In Figure~\ref{fig:intro figure}, we observe that the reference-free $\mathrm{VENDI}$ and $\mathrm{RKE}$ scores grow when the generated samples are colored, which is due to the increase in the quantity of identified clusters in the colored case.   


While the $\mathrm{VENDI}$ and $\mathrm{RKE}$ entropy scores provide reference-free assessments of generative models, estimating these scores from generated data
could incur significant computational costs. In this work,  
 we show that computing the precise $\mathrm{RKE}$ and $\mathrm{VENDI}$ scores would require at least $\Omega (n^2)$ and $\Omega (n^{2.373})$\footnote{This computation complexity is the minimum known achievable cost for multiplying $n\times n$ matrices which we prove to lower-bound the complexity of computing matrix-based entropy scores.} computations for a sample size $n$, respectively. While the randomized projection methods in \cite{dong2023optimal,friedman_vendi_2023} can reduce the computational costs to $O(n^2)$ for a general $\mathrm{VENDI}_\alpha$ score, the quadratic growth would be a barrier to the method's application to large $n$ values. 
 Although the computational expenses could be reduced by limiting the sample size $n$, an insufficient sample size would lead to significant error in estimating the entropy scores. As an example on the ImageNet dataset, Figure~\ref{fig:imagenet nsamples clusters} in the Appendix shows the adverse effects of limiting the sample size on the quality of clusters used in the calculation of the $\mathrm{VENDI}$ scores.

To overcome the challenges of computing the scores, we leverage the random Fourier features (RFFs) framework \cite{rahimi_random_2007} and develop a scalable entropy-based evaluation method that can be efficiently applied to large sample sizes. Our proposed method, \emph{Fourier-based Kernel Entropy Approximation (FKEA)}, is designed to approximate the kernel covariance matrix using the RFFs drawn from the Fourier transform-inverse of a target shift-invariant kernel. We prove that using a Fourier feature size $r=\mathcal{O}\bigl(\frac{\log n}{\epsilon^2}\bigr)$, FKEA computes the eigenspace of the kernel matrix within an $\epsilon$-bounded error. 
Furthermore, we demonstrate the application of the eigenvectors of the FKEA's proxy kernel matrix for identifying the sample clusters used in the reference-free evaluation of entropic diversity.


Finally, we present numerical results of the entropy-based evaluation of standard generative models using the baseline eigendecomposition and our proposed FKEA methods. In our experiments, the baseline spectral decomposition algorithm could not efficiently scale to sample sizes above a few ten thousand. On the other hand, our stochastic implementation of the FKEA method could scalably apply  to large sample sizes. Utilizing the standard embeddings of image, text, and video data, we tested the FKEA assessment while computing the sample clusters and their frequencies in application to large-scale datasets and generative models. 
Here is a summary of our work's main contributions:
\begin{itemize}[leftmargin=*]
    \item Characterizing the computational complexity of the kernel entropy scores of generative models,
    \item Developing the Fourier-based FKEA method to approximate the kernel covariance eigenspace and entropy of generated data,
    \item Proving guarantees on FKEA's required size of random Fourier features indicating a complexity logarithmically growing with the dataset size,
    \item Providing numerical results on FKEA's reference-free assessment of large-scale image,text, video-based datasets and generative models.
\end{itemize}

\section{Related Work}
\label{related work}

\textbf{Evaluation of deep generative models.} The assessment of generative models has been widely studied in the literature. The existing scores either quantify a distance between the distributions of real and generated data, as in FID \cite{heusel_gans_2018} and KID \cite{binkowski2018demystifying} scores, or attempt to measure the quality and diversity of the trained generative models, including the Inception Score \cite{salimans_improved_2016}, quality/diversity metric pairs Precision/Recall \cite{sajjadi_assessing_2018,kynkaanniemi_improved_2019} and Density/Coverage \cite{naeem_reliable_2020}. The mentioned scores are reference-based, while in this work we focus on reference-free metrics. Also, we note that the evaluation of memorization and novelty has received great attention, and several scores including the authenticity score \cite{alaa_how_2022}, the feature likelihood divergence \cite{jiralerspong2024feature}, and the rarity score \cite{han2022rarity} have been proposed to quantify the generalizability and novelty of generated samples. Note that the evaluation of novelty and generalization is, by nature, reference-based. On the other hand, our study focuses on the diversity of data which can be evaluated in a reference-free way as discussed in \cite{friedman_vendi_2023,jalali_information-theoretic_2023}.

\textbf{Role of embedding in quantitative evaluation}. Following the discussion in \cite{stein_exposing_2023}, we utilize DinoV2 \cite{oquab_dinov2_2023} image embeddings in most of our image experiments, as \cite{stein_exposing_2023}'s results indicate DinoV2 can yield scores more aligned with the human notion of diversity. As noted in \cite{kynkaanniemi_role_2022}, it is possible to utilize other non-ImageNet feature spaces such as CLIP \cite{radford_learning_2021} and SwAV \cite{caron_unsupervised_2020} as opposed to InceptionV3 \cite{szegedy_rethinking_2016} to further improve metrics such as FID. In this work, we mainly focus on DinoV2 feature space, while we note that other feature spaces are also compatible with entropy-based diversity evaluation.

\textbf{Diversity assessment for text-based models.} 
To quantify the diversity of text data, the n-gram-based methods are commonly used in the literature. A well-known metric is the BLEU score \cite{papineni_bleu_2002}, which is based on the geometric average of n-gram precision scores times the Brevity Penalty. To adapt BLEU score to measure text diversity, \cite{zhu_texygen_2018} proposes the Self-BLEU score, calculating the average BLEU score of various generated samples. 
To further isolate and measure diversity, N-Gram Diversity scores \cite{padmakumar_does_2024,li_contrastive_2023,meister_locally_2023}  were proposed and defined by a ratio between the number of unique n-grams and overall number of n-grams in the text. Other prominent metrics include Homogenization (ROUGE-L) \cite{lin_automatic_2004}, FBD \cite{montahaei_jointly_2019} and Compression Ratios \cite{shaib_standardizing_2024}.

\textbf{Kernel PCA, Spectral Cluttering, and Random Fourier Features}. Kernel PCA \cite{scholkopf_nonlinear_1998} is a well-studied method of dimensionality reduction that utilizes the eigendecomposition of the kernel matrix, similar to the kernel-based diversity evaluation methods in \cite{friedman_vendi_2023,jalali_information-theoretic_2023}. The related papers \cite{bengio2003learning,bengio2003spectral} study the connections between kernel PCA and spectral clustering. Also, the analysis of random Fourier features \cite{rahimi_random_2007} for performing scalable kernel PCA has been studied in \cite{chitta2012efficient,ghashami2016streaming,ullah2018streaming,sriperumbudur2022approximate,gedon2023invertible}. We note that while the mentioned works characterize the complexity of estimating the eigenvectors, our analysis focuses on the complexity of computing the kernel matrix's eigenvalues via Fourier features, as we primarily seek to quantify the diversity of generated data using the kernel matrix's eigenvalues.

\section{Preliminaries}
Consider a generative model $\mathcal{G}$ generating random samples $\mathbf{x}_1,\ldots , \mathbf{x}_n \in\mathbb{R}^d$ following the model's probability distribution $P_{\mathcal{G}}$. In our analysis, we assume the $n$ generated samples are independently drawn from $P_{\mathcal{G}}$. Note that in VAEs \cite{kingma2013auto} and GANs \cite{goodfellow_generative_2014}, the generative model $\mathcal{G}$ is a deterministic function $G:\mathbb{R}^r\rightarrow \mathbb{R}^d$ mapping an $r$-dimensional latent random vector $\mathbf{Z}\sim P_Z$ from a known distribution $P_Z$ to $G(\mathbf{Z})$ distributed according to $P_{\mathcal{G}}$. On the other hand, in diffusion models, $\mathcal{G}$ represents an iterative random process that generates a sample from $P_{\mathcal{G}}$. The goal of a sample-based diversity evaluation of generative model $\mathcal{G}$ is to quantify the variety of its generated data $\mathbf{x}_1,\ldots ,\mathbf{x}_n$. 


\subsection{Kernel Function, Kernel Covariance Matrix, and Matrix-based Rényi Entropy}
Following standard definitions, $k:\mathbb{R}^d\times \mathbb{R}^d \rightarrow \mathbb{R}$ is called a kernel function if for every integer $n\in\mathbb{N}$ and set of inputs $\mathbf{x}_1 ,\ldots ,\mathbf{x}_n \in \mathbb{R}^d$, the kernel similarity matrix $K = \bigl[ k(\mathbf{x}_i,\mathbf{x}_j) \bigr]_{n\times n}$ is positive semi-definite. We call a kernel function $k$ normalized if for every input $\mathbf{x}$ we have $k(\mathbf{x},\mathbf{x})=1$.  A well-known example of a normalized kernel function is the Gaussian kernel $k_{\text{\rm Gaussian}(\sigma^2)}$ with bandwidth parameter $\sigma^2$ defined as
\begin{equation*}
   k_{\text{\rm Gaussian}(\sigma^2)} \bigl(\mathbf{x} , \mathbf{x}'\bigr) \, :=\, \exp\Bigl(-\frac{\bigl\Vert \mathbf{x} - \mathbf{x}'\bigr\Vert^2_2}{2\sigma^2}\Bigr) 
\end{equation*}
For every kernel function $k$, there exists a feature map $\phi:\mathbb{R}^d\rightarrow \mathbb{R}^m$ such that $k(\mathbf{x},\mathbf{x}')= \langle \phi(\mathbf{x}),\phi(\mathbf{x}')\rangle$ is the inner product of the $m$-dimensional feature maps $\phi(\mathbf{x})$ and $\phi(\mathbf{x}')$.  Given a kernel  $k$ with feature map   $\phi$, we define the kernel covariance matrix ${C}_X \in \mathbb{R}^{m\times m}$ of a distribution $P_X$ as
\begin{equation*}
  {C}_X \, :=\, \mathbb{E}_{\mathbf{X}\sim P_X}\Bigl[ \phi(\mathbf{X})\phi(\mathbf{X})^\top\Bigr] \, =\, \int p_X(\mathbf{x})\phi(\mathbf{x})\phi(\mathbf{x})^\top \mathrm{d}\mathbf{x}
\end{equation*}
The above matrix $C_X$ is positive semi-definite with non-negative values. Furthermore, assuming a normalized kernel $k$, it can be seen that the eigenvalues of $C_X$ will add up to $1$ (i.e., it has unit trace $\mathrm{Tr}(C_X)=1$), providing a probability model. Therefore, one can consider the entropy of $C_X$'s eigenvalues as a quantification of the diversity of distribution $P_X$ based on the kernel similarity score $k$. Here, we review the general family of Rényi entropy used to define $\mathrm{VENDI}$ and $\mathrm{RKE}$ scores.   
\begin{definition}
For a positive semi-definite matrix $C_X \in\mathbb{R}^{m\times m}$ with eigenvalues $\lambda_1,\ldots ,\lambda_m$, the order-$\alpha$ Rényi entropy $H_{\alpha}(C_X)$ for $\alpha>0$ is defined as \vspace{-1mm}
\begin{equation*}
    H_{\alpha}(C_X) \, :=\, \frac{1}{1-\alpha}\log\Bigl(\,\sum_{i=1}^m \lambda_i^\alpha \,\Bigr)
\end{equation*}
\end{definition}
To estimate the entropy scores from finite empirical samples $\mathbf{x}_1,\ldots ,\mathbf{x}_n$, we consider the empirical kernel covariance matrix $\widehat{C}_X$ defined as
$
   \widehat{C}_X \, :=\, \frac{1}{n}\sum_{i=1}^n \phi\bigl(\mathbf{x}_i\bigr)\phi\bigl(\mathbf{x}_i\bigr)^\top$. 
This matrix provides an empirical estimation of the population kernel covariance matrix ${C}_X$. 

It can be seen that the $m\times m$ empirical matrix $\widehat{C}_X$
and normalized kernel matrix $\frac{1}{n}K = \frac{1}{n}\bigl[k(\mathbf{x}_i,\mathbf{x_j})\bigr]_{n\times n}$ share the same non-zero eigenvalues. Therefore, to compute the matrix-based entropy of the empirical covariance matrix $\widehat{C}_X$, one can equivalently compute the entropy of the eigenvalues of the kernel similarity matrix $K$. This approach results in the definition of the $\mathrm{VENDI}$ and $\mathrm{RKE}$ diversity scores: \cite{friedman_vendi_2023} defines the family of $\mathrm{VENDI}$ scores as 
$$\mathrm{VENDI}_{\alpha}(\mathbf{x}_1,\ldots , \mathbf{x}_n) \, := \, \exp\Bigl( H_{\alpha}\bigl(\frac{1}{n}K\bigr) \Bigr) \, =\, \Bigl(\sum_{i=1}^n \lambda^\alpha_i\Bigr)^{\frac{1}{1-\alpha}}, $$ 
where $\lambda_1,\ldots ,\lambda_n$ denote the eigenvalues of the kernel matrix $\frac{1}{n}K$.
Also, \cite{jalali_information-theoretic_2023} proposes the $\mathrm{RKE}$ score, which is the special order-2 Renyi entropy, $\mathrm{RKE}(\mathbf{x}_1,\ldots , \mathbf{x}_n)=\exp(H_{2}(\frac{1}{n}K))$. To compute $\mathrm{RKE}$ without computing the eigenvalues, \cite{jalali_information-theoretic_2023} points out the RKE score reduces to the Frobenius norm $\Vert\cdot\Vert_F$ of the kernel matrix as follows:
\begin{equation*}
    \mathrm{RKE}(\mathbf{x}_1,\ldots , \mathbf{x}_n) \, =\, \Bigl\Vert \frac{1}{n}K\Bigr\Vert^{-2}_F \, =\, \Bigl(\,\frac{1}{n^2}\sum_{i=1}^n\sum_{j=1}^n k\bigl(\mathbf{x}_i,\mathbf{x}_j\bigr)^2\,\Bigr)^{-1}  
\end{equation*}

\subsection{Shift-Invariant Kernels and Random Fourier Features}

A kernel function $k$ is called shift-invariant, if there exists a function $\kappa:\mathbb{R}^d\rightarrow \mathbb{R}$ such that $k(\mathbf{x},\mathbf{x}') = \kappa(\mathbf{x}-\mathbf{x}')$ for every $\mathbf{x},\mathbf{x}'\in\mathbb{R}^d$. Bochner's theorem proves that a function $\kappa:\mathbb{R}^d\rightarrow \mathbb{R}$ will lead to a shift-invariant kernel similarity score $\kappa(\mathbf{x}-\mathbf{x}')$ between $\mathbf{x},\mathbf{x}'$ \emph{if and only if} its Fourier transform $\widehat{\kappa}:\mathbb{R}^d\rightarrow\mathbb{R}$ is non-negative everywhere (i.e, $\widehat{\kappa}(\boldsymbol{\omega})\ge 0$ for every $\boldsymbol{\omega}$). Note that the Fourier transform $\widehat{\kappa}$ is defined as
\begin{equation*}
   \widehat{\kappa}(\boldsymbol{\omega}) \, :=\, \frac{1}{(2\pi)^d}\int {\kappa}(\mathbf{x}) \exp\bigl(-i\boldsymbol{\omega}^\top \mathbf{x}\bigr)\mathrm{d}\mathbf{x} 
\end{equation*}
Specifically, Bochner's theorem shows the Fourier transform $\widehat{\kappa}$ of a normalized shift-invariant kernel $k(\mathbf{x},\mathbf{x}')=\kappa(\mathbf{x}-\mathbf{x}')$, where $\kappa(0)=1$, will be a probability density function (PDF). The framework of random Fourier features (RFFs) \cite{rahimi_random_2007} utilizes independent samples drawn from PDF $\widehat{\kappa}$ to approximate the kernel function. Here, given independent samples $\boldsymbol{\omega}_1,\ldots , \boldsymbol{\omega}_r \sim \widehat{\kappa}$, we form the following proxy feature map $\widetilde{\phi}_r : \mathbb{R}^d\rightarrow \mathbb{R}^{2r}$
\begin{equation}\label{Eq: Fourier-proxyvector}
    \widetilde{\phi}_r(\mathbf{x}) \, =\, \frac{1}{\sqrt{r}}\Bigl[ \cos(\boldsymbol{\omega}_1^\top \mathbf{x}),\sin(\boldsymbol{\omega}_1^\top \mathbf{x}),\ldots ,\cos(\boldsymbol{\omega}_r^\top \mathbf{x}),\sin(\boldsymbol{\omega}_r^\top \mathbf{x})\Bigr].
\end{equation}
As demonstrated in \cite{rahimi_random_2007,sutherland_error_2015}, the $2r$-dimensional proxy map $\widetilde{\phi}_r$ can approximate the kernel function as $k(\mathbf{x},\mathbf{x}') = \mathbb{E}_{\boldsymbol{\omega}\sim \widehat{\kappa}}\Bigl[\cos(\boldsymbol{\omega}^\top \mathbf{x})\cos(\boldsymbol{\omega}^\top\mathbf{x}') + \sin(\boldsymbol{\omega}^\top \mathbf{x})\sin(\boldsymbol{\omega}^\top\mathbf{x}')\Bigr]\approx \widetilde{\phi}_r(\mathbf{x})^\top \widetilde{\phi}_r(\mathbf{x}')$.

\section{Computational Complexity of VENDI \& RKE Scores}\label{sec: theory}
As discussed, computing $\mathrm{RKE}$ and general $\mathrm{VENDI}_{\alpha}$ scores requires computing the order-$\alpha$ entropy of kernel matrix $\frac{1}{n}K$. Using the standard definition of $\alpha$-norm $\Vert \mathbf{v} \Vert_\alpha = \bigl(\sum_{i=1}^n \vert v_i\vert^\alpha\bigr)^{1/\alpha}$, we observe that the computation of $\mathrm{VENDI}_{\alpha}$ score is equivalent to computing the $\alpha$-norm $\Vert \boldsymbol{\lambda}\Vert_\alpha$ of the $n$-dimensional eigenvalue vector $\boldsymbol{\lambda}=[\lambda_1,\ldots ,\lambda_n]$ where $\lambda_1\ge \cdots \ge \lambda_n$ are the sorted eigenvalues of the normalized kernel matrix $\frac{1}{n}K$. 

In the following theorem, we prove that except order $\alpha=2$, which is the $\mathrm{RKE}$ score, computing any other $\mathrm{VENDI}_\alpha$ score is at least as expensive as computing the product of two $n\times n$ matrices. Therefore, the theorem suggests that the computational complexity of every member of the VENDI family is lower-bounded by $\Omega(n^{2.372})$ which is the least known cost of multiplying $n\times n$ matrices. 

In the theorem, we 
suppose $\mathcal{B}$ is any fixed set of ``basis'' functions.  A circuit $\mathcal{C}$ is a directed acyclic graph each of whose internal nodes is labeled by a \emph{gate} coming from a set $\mathcal{B}$.  A subset of gates are designated as outputs of $\mathcal{C}$.  A circuit with $n$ source nodes and $m$ outputs computes a function from $\mathbb{R}^n$ to $\mathbb{R}^m$ by evaluating the gate at each internal gate in topological order.  The size of a circuit is the number of gates. Also, $\nabla \mathcal{B}$ is the basis consisting of the gradients of all functions in $\mathcal{B}$. We will provide the proof of the theorems in the Appendix.
\begin{theorem}
\label{thm:comp}
If $\mathrm{VENDI}_\alpha (K)$ for $\alpha \neq 2$  is computable by a circuit $\mathcal{C}$ of size $s(n)$ over basis $\mathcal{B}$, then $n \times n$ matrices can be multiplied by a circuit $\mathcal{C}$ of size $O(s(n))$ over basis $\mathcal{B} \cup \nabla \mathcal{B} \cup \{+, \times\}$.
\end{theorem}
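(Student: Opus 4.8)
The plan is to reduce $n\times n$ matrix multiplication to evaluating $\mathrm{VENDI}_\alpha$, via the Baur--Strassen theorem: if a scalar-valued function is computed by a circuit of size $s$ over a basis $\mathcal{B}$, then its full gradient is computed by a circuit of size $O(s)$ over $\nabla\mathcal{B}\cup\{+,\times\}$. The leverage comes from the fact that, regarded as a function of its input matrix, $\mathrm{VENDI}_\alpha$ has a gradient that is a \emph{nonlinear} matrix power of the input: for $K$ in a neighborhood of a scaled identity, $\mathrm{VENDI}_\alpha(K)=\bigl(\mathrm{Tr}\bigl((\tfrac1{3n}K)^\alpha\bigr)\bigr)^{1/(1-\alpha)}$ on a $3n\times 3n$ input, so $\nabla_K\mathrm{VENDI}_\alpha(K)=c(K)\,(K^{\alpha-1})^{\top}$ for an explicit scalar $c(K)$, and matrix powers above the first encode matrix products.

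First I would fix arbitrary $n\times n$ matrices $A,B$ and feed into the hypothetical circuit the block matrix $K=I_{3n}+N$, with
\begin{equation*}
N=\begin{pmatrix}0&A&0\\0&0&B\\0&0&0\end{pmatrix},\qquad N^2=\begin{pmatrix}0&0&AB\\0&0&0\\0&0&0\end{pmatrix},\qquad N^3=0.
\end{equation*}
Because $N$ is nilpotent of index $3$, the spectrum of $K$ equals $\{1\}$ and the binomial series for any matrix power of $K$ terminates: $K^{\beta}=I_{3n}+\binom{\beta}{1}N+\binom{\beta}{2}N^2$ for every $\beta$. Hence $\mathrm{Tr}(K^\alpha)=\mathrm{Tr}(I_{3n})=3n$ (the matrices $N,N^2$ are strictly block-upper-triangular), so $\mathrm{VENDI}_\alpha(K)=3n$ and the prefactor $c(K)$ evaluates to $\alpha/(1-\alpha)$; using $\nabla_K\mathrm{Tr}(K^\alpha)=\alpha\,(K^{\alpha-1})^{\top}$,
\begin{equation*}
\nabla_K\mathrm{VENDI}_\alpha(K)\Big|_{K=I_{3n}+N}=\frac{\alpha}{1-\alpha}\Bigl(I_{3n}+(\alpha-1)N^{\top}+\binom{\alpha-1}{2}(N^2)^{\top}\Bigr),
\end{equation*}
whose $(3,1)$ block equals $\frac{\alpha}{1-\alpha}\binom{\alpha-1}{2}(AB)^{\top}=-\frac{\alpha(\alpha-2)}{2}(AB)^{\top}$. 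This coefficient is a fixed nonzero constant exactly because $\alpha>0$ and $\alpha\neq 2$: the generalized binomial coefficient $\binom{\alpha-1}{2}=\tfrac{(\alpha-1)(\alpha-2)}{2}$ vanishes only at $\alpha\in\{1,2\}$, and the remaining case $\alpha=1$ (von Neumann entropy) is handled by the identical argument with the terminating expansion $\log(I_{3n}+N)=N-\tfrac12 N^2$, whose $N^2$-coefficient $-\tfrac12$ is nonzero. So, up to a known scalar and a transpose (both free), the $(3,1)$ block of this gradient is $AB$.

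Assembling the circuit: (i) take the assumed circuit for $\mathrm{VENDI}_\alpha$ on $3n\times 3n$ inputs and hard-wire the substitution $K=I_{3n}+N$, routing the entries of $A,B$ and the constants $0,1$ to its inputs ($O(n)$ extra gates); (ii) apply Baur--Strassen to obtain a circuit of size $O(s(3n))$ over $\nabla\mathcal{B}\cup\{+,\times\}$ computing the gradient with respect to the entries of $K$; (iii) select the $n^2$ outputs forming the $(3,1)$ block, transpose them and multiply by the fixed scalar $-2/(\alpha(\alpha-2))$ ($O(n^2)$ multiplications) to read off $AB$. Since evaluating $\mathrm{VENDI}_\alpha$ must at least read the $\Omega(n^2)$ input entries, $s(3n)=\Omega(n^2)$, so the total size is $O(s(3n))$, which is $O(s(n))$ for the size functions of interest.

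The hard part will be justifying that the hypothetical circuit --- promised only to output $\mathrm{VENDI}_\alpha$ on symmetric positive semidefinite matrices (the kernel matrices) --- evaluates to the intended quantity at the non-symmetric test point $K=I_{3n}+N$. The resolution is that a circuit built from analytic gates computes a real-analytic function off its pole set, hence coincides with the analytic extension $K\mapsto(\mathrm{Tr}((\tfrac1{3n}K)^\alpha))^{1/(1-\alpha)}$ throughout the connected domain of matrices whose spectrum lies near the positive reals --- a domain containing both a neighborhood of $I_{3n}$ inside the symmetric PD cone and the point $I_{3n}+N$ (whose spectrum is $\{1\}$); alternatively one first reduces to $A,B$ with sufficiently small entries and works with a symmetric perturbation $I_{3n}+\epsilon S$, extracting the relevant leading term. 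A secondary, routine matter is stating Baur--Strassen over a general gate set $\mathcal{B}$ and the attendant basis bookkeeping (the forward evaluation reuses the gates of $\mathcal{B}$, the reverse sweep introduces the gradients of gates together with $+$ and $\times$).
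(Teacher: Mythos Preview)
Your nilpotent-perturbation route is elegant: feeding $K=I_{3n}+N$ kills the tail of the power series and places $AB$ directly in one block of the gradient, so a single Baur--Strassen pass would suffice. The coefficient computation is correct and the factor $-\tfrac{\alpha(\alpha-2)}{2}$ correctly singles out $\alpha=2$.

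The difficulty you flag at the end is a genuine gap, and the analytic-continuation patch does not close it. The circuit is promised to compute $\mathrm{VENDI}_\alpha$ only on symmetric PSD inputs, and the symmetric matrices form a proper linear subspace of $\mathbb{R}^{(3n)\times(3n)}$; two real-analytic functions agreeing on such a subspace need not agree off it. Concretely, a circuit that first replaces $K$ by $\tfrac12(K+K^\top)$ and then computes $\mathrm{VENDI}_\alpha$ is correct on every symmetric input yet has the wrong gradient at your test point $I_{3n}+N$. And there is no symmetric variant of the trick: a real symmetric nilpotent matrix is necessarily zero, so the terminating-series mechanism is unavailable inside the PSD cone.

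Your fallback with a symmetric perturbation $I_{3n}+\epsilon S$ is the right repair, but to isolate the $\epsilon^2$ term carrying the product you must differentiate once more in $\epsilon$, and that is exactly the paper's argument. The paper applies Baur--Strassen to pass from $\mathrm{Tr}\,\rho^\alpha$ to $\alpha\rho^{\alpha-1}$, substitutes $\rho=I+tX$ with symmetric $X$ (so $\rho$ remains PSD for small $t$), and then invokes a \emph{forward}-propagation lemma twice in the scalar $t$ to obtain $\alpha(\alpha-1)(\alpha-2)X^2$ at $t=0$; matrix multiplication is finally reduced to squaring a symmetric $3n\times 3n$ block matrix built from $A,A^\top,B,B^\top$. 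The ingredient missing from your outline is precisely this forward-differentiation step, which lets the entire construction stay inside the symmetric PD cone where the circuit is guaranteed correct.
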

\begin{remark}
The smallest known circuits for multiplying $n \times n$ matrices have size $\Theta(n^\omega)$, where $\omega \approx 2.372$.  Despite tremendous research efforts only minor improvements have been obtained in recent years.  There is evidence that $\omega$ is bounded away from 2 for certain classes of circuits~\cite{alman, christandl-etal}.  In contrast, $\mathcal{S}_2$ is computable in quadratic time $\Theta(n^2)$ in the basis $B = \{\times, +, \log\}$.
\end{remark}

The above discussion indicates that except the $\mathrm{RKE}(\mathbf{x}_1,\ldots , \mathbf{x}_n)$, i.e. order-2 Renyi entropy, whose computational complexity is quadratically growing with sample size $\Theta(n^2)$, the other members of the VENDI family $\mathrm{VENDI}_{\alpha}$ would have a super-quadratic complexity on the order of $\mathcal{O}(n^{2.372})$. In practice, the computation of $\mathrm{VENDI}_{\alpha}$ scores is performed by the eigendecomposition of the $n\times n$ kernel matrix that requires $O(n^3)$ computations for precise computation and $O(n^2 M)$ computations using a randomized projection onto an $M$-dimensional space \cite{dong2023optimal,friedman_vendi_2023}.

\section{A Scalable Fourier-based Method for Computing Kernel Entropy Scores}
As we showed earlier, the complexity of computing $\mathrm{RKE}$ and $\mathrm{VENDI}$ scores are at least quadratically growing with the sample size $n$. The super-linear growth of the scores' complexity with sample size $n$ can hinder their application to large-scale datasets and generative models with potentially hundreds of sample types. In such cases, a proper entropy estimation should be performed over potentially hundreds of thousands of data, where the quadratic complexity of the scores would be a significant barrier toward their accurate estimation.

Here, we consider a shift-invariant kernel matrix $k(\mathbf{x},\mathbf{x}')=\kappa(\mathbf{x}-\mathbf{x}')$ where $\kappa(\mathbf{0})=1$ and propose applying the random Fourier features (RFF) framework \cite{rahimi_random_2007} to perform an efficient approximation of the RKE and VENDI scores. To do this, we utilize the Fourier transform $\widehat{\kappa}$ that, according to Bochner's theorem, is a valid PDF, and we independently generate $\boldsymbol{\omega}_1,\ldots ,\boldsymbol{\omega}_r \stackrel{\text{\rm iid}}{\sim} \widehat{\kappa}$. Note that in the case of the Gaussian kernel $k_{\text{\rm Gaussian}(\sigma^2)}$, the corresponding PDF will be an isotropic Gaussian $\mathcal{N}(\mathbf{0},\frac{1}{\sigma^2}I_d)$ with zero mean and  covariance matrix $\frac{1}{\sigma^2}I_d$.
Then, we consider the RFF proxy feature map $\widetilde{\phi}_r: \mathbb{R}^d\rightarrow\mathbb{R}^{2r}$ as defined in \eqref{Eq: Fourier-proxyvector} and define the proxy kernel covariance matrix $\widetilde{C}_{X,r} \in\mathbb{R}^{2r\times 2r}$:
\begin{equation}\label{Eq: Fourier-Empirical-covariance}
    \widetilde{C}_{X,r} \, =\, \frac{1}{n}\sum_{i=1}^n \,\widetilde{\phi}_r\bigl(\mathbf{x}_i\bigr) \,\widetilde{\phi}_r\bigl(\mathbf{x}_i\bigr)^\top
\end{equation}
Note that the $2r\times 2r$ matrix $\widehat{C}_{X,r}$ has the same non-zero eigenvalues as the $n\times n$ RFF proxy kernel matrix $\frac{1}{n}\widetilde{K}_r$, and therefore can be utilized to approximate the eigenvalues of the original $n\times n$ kernel matrix $\frac{1}{n} K$. Therefore, we propose the \emph{Fourier-based Kernel Entropy Approximation (FKEA)} method to approximate the $\mathrm{RKE}$ and $\mathrm{VENDI}_\alpha$ scores as follows:
\begin{align}
    \mathrm{FKEA}\text{-}\mathrm{RKE}\bigl(\mathbf{x}_1,\ldots,\mathbf{x}_n\bigr) \, &= \, \exp\bigl(H_2(\widetilde{C}_{X,r})\bigr) \,=\, \bigl\Vert \widetilde{C}_{X,r} \bigr\Vert_F^{-2}, \label{Eq: FKEA-RKE} \\
    \mathrm{FKEA}\text{-}\mathrm{VENDI}_{\alpha}\bigl(\mathbf{x}_1,\ldots,\mathbf{x}_n\bigr) \, &= \, \exp\bigl(H_\alpha(\widehat{C}_{X,r})\bigr) \,=\, \Bigl(\, \sum_{i=1}^{2r} \,{\widetilde{\lambda}_{r,i}}^\alpha\,\Bigr)^{\frac{1}{1-\alpha}} \label{Eq: FKEA-VENDI}
\end{align}
Note that in the above, $\widetilde{\lambda}^\alpha_{r,i}$ denotes the $i$th eigenvalue of the $2r\times 2r$ matrix $\widehat{C}_{X,r}$. We remark that the computation of both $\mathrm{FKEA}\text{-}\mathrm{RKE}$ and $\mathrm{FKEA}\text{-}\mathrm{VENDI}_{\alpha}$ can be done by a stochastic algorithm which computes the proxy covariance matrix \eqref{Eq: Fourier-Empirical-covariance} by summing the sample-based $2r \times 2r$ matrix terms, and then computing the resulting matrix's Frobenius norm for $\mathrm{RKE}$ score or all the $2r$ matrix's eigenvalues for a general $\mathrm{VENDI}_\alpha$ with $\alpha\neq 2$. Algorithm~\ref{alg:fkea} presents the steps of the FKEA method where the computation needed for the proxy kernel covariance matrix is $O(n)$ and grows only linearly with sample size $n$. 

\begin{algorithm}[t]
    \caption{FKEA Algorithm for Computing $\mathrm{VENDI}$ and $\mathrm{RKE}$ reference-free scores }\label{alg:fkea}
    \begin{algorithmic}[1]
        \State \textbf{Input:} $n$ datapoints $\mathbf{x} = \{x_1, \ldots, x_n\}$, kernel bandwidth $\sigma^2$, RFF dimension $r$
        
        \State Draw i.i.d. samples $\boldsymbol{\omega}_1, \ldots, \boldsymbol{\omega}_r \sim \hat{\kappa}$ \Comment{For Gaussian Kernel $\hat{\kappa} \sim \mathcal{N}(0, \frac{1}{\sigma^2} I_d)$}

        \State \textbf{Initialize} the covariance matrix $\widetilde{C} \gets \mathbf{0}$

        \State \textbf{Compute the covariance matrix}:
        \For{$i = 1$ to $n$} 
            \State Compute the RFF feature for $x_i$:
                \[
                    \widetilde{\phi}_r(x_i) = \frac{1}{\sqrt{r}} 
                    \begin{bmatrix}
                        \cos(\boldsymbol{\omega}_1^\top x_i) , &
                        \sin(\boldsymbol{\omega}_1^\top x_i) , &
                        \cdots, &
                        \cos(\boldsymbol{\omega}_r^\top x_i) , &
                        \sin(\boldsymbol{\omega}_r^\top x_i)
                    \end{bmatrix}^\top
                \]
            \State Update $\widetilde{C}$:
                \[
                    \widetilde{C} \gets \widetilde{C} + \frac{1}{n} \, \widetilde{\phi}_r(x_i) \, \widetilde{\phi}_r(x_i)^\top
                \]
        \EndFor

        \State \textbf{Perform eigendecomposition} on the covariance matrix:
        \[
            \{\widetilde{\lambda}_1, \ldots, \widetilde{\lambda}_{2r}\} \gets \text{Eigendecomposition}(\widetilde{C})
        \]

        \State \textbf{Compute VENDI and RKE metrics} using the eigenvalues $\widetilde{\lambda}_1, \ldots, \widetilde{\lambda}_{2r}$
    \end{algorithmic}
\end{algorithm}

Therefore, to show the FKEA method's scalability, we need to bound the required RFF size $2r$ for an accurate approximation of the original $n\times n$ kernel matrix. The following theorem proves that the needed feature size will be $\mathcal{O}\bigl(\frac{\log n}{\epsilon^2}\bigr)$ for an $\epsilon$-accurate approximations of the matrix's eigenspace.

\begin{theorem}\label{Theorem: FKEA-Guarantee}
Consider a shift-invariant kernel $k(\mathbf{x},\mathbf{x}')=\kappa(\mathbf{x}-\mathbf{x}')$ where $\kappa(\mathbf{0})=1$. Suppose $\boldsymbol{\omega}_1,\ldots ,\boldsymbol{\omega}_r \sim \widehat{\kappa}$ are independently drawn from PDF $\widehat{\kappa}$. Let $\lambda_1 \ge \ldots \ge \lambda_n$ be the sorted eigenvalues of the normalized kernel matrix $\frac{1}{n}K= \frac{1}{n}\bigl[k(\mathbf{x}_i,\mathbf{x}_j)\bigr]_{n\times n}$. Also, consider the eigenvalues of $\widetilde{\lambda}_1 \ge \ldots \ge \widetilde{\lambda}_{2r}$ of random matrix $\widetilde{C}_{X,r}$ with their corresponding eigenvectors $\widetilde{\mathbf{v}}_1,\ldots ,\widetilde{\mathbf{v}}_{2r}$. Let  $\, \widetilde{\lambda}_j=0$ for every $j>2r$.  Then, for every $\delta>0$, the following holds with probability at least $1-\delta$:
\begin{equation*}
    \sqrt{\sum_{i=1}^{n} \bigl(\widetilde{\lambda}_i - \lambda_i\bigr)^2} \, \le\,  \sqrt{\frac{8\log(n/2\delta)}{r}}\qquad \text{\rm and}\qquad \sqrt{\sum_{i=1}^n \Bigl\Vert\frac{1}{n}K \widehat{\mathbf{v}}_i - \lambda_i \widehat{\mathbf{v}}_i\Bigr\Vert^2_2} \, \le\, \sqrt{\frac{32\log(n/2\delta)}{r}},
\end{equation*}
where $\widehat{\mathbf{v}}_{i} := \sum_{j=1}^{r}\sin\bigl(\widetilde{\mathbf{v}}_{2j}^\top \mathbf{x}_i\bigr) \widetilde{\mathbf{v}}_{2j}  + \cos\bigl(\widetilde{\mathbf{v}}_{2j-1}^\top \mathbf{x}_i\bigr) \widetilde{\mathbf{v}}_{2j-1}$ is the $i$th proxy eigenvector for $\frac{1}{n}K$. 
\end{theorem}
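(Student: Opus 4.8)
The plan is to deduce both inequalities from a single high-probability estimate on the Frobenius-norm perturbation $M := \tfrac1n K - \tfrac1n \widetilde{K}_r$, where $\widetilde{K}_r = \bigl[\widetilde{\phi}_r(\mathbf{x}_i)^\top \widetilde{\phi}_r(\mathbf{x}_j)\bigr]_{n\times n}$ is the random Fourier feature proxy kernel matrix (which, as noted in the text, shares its nonzero eigenvalues with $\widetilde{C}_{X,r}$). Once $\Vert M\Vert_F$ is controlled, the eigenvalue bound is immediate from the Hoffman--Wielandt inequality, and the eigenvector bound follows from an elementary ``approximate eigenvector'' estimate combined with the orthonormality of an eigenbasis.

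\textbf{Step 1: concentration of $\Vert M\Vert_F$.} Fix a pair $i\neq j$. By the definition of $\widetilde{\phi}_r$ and the RFF identity recalled in the Preliminaries, $(\widetilde{K}_r)_{ij} = \tfrac1r\sum_{\ell=1}^r \cos\bigl(\boldsymbol{\omega}_\ell^\top(\mathbf{x}_i-\mathbf{x}_j)\bigr)$ is an average of $r$ i.i.d.\ terms lying in $[-1,1]$ with common mean $k(\mathbf{x}_i,\mathbf{x}_j)$, so Hoeffding's inequality gives $\Pr\bigl[|(\widetilde{K}_r)_{ij}-k(\mathbf{x}_i,\mathbf{x}_j)|\ge t\bigr]\le 2\exp(-rt^2/2)$. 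The diagonal is exact, $(\widetilde{K}_r)_{ii}=\tfrac1r\sum_\ell\cos(0)=1=k(\mathbf{x}_i,\mathbf{x}_i)$, hence $K-\widetilde{K}_r$ vanishes on the diagonal. A union bound over the $\binom n2$ off-diagonal pairs with $t=\sqrt{8\log(n/2\delta)/r}$ shows that with probability at least $1-\delta$ every off-diagonal entry of $K-\widetilde{K}_r$ has magnitude at most $t$; with the $1/n^2$ normalization this yields
\[
   \Vert M\Vert_F^2 \;=\; \frac1{n^2}\sum_{i\neq j}\bigl((K-\widetilde{K}_r)_{ij}\bigr)^2 \;\le\; \frac{n(n-1)}{n^2}\,t^2 \;\le\; \frac{8\log(n/2\delta)}{r}.
\]
Call this event $\mathcal{E}$; the rest of the argument is carried out on $\mathcal{E}$.

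\textbf{Steps 2--3: from $\Vert M\Vert_F$ to eigenvalues and eigenvectors.} Because $\widetilde{C}_{X,r}$ and $\tfrac1n\widetilde{K}_r$ have the same nonzero eigenvalues, the padded list $(\widetilde{\lambda}_1,\ldots,\widetilde{\lambda}_{2r},0,\ldots,0)$ gives exactly the sorted eigenvalues of the symmetric $n\times n$ matrix $\tfrac1n\widetilde{K}_r$, so the Hoffman--Wielandt inequality applied to $\tfrac1n K$ and $\tfrac1n\widetilde{K}_r$ gives $\sum_{i=1}^n(\widetilde{\lambda}_i-\lambda_i)^2\le \Vert M\Vert_F^2\le \tfrac{8\log(n/2\delta)}{r}$, the first claim. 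For the second, let $\widehat{\mathbf{v}}_1,\ldots,\widehat{\mathbf{v}}_n$ be an orthonormal eigenbasis of $\tfrac1n\widetilde{K}_r$ with $\tfrac1n\widetilde{K}_r\widehat{\mathbf{v}}_i=\widetilde{\lambda}_i\widehat{\mathbf{v}}_i$ (for nonzero $\widetilde{\lambda}_i$ these are the proxy eigenvectors in the statement, i.e.\ the eigenvectors of $\widetilde{C}_{X,r}$ lifted through $\widetilde{\phi}_r$ and renormalized, the remaining ones completing an orthonormal basis of $\ker(\tfrac1n\widetilde{K}_r)$). Since $\tfrac1n K = M + \tfrac1n\widetilde{K}_r$, we have $\tfrac1n K\widehat{\mathbf{v}}_i-\lambda_i\widehat{\mathbf{v}}_i = M\widehat{\mathbf{v}}_i+(\widetilde{\lambda}_i-\lambda_i)\widehat{\mathbf{v}}_i$ for every $i$, and Minkowski's inequality over the index $i$ gives
\[
   \sqrt{\sum_{i=1}^n\bigl\Vert \tfrac1n K\widehat{\mathbf{v}}_i-\lambda_i\widehat{\mathbf{v}}_i\bigr\Vert_2^2} \;\le\; \sqrt{\sum_{i=1}^n\Vert M\widehat{\mathbf{v}}_i\Vert_2^2}\;+\;\sqrt{\sum_{i=1}^n(\widetilde{\lambda}_i-\lambda_i)^2}.
\]
By orthonormality $\sum_i\Vert M\widehat{\mathbf{v}}_i\Vert_2^2=\mathrm{Tr}(M^\top M)=\Vert M\Vert_F^2$, and the second sum is $\le\Vert M\Vert_F^2$ by Step 2, so the left side is at most $2\Vert M\Vert_F\le 2\sqrt{8\log(n/2\delta)/r}=\sqrt{32\log(n/2\delta)/r}$. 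Both bounds hold on $\mathcal{E}$, hence with probability at least $1-\delta$.

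The only delicate point is the bookkeeping in Step 1: the estimate stays of size $\log n$, rather than inflating by powers of $n$, precisely because the diagonal is reproduced exactly (so the union bound runs over only $\binom n2$ off-diagonal pairs) and the $1/n^2$ normalization cancels the number of summands; given this, Steps 2--3 are routine applications of standard matrix perturbation facts, so the proof is one of careful accounting rather than of a new idea. I would in particular double-check the Hoeffding/union-bound constants so that the exact constants $8$ and $32$ in the statement come out, noting that a slightly looser union bound would still give a bound of the same form with a larger constant.
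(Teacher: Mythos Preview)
Your proposal is correct and follows essentially the same route as the paper: entrywise Hoeffding plus a union bound (with the exact diagonal noted) to control $\Vert M\Vert_F$, then Hoffman--Wielandt for the eigenvalue bound, then the decomposition $\tfrac1nK\widehat{\mathbf v}_i-\lambda_i\widehat{\mathbf v}_i = M\widehat{\mathbf v}_i+(\widetilde\lambda_i-\lambda_i)\widehat{\mathbf v}_i$ combined with orthonormality of the $\widehat{\mathbf v}_i$ for the eigenvector bound. The only cosmetic difference is that the paper squares via Young's inequality before summing, whereas you apply Minkowski directly to the stacked vectors; both yield the same factor $2\Vert M\Vert_F$ and hence the constant $32$.
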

\begin{corollary}
In the setting of Theorem~\ref{Theorem: FKEA-Guarantee}, the following approximation guarantees hold for $\mathrm{RKE}$ and $\mathrm{VENDI}_\alpha$ scores
\begin{itemize}[leftmargin=*]
    \item For every $\mathrm{VENDI}_\alpha$ with $\alpha\ge 2$, including $\mathrm{RKE}$ for $\alpha =2 $, the following dimension-independent bound holds with probability at least $1-\delta$:
    \begin{equation*}
        \Bigl\vert  \mathrm{FKEA}\text{-}\mathrm{VENDI}_\alpha\bigl(\mathbf{x}_1,\ldots ,\mathbf{x}_n\bigr)^{\frac{1-\alpha}{\alpha}} \, -\, \mathrm{VENDI}_\alpha\bigl(\mathbf{x}_1,\ldots ,\mathbf{x}_n\bigr)^{\frac{1-\alpha}{\alpha}}\Bigr\vert \, \le \,  \sqrt{\frac{8\log(n/2\delta)}{r}}
    \end{equation*}
    \item For every $\mathrm{VENDI}_\alpha$ with $1\le  \alpha< 2$, assuming a finite dimension for the kernel feature map $\mathrm{dim}(\phi)=m$, the following bound holds with probability at least $1-\delta$:
    \begin{equation*}
        \Bigl\vert  \mathrm{FKEA}\text{-}\mathrm{VENDI}_\alpha\bigl(\mathbf{x}_1,\ldots ,\mathbf{x}_n\bigr)^{\frac{1-\alpha}{\alpha}} \, -\, \mathrm{VENDI}_\alpha\bigl(\mathbf{x}_1,\ldots ,\mathbf{x}_n\bigr)^{\frac{1-\alpha}{\alpha}}\Bigr\vert \, \le\,  m^{\frac{1}{\alpha} - \frac{1}{2}}\sqrt{\frac{8\log(n/2\delta)}{r}}
    \end{equation*}
\end{itemize}
\end{corollary}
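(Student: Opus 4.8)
The plan is to reduce both claimed inequalities to the first bound of Theorem~\ref{Theorem: FKEA-Guarantee} via a reverse triangle inequality followed by a norm comparison. First I would note that raising the scores to the power $\frac{1-\alpha}{\alpha}$ cancels the outer exponentiation: writing $\boldsymbol{\lambda}=(\lambda_1,\ldots,\lambda_n)$ for the sorted eigenvalues of $\frac{1}{n}K$ and $\widetilde{\boldsymbol{\lambda}}=(\widetilde{\lambda}_1,\ldots,\widetilde{\lambda}_n)$ for the eigenvalues of $\widetilde{C}_{X,r}$ padded with zeros exactly as in the theorem, and using that all entries are nonnegative, one obtains $\mathrm{VENDI}_\alpha(\mathbf{x}_1,\ldots,\mathbf{x}_n)^{\frac{1-\alpha}{\alpha}}=\bigl(\sum_{i}\lambda_i^\alpha\bigr)^{1/\alpha}=\Vert\boldsymbol{\lambda}\Vert_\alpha$ and likewise $\mathrm{FKEA}\text{-}\mathrm{VENDI}_\alpha(\mathbf{x}_1,\ldots,\mathbf{x}_n)^{\frac{1-\alpha}{\alpha}}=\Vert\widetilde{\boldsymbol{\lambda}}\Vert_\alpha$. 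Hence it suffices to bound $\bigl\vert\Vert\widetilde{\boldsymbol{\lambda}}\Vert_\alpha-\Vert\boldsymbol{\lambda}\Vert_\alpha\bigr\vert$.

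Next I would apply the reverse triangle inequality, which is valid precisely because $\Vert\cdot\Vert_\alpha$ is a genuine norm on $\mathbb{R}^n$ for $\alpha\ge 1$ (this is exactly where the hypothesis $\alpha\ge 1$ enters), obtaining $\bigl\vert\Vert\widetilde{\boldsymbol{\lambda}}\Vert_\alpha-\Vert\boldsymbol{\lambda}\Vert_\alpha\bigr\vert\le\Vert\widetilde{\boldsymbol{\lambda}}-\boldsymbol{\lambda}\Vert_\alpha$, and then I would compare this $\ell_\alpha$-norm with the $\ell_2$-norm that Theorem~\ref{Theorem: FKEA-Guarantee} controls. For $\alpha\ge 2$ the $\ell_p$-norms of a fixed vector are nonincreasing in $p$, so $\Vert\widetilde{\boldsymbol{\lambda}}-\boldsymbol{\lambda}\Vert_\alpha\le\Vert\widetilde{\boldsymbol{\lambda}}-\boldsymbol{\lambda}\Vert_2$ with no dimensional loss, which gives the first bullet. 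For $1\le\alpha<2$, Hölder's inequality gives $\Vert\mathbf{v}\Vert_\alpha\le k^{\frac{1}{\alpha}-\frac{1}{2}}\Vert\mathbf{v}\Vert_2$ whenever $\mathbf{v}$ is supported on at most $k$ coordinates; here the support of $\widetilde{\boldsymbol{\lambda}}-\boldsymbol{\lambda}$ has size at most $m$, since $\frac{1}{n}K$ shares its nonzero eigenvalues with the $m\times m$ matrix $\widehat{C}_X$ (so $\lambda_i=0$ for $i>m$) and, in the regime $2r\le m$ in which FKEA is applied, $\widetilde{\lambda}_i=0$ for $i>m$ as well, which yields the factor $m^{\frac{1}{\alpha}-\frac{1}{2}}$ in the second bullet.

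Finally, on the event of probability at least $1-\delta$ guaranteed by Theorem~\ref{Theorem: FKEA-Guarantee}, its first inequality reads $\Vert\widetilde{\boldsymbol{\lambda}}-\boldsymbol{\lambda}\Vert_2=\sqrt{\sum_{i=1}^n(\widetilde{\lambda}_i-\lambda_i)^2}\le\sqrt{\frac{8\log(n/2\delta)}{r}}$, and substituting this into the two cases above closes the argument. I do not expect a genuine obstacle here: the proof is essentially ``reverse triangle inequality plus an $\ell_\alpha$--$\ell_2$ comparison,'' and the only points needing care are (i) the role of $\alpha\ge 1$ in making $\Vert\cdot\Vert_\alpha$ a norm --- for $0<\alpha<1$ the reverse triangle inequality fails, which is why no such bound is claimed there --- and (ii) the support-size bookkeeping in the $1\le\alpha<2$ case, which is the reason the feature-map dimension $m$ appears there but not for $\alpha\ge 2$.
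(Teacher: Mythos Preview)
Your proposal is correct and is exactly the natural derivation. The paper states the Corollary immediately after Theorem~\ref{Theorem: FKEA-Guarantee} without supplying a separate proof, so there is nothing substantively different to compare against; your argument --- rewrite the powered scores as $\ell_\alpha$-norms of the eigenvalue vectors, apply the reverse triangle inequality (using $\alpha\ge 1$), and then compare $\ell_\alpha$ with $\ell_2$ via monotonicity for $\alpha\ge 2$ and via H\"older on a support of size at most $m$ for $1\le\alpha<2$ --- is precisely the intended route. Your remark that the $m^{1/\alpha-1/2}$ factor relies on $2r\le m$ (otherwise the support bound would be $2r$ rather than $m$) is a fair caveat that the paper leaves implicit; it is consistent with the regime in which FKEA is meant to be used.
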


\begin{remark}\label{Remark: Eigenvector of FKEA}
    According to the theoretical results in \cite{zhang_interpretable_2024}, the top-$t$ eigenvectors of kernel covariance matrix $C_X$ will correspond to the mean of the modes of a mixture distribution with $t$ well-separable modes. Theorem \ref{Theorem: FKEA-Guarantee} shows for every $1\le i \le 2r$, FKEA provides the proxy score function $\widetilde{u}_i:\mathbb{R}^d\rightarrow \mathbb{R}$ that assigns a likelihood score for an input $\mathbf{x}$ to belong to the $i$th identified mode:
    \begin{equation}\label{Eq: FKEA score}
       \widetilde{u}_i(\mathbf{x}) \, =\, \sum_{j=1}^{r}\sin\bigl(\widetilde{\mathbf{v}}_{2j}^\top \mathbf{x}\bigr) \widetilde{\mathbf{v}}_{2j,i}  + \cos\bigl(\widetilde{\mathbf{v}}_{2j-1}^\top \mathbf{x}\bigr) \widetilde{\mathbf{v}}_{2j-1,i} 
    \end{equation}
Therefore, one can compute the above FKEA-based score for each of the $2r$ eigenvectors over a sample set, and use the samples with the highest scores according to every $\widetilde{u}_i$ to characterize the $i$ sample cluster captured by the FKEA method.  
\end{remark}

\section{Numerical Results}

\begin{figure}
  \centering
      \includegraphics[width=0.86\textwidth]{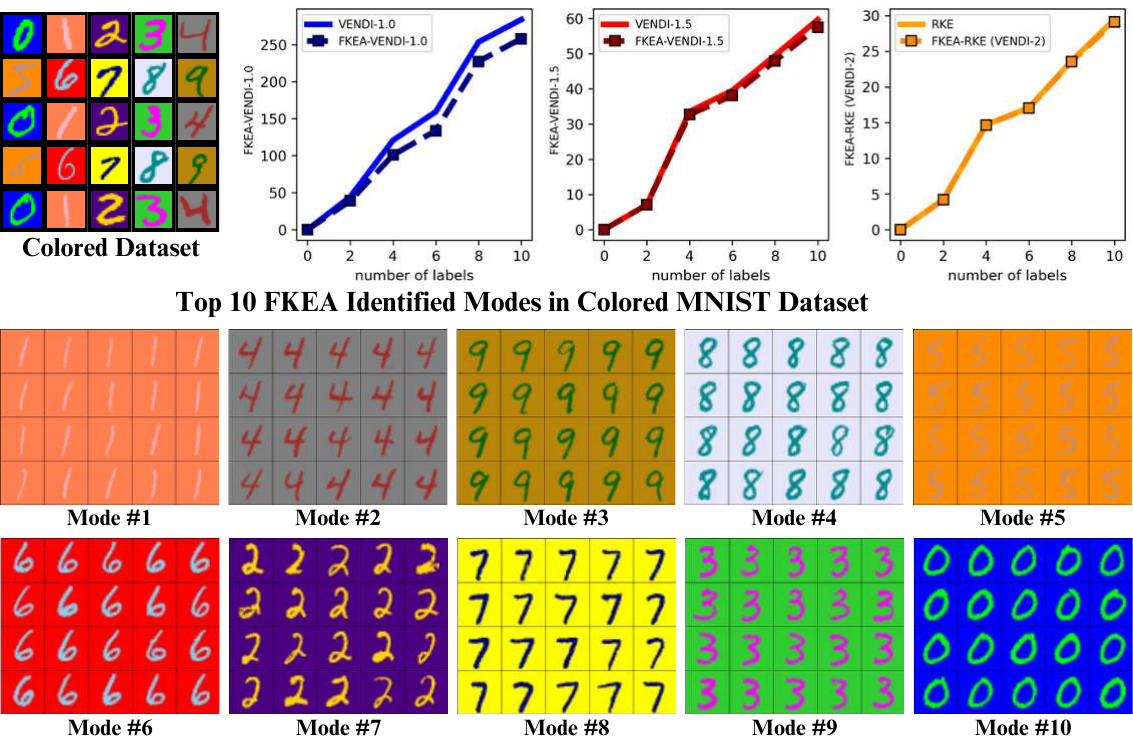}
  \caption{RFF-based identified clusters used in FKEA Evaluation in single-colored MNIST \protect{\cite{deng_mnist_2012}} dataset with \textit{pixel} embedding, Fourier feature dimension $2r=4000$ and bandwidth $\sigma = 7$. The graphs indicate increase in FKEA RKE/VENDI diversity metrics with increasing number of labels.}
  \label{fig:one color-mnist pixel} \vspace{-3mm}
\end{figure}

We evaluated the FKEA method on several image, text, and video datasets to assess its performance in quantifying diversity in different domains. In the experiments, we computed the empirical covariance matrix of $2r$-dimensional Fourier features with a Gaussian kernel with bandwidth parameter $\sigma$ tuned for each dataset, and then applied FKEA approximation for the $\mathrm{VENDI}_1$, $\mathrm{VENDI}_{1.5}$, and the $\mathrm{RKE}$ (same as $\mathrm{VENDI}_{2}$) scores. An algorithm to compute these scores is presented in Algorithm \ref{alg:fkea}. Experiments were conducted on RTX3090 GPUs.
We interpreted the modes identified by FKEA entropy-based diversity evaluation using the eigenvectors of the proxy covariance matrix as discussed in Remark~\ref{Remark: Eigenvector of FKEA}. For each eigenvector, we presented the training data with maximum eigenfunction values corresponding to the eigenvector.

\textbf{Time Complexity of FKEA metrics}. To highlight the computational advantages of transitioning to FKEA, Table \ref{tab:time complexity} presents a comparison of the metric computations for VENDI and RKE on the ImageNet dataset, with sample sizes ranging from 10k to 250k. Our results show that VENDI and RKE become computationally intractable due to memory overflow. In contrast, the FKEA method efficiently scales up to $n=250k$ samples, maintaining optimal computational time.

\begin{table}
\centering
\caption{Time complexity for FKEA and non-FKEA based metrics (RKE and VENDI) on ImageNet dataset with \emph{DinoV2} embedding. Computation of VENDI and RKE on 40k+ samples are omitted due to memory overflow during metric computation.}
\resizebox{\textwidth}{!}{
    \centering
    \begin{tabular}{l|ccccccc|ccccccc}
\toprule
\multicolumn{15}{c}{Time (sec)} \\
\midrule
 & \multicolumn{7}{c}{$2r=8000$} & \multicolumn{7}{c}{$2r=16000$} \\
\midrule
Metric & n=10k & n=20k & n=30k & n=40k & n=50k & n=100k & n=250k & n=10k & n=20k & n=30k & n=40k & n=50k & n=100k & n=250k\\
\midrule
FKEA-RKE & 7 & 16 & 25 & 34 & 43 & 87 & 238 & 37 & 78 & 120 & 162 & 203 & 433 & 1138 \\
FKEA-VENDI & 11 & 19 & 27 & 37 & 45 & 104 & 267 & 48 & 89 & 130 & 173 & 213 & 459 & 1236\\
RKE & 217 & 1324 & 4007 & - & - & - & - & 218 & 1330 & 4021 & - & - & - & -\\
VENDI & 286 & 1774 & 5488 & - & - & - & - & 287 & 1780 & 5502 & - & - & - & -\\
\bottomrule
\end{tabular}
}
\label{tab:time complexity}
\end{table}

\textbf{Experimental Results on Image Data}. 
To investigate the FKEA method's diversity evaluation in settings where we know the ground-truth clusters and their quantity, we simulated an experiment on the colored MNIST \cite{deng_mnist_2012} data with the images of 10 colored digits as shown in Figure \ref{fig:one color-mnist pixel}. We evaluated the FKEA approximations of the diversity scores while including samples from $t$ digits for $t\in\{1,\ldots ,10\}$. The plots in Figure~\ref{fig:one color-mnist pixel} indicate the increasing trend of the scores and FKEA's tight approximations of the scores. Also, we show the top-20 training samples with the highest scores according to the top-10 FKEA eigenvectors, showing the method captured the ground-truth clusters. 

We conducted an experiment on the ImageNet data to monitor the scores' behavior evaluated for 50k samples from an increasing number of ImageNet labels. Figure~\ref{fig:imagenet dinov2} shows the increasing trend of the scores as well as the top-9 samples representing the top-4 identified clusters used for the entropy calculation. Also, Figure~\ref{fig:truncation convergence} presents the FKEA approximated entropy scores with different truncation factors in StyleGAN3 \cite{karras_alias-free_2021} on 30k generated data for each truncation factor, showing the increasing diversity scores with the truncation factor. 
We defer discussing the results on  AFHQ \cite{choi2020starganv2}, MS-COCO \cite{fleet_microsoft_2014}, F-MNIST \cite{xiao2017_online} datasets to the Appendix.  

\begin{figure}
    \centering
    \includegraphics[width=0.9\textwidth]{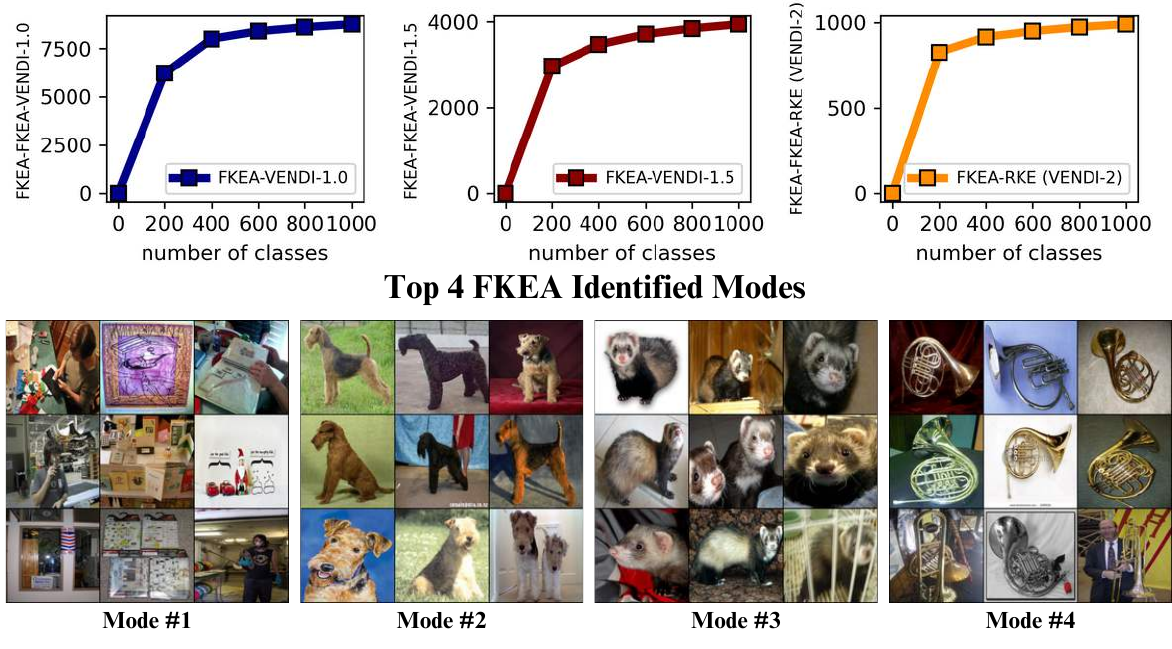}
    \caption{RFF-based identified clusters used in FKEA Evaluation in ImageNet dataset with \textit{DinoV2} embedding, Fourier feature dimension $2r=16k$ and Gaussian Kernel bandwidth $\sigma = 25$. The graphs indicate increase in FKEA diversity metrics with increasing number of labels per 50k samples.}\vspace{-4mm}
    \label{fig:imagenet dinov2}
\end{figure}

\begin{figure}
    \centering
    \includegraphics[width=0.90\textwidth]{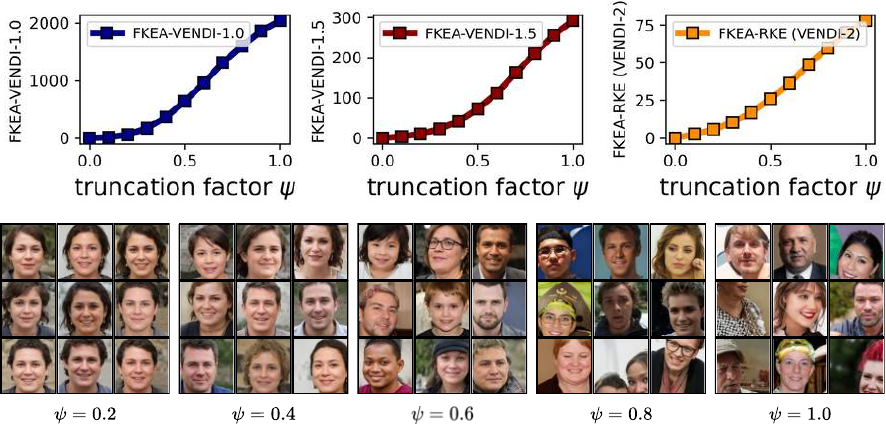}
    \caption{FKEA metrics behavior under different truncation factor $\psi$ of StyleGAN3 \protect{\cite{karras_alias-free_2021}} generated FFHQ samples.}
    \label{fig:truncation convergence}
    \vspace{-5mm}
\end{figure}

\begin{table}[H]
\centering
\caption{Top 5 synthetic countries dataset modes with \textit{text-embedding-3-large} embedding, Fourier features dim $2r=8000$ and Gaussian Kernel bandwidth $\sigma = 0.9$. The table summarises the mentions of each country in the top 100 paragraphs identified for the eigenvectors corresponding to each mode.}
\resizebox{\textwidth}{!}{
    \centering
    \begin{tabular}{lrlrlrlrlrlr}
\toprule
\multicolumn{2}{c}{\textbf{Mode \#1}} & \multicolumn{2}{c}{\textbf{Mode \#2}} & \multicolumn{2}{c}{\textbf{Mode \#3}} & \multicolumn{2}{c}{\textbf{Mode \#4}} & \multicolumn{2}{c}{\textbf{Mode \#5}} & \multicolumn{2}{c}{\textbf{Mode \#6}} \\
\midrule
Burkina Faso & 34\% & Argentina & 77\% & Azerbaijan & 100\% & Cambodia & 94\% & Belarus & 100\% & Bolivia & 97\%\\
Benin & 23\% & Chile & 23\% &  &  & Afghanistan & 6\% &  &  & Ecuador & 3\%\\
Chad & 22\% &  &  &  &  &  &  &  &  & & \\
Burundi & 13\% &  &  &  &  &  &  &  &  & & \\
Cameroon & 8\% &  &  &  &  &  &  &  &  & & \\
\bottomrule
\end{tabular}

}
\vspace{-5mm}
\label{tab:synthetic countries modes}
\end{table}

\begin{figure}[H]
    \centering
    \includegraphics[width=0.90\textwidth]{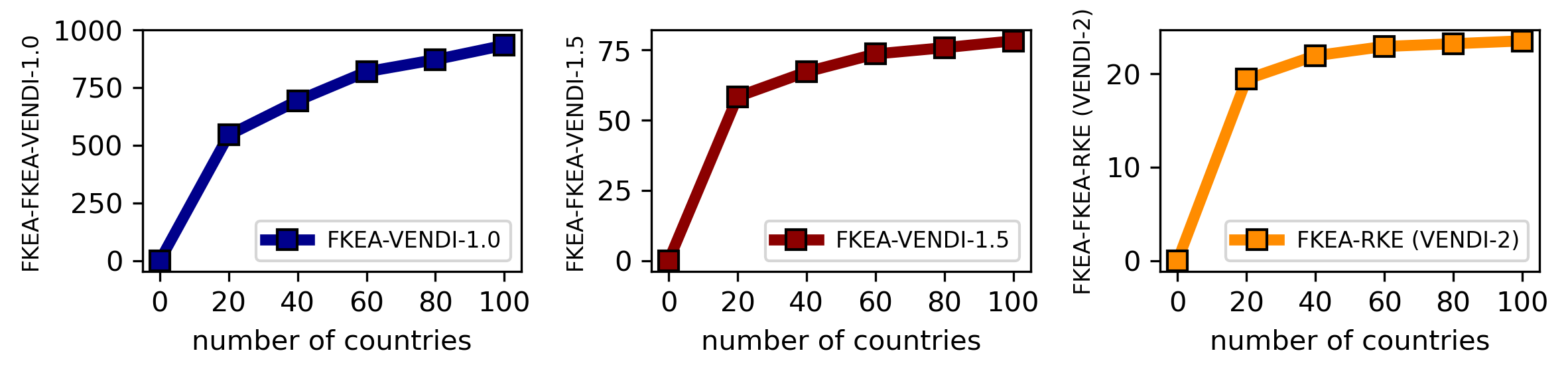}
    \caption{FKEA diversity metrics with the increasing number of countries 
    in the synthetic dataset. 
    }
    \label{fig:country convergence}
\end{figure}
\begin{figure}[H]
    \centering
    \vspace{-5mm}{\includegraphics[width=0.9\textwidth]{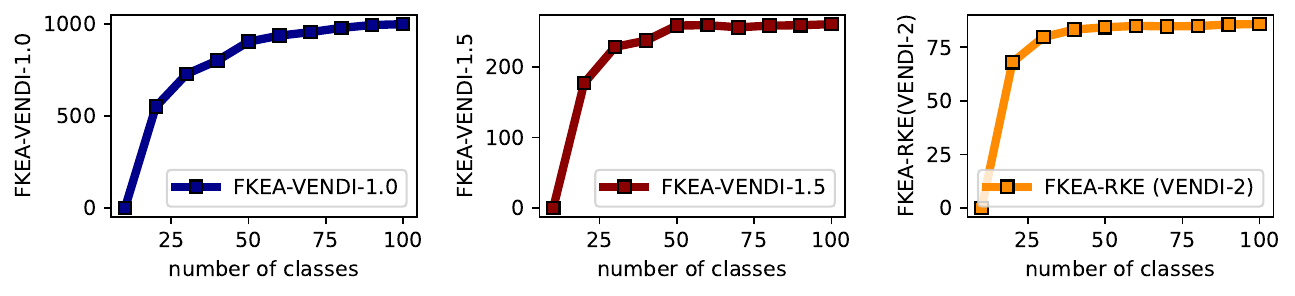}}
    \subfigure[Mode \#1]{\includegraphics[width=0.24\textwidth]{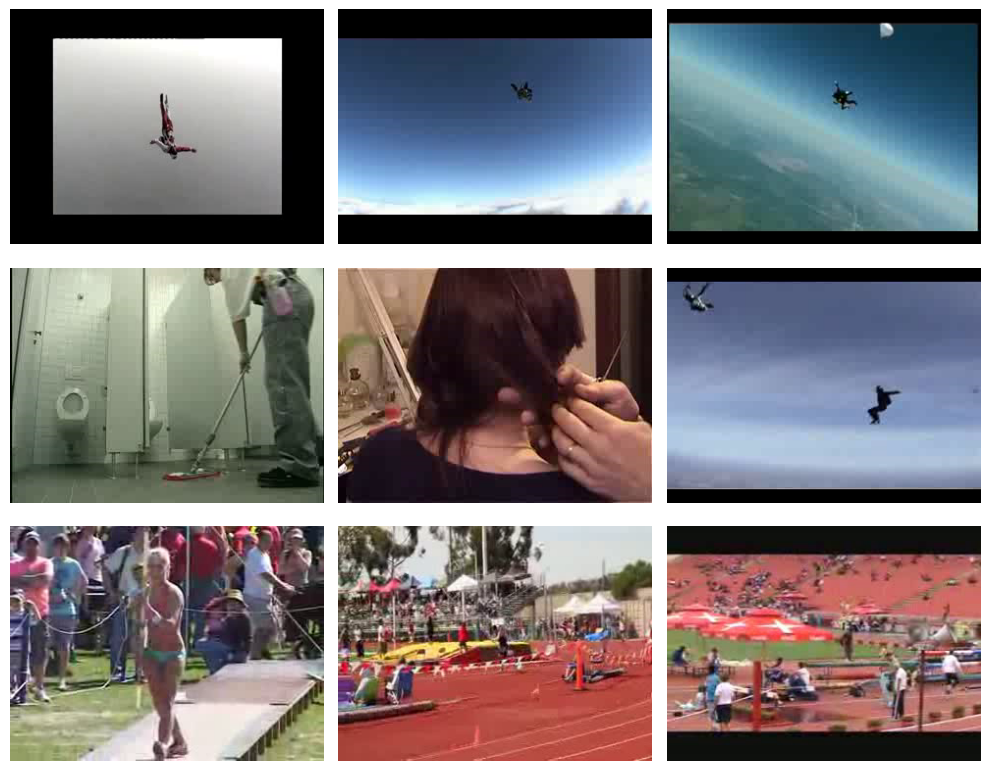}}
    \subfigure[Mode \#2]{\includegraphics[width=0.24\textwidth]{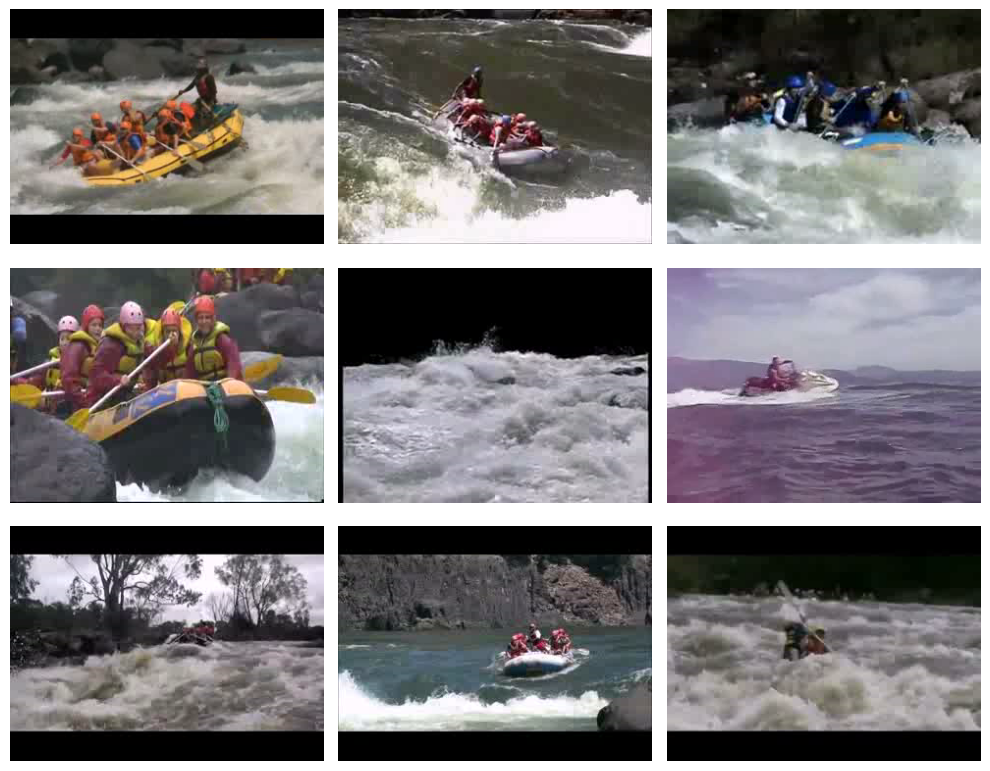}}
    \subfigure[Mode \#3]{\includegraphics[width=0.24\textwidth]{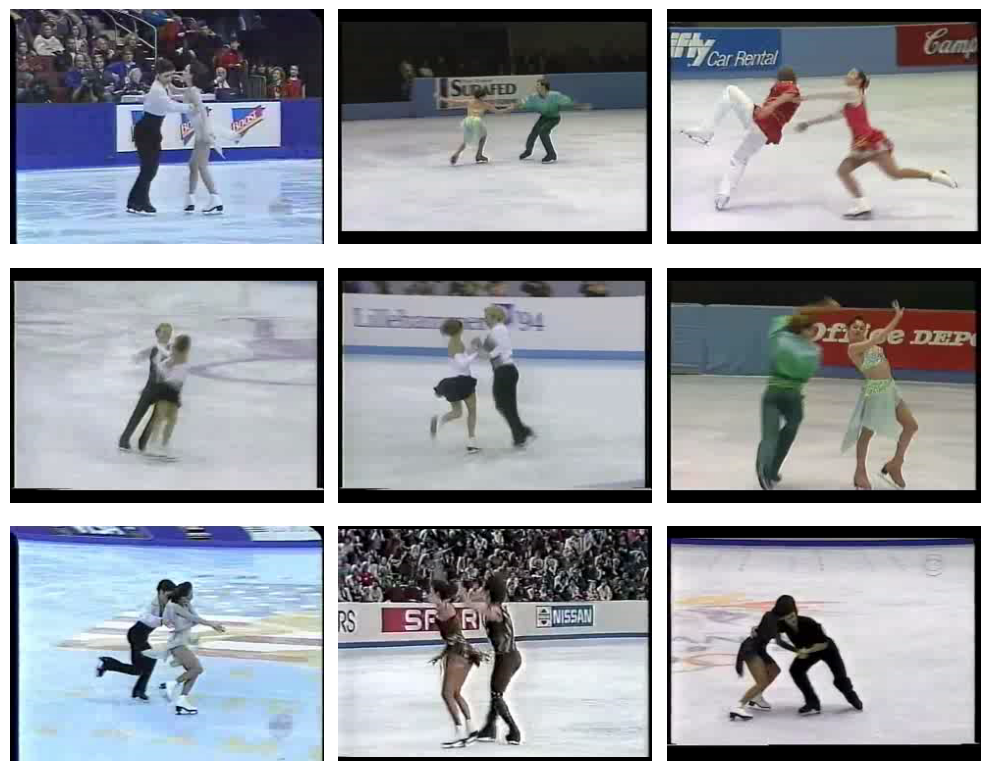}}
    \subfigure[Mode \#4]{\includegraphics[width=0.24\textwidth]{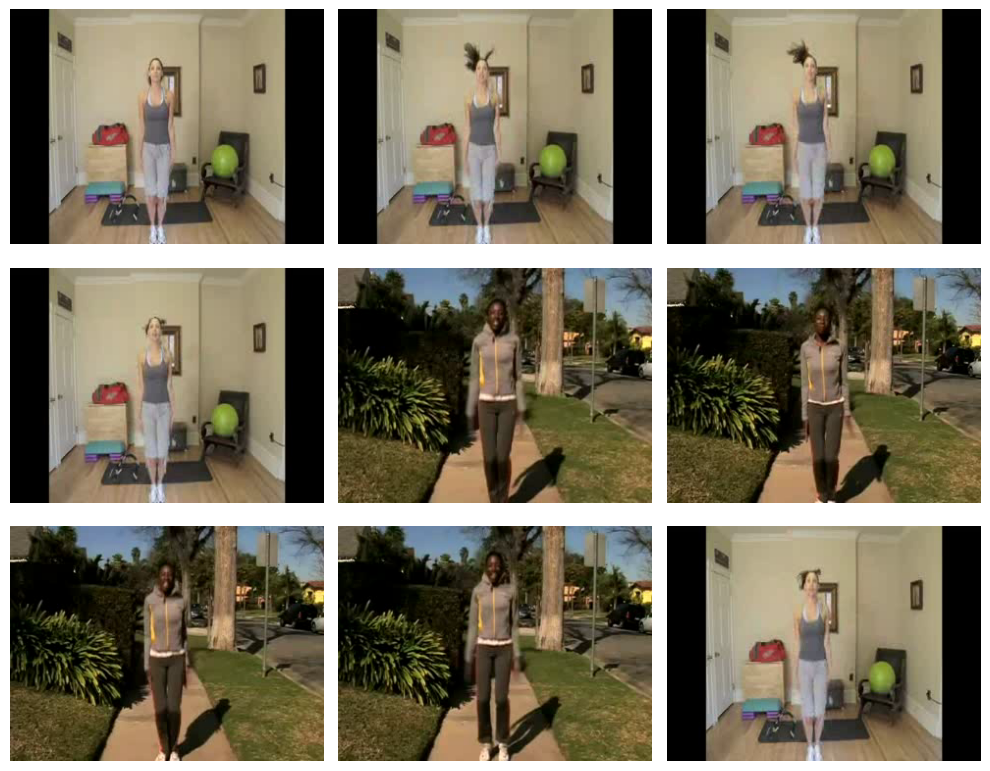}}
    \vspace{-2mm}
    \caption{RFF-based identified clusters used in FKEA evaluation in UCF101 dataset with \textit{I3D} embedding. The graphs indicate an increase in FKEA diversity metrics with more classes.}
    \vspace{-5mm}
  \label{fig:ucf101}
\end{figure}

\textbf{Experimental Results on Text and Video Data}. To perform experiments on the text data with known clustering ground-truth, we generated 500,000 paragraphs using GPT-3.5 \cite{brown_language_2020} about 100 randomly selected countries (5k samples per country). In the experiments, the  text embedding used was \textit{text-embedding-3-large} \cite{openai2023textembedding,openai_gpt-4_2024,brown_language_2020}. We evaluated the diversity scores over data subsets of size 50k with different numbers of mentioned countries. Figure~\ref{fig:country convergence} shows the growing trend of the diversity scores when including more countries. The figure also shows the countries mentioned in the top-6 modes provided by FKEA-based principal eigenvectors, which shows the RFF-based clustering of countries correlates with their continent and geographical location. We discuss the numerical results on non-synthetic text datasets, Wikipedia, CNN/Dailymail \cite{HermannKGEKSB15}\cite{see-etal-2017-get}, CMU Movie Corpus \cite{cmu2013}, in the Appendix.

For video data experiments, we considered two standard video datasets, UCF101 \cite{soomro2012ucf101} and Kinetics-400 \cite{kay2017kinetics}. 
Following the video evaluation literature \cite{Saito2020, unterthiner2019accurate}, we used the I3D pre-trained model \cite{Carreira_i3d} as embedding, which maps a video sample to a 1024-dimensional vector. As shown in Figure~\ref{fig:ucf101}, increasing the number of video classes of test samples led to an increase in the FKEA approximated diversity metrics. Also, while the samples identified for the first identified cluster look broad, the next modes seemed more specific. We discuss the results of the Kinetics dataset in the Appendix.  


\section{Conclusion}

\label{conclusion}
In this work, we proposed the Fourier-based FKEA method to efficiently approximate the kernel-based entropy scores $\mathrm{VENDI}_\alpha$ and $\mathrm{RKE}$ scores. The application of FKEA results in a scalable reference-free evaluation of generative models, which could be utilized in applications where no reference data is available for evaluation. A future direction to our work is to study the sample complexity of the matrix-based entropy scores and the FKEA's approximation under high-dimensional kernel feature maps, e.g. the Gaussian kernel. Also, analyzing the role of feature embedding in the method's application to text and video data would be interesting for future exploration. 

\clearpage
\clearpage
\section*{Acknowledgments}
The work of Farzan Farnia is partially supported by a grant from the Research Grants Council of the Hong Kong Special Administrative Region, China, Project 14209920, and is partially supported by a CUHK Direct Research Grant with CUHK Project No. 4055164. Xuenan Cao's work is supported by a grant from the Research Grants Council of the Hong Kong Special Administrative Region, China, Project 14602223. Andrej Bogdanov's work is supported by an NSERC Discovery Grant.
{
\bibliography{references}
\bibliographystyle{unsrt}
}
\clearpage


\appendix
\section{Proofs}
\subsection{Proof of Theorem \ref{thm:comp}}

The proof of Theorem~\ref{thm:comp} combines third ingredients.  The first is the relation between the circuit size of a function $C$ and of its partial derivatives $\nabla C = (\del C / \del x_1, \dots, \del C / \del x_n)$. 

\begin{lemma}
\label{lemma:backprop}
The function $\nabla C$ has a circuit over basis $\nabla B \cup \{+, \times\}$ whose size is within a constant factor of the size of $C$.
\end{lemma}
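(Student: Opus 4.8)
The plan is to invoke the classical Baur–Strassen theorem, adapted to an arbitrary basis $B$. The statement is exactly the well-known fact that computing all $n$ partial derivatives of a function costs only a constant factor more than computing the function itself, provided we are allowed the gates in $\nabla B$ (the derivatives of the basis functions) together with addition and multiplication. I would prove it by reverse-mode automatic differentiation (``backprop'') on the circuit DAG.

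First I would fix a circuit $\mathcal{C}$ of size $s$ computing $C:\mathbb{R}^n\to\mathbb{R}$, with gates $g_1,\dots,g_s$ in topological order, where $g_s$ is the output; each $g_\ell$ applies some $f_\ell\in B$ to a tuple of its predecessor values. I would introduce the \emph{adjoint} variables $\bar g_\ell := \partial C / \partial g_\ell$, i.e.\ the sensitivity of the output to the value carried on the wire out of gate $g_\ell$. The base case is $\bar g_s = 1$. The inductive (reverse-order) step uses the chain rule: for a gate $g_\ell$ feeding into gates $g_{m}$ with $\ell$ an argument in position $p$,
\begin{equation*}
  \bar g_\ell \;=\; \sum_{m:\, \ell \to m} \bar g_m \cdot \frac{\partial f_m}{\partial (\text{arg}_p)}\bigl(\text{inputs of } g_m\bigr).
\end{equation*}
Each term $\partial f_m/\partial(\text{arg}_p)$ is, by definition, a gate from $\nabla B$ applied to values already computed in the forward pass; multiplying by $\bar g_m$ is one $\times$ gate; and accumulating the sum over the (out-)edges incident to $g_\ell$ costs a number of $+$ gates equal to the in-degree contributions, which total $O(s)$ across the whole circuit since the sum of all fan-ins equals the number of edges, which is $O(s)$ for a bounded-fan-in basis. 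Finally $\partial C/\partial x_i = \bar g$ for the source node $x_i$, so all $n$ partials are read off the adjoint pass. Concatenating the forward circuit, the $\nabla B$-gates, and the adjoint accumulation yields a circuit of size $O(s)$ over $\nabla B \cup \{+,\times\}$ computing $\nabla C$.

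The main obstacle — really the only subtlety — is the bookkeeping of fan-out: a single wire may feed many downstream gates, so the adjoint of that wire is a sum of several contributions, and one must argue the total number of such contributions (hence extra $+$ and $\times$ gates) is linear in $s$ rather than, say, quadratic. This follows from a global edge-counting argument: the adjoint pass introduces $O(1)$ arithmetic gates per \emph{edge} of $\mathcal{C}$, and a bounded-fan-in circuit of size $s$ has $O(s)$ edges. (If the basis $B$ has unbounded fan-in one can still carry this through, charging the cost of each $f_m$'s gradient to the size of $f_m$'s own gradient circuit in $\nabla B$.) I would also remark that the forward values $g_1,\dots,g_s$ are needed during the reverse pass and are simply retained, contributing only a constant-factor overhead, which is consistent with the ``within a constant factor'' claim.
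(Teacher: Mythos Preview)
Your proposal is correct and is precisely the classical Baur--Strassen / reverse-mode automatic differentiation argument that the paper invokes: the paper does not actually prove this lemma but simply cites it as ``a feature of the backpropagation algorithm'' with references to Linnainmaa and Baur--Strassen. Your sketch therefore supplies exactly the content the paper defers to the literature, with the same underlying idea (adjoints via the chain rule in reverse topological order, $O(1)$ overhead per edge).
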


Lemma~\ref{lemma:backprop} is a feature of the backpropagation algorithm~\cite{linnainmaa, baur-strassen}.  This is a linear-time algorithm for constructing a circuit for $\nabla C$ given the circuit $C$ as input.  In contrast, the forward propagation algorithm allows efficient computation of a single (partial) derivative even for circuits with multivalued outputs, giving the second ingredient:

\begin{lemma}
\label{lemma:forwardprop}
Let $C$ be a circuit over basis $B$ and $t$ be an input to $C$.  There exists a circuit that computes the derivative $\del g / \del t$ for every gate $g$ of $C$ over basis $\nabla B \cup \{+, \times\}$ whose size is within a constant factor of the size of $C$.
\end{lemma}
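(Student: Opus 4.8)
The plan is to carry out \emph{forward-mode automatic differentiation} as a purely local, gate-by-gate transformation of the circuit $C$. Fix the distinguished source node $t$. I will produce a circuit $C'$ that contains a copy of $C$ (its ``forward pass'') and, in addition, for every gate $g$ of $C$ a new gate $\dot g$ whose value at any input is exactly $\partial g/\partial t$. The gates $\dot g$ will be introduced in the same topological order as the gates $g$ of $C$, so that when $\dot g$ is built, every gate it needs is already present in $C'$. The outputs of $C'$ are declared to be the collection $\{\dot g : g \text{ a gate of } C\}$, which is precisely what the statement asks for.

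For the base case, set $\dot t$ to the constant $1$ and $\dot s$ to the constant $0$ for every other source node $s$. For the inductive step, let $g = f(h_1,\ldots,h_k)$ be an internal gate with $f \in \mathcal B$ and with (already processed) input gates $h_1,\ldots,h_k$. The chain rule gives
\[
  \frac{\partial g}{\partial t} \;=\; \sum_{i=1}^{k} \frac{\partial f}{\partial x_i}\bigl(h_1,\ldots,h_k\bigr)\,\cdot\, \frac{\partial h_i}{\partial t}.
\]
Here each factor $\frac{\partial f}{\partial x_i}(h_1,\ldots,h_k)$ is computed by a single gate of $\nabla\mathcal B$ fed by the retained forward gates $h_1,\ldots,h_k$; each factor $\frac{\partial h_i}{\partial t}$ is exactly the gate $\dot h_i$ already constructed; and assembling the $k$ products and their sum uses $k$ multiplication gates and $k-1$ addition gates. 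Thus $\dot g$ is realised by $O(k)$ fresh gates over $\nabla\mathcal B \cup \{+,\times\}$, on top of the forward gate $g$ itself, which lives in the copy of $C$.

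For the size bound, note that since $\mathcal B$ is a fixed basis the arity $k$ of every gate is bounded by a constant, so each gate of $C$ spawns only $O(1)$ new gates; adding the copy of $C$ (whose size equals that of $C$) yields $\mathrm{size}(C') = O(\mathrm{size}(C))$. The one point that needs care is exactly this arity accounting: ``within a constant factor'' relies on the gates of $\mathcal B$ having bounded fan-in, and on reading $\nabla\mathcal B$ as supplying, for each $k$-ary $f$, the $k$ partial-derivative gates $\partial f/\partial x_i$ sharing the inputs $h_1,\ldots,h_k$. I also implicitly use that constant gates $0,1$ and a copy of the forward computation are available (equivalently, one adopts the usual convention that $\nabla\mathcal B$ contains $\mathcal B$). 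Everything else is a direct application of the chain rule processed in topological order, so I expect no further obstacle.
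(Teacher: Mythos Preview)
Your proposal is correct and is exactly the argument the paper has in mind: the paper does not spell out a proof of this lemma but simply attributes it to ``the forward propagation algorithm,'' and what you have written is the standard forward-mode automatic differentiation construction that underlies that remark. Your explicit flagging of the bounded-arity assumption (needed for the $O(1)$-per-gate blow-up) is appropriate and consistent with the paper's convention that $\mathcal B$ is a fixed basis.
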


The last ingredient is the following identity. For a scalar function $f$ over the complex numbers and matrix $X$ diagonalizable as $U\Lambda U^T$, we define $f(X)$ to be the function $Uf(\Lambda)U^T$ where $f$ is applied entry-wise to the diagonal matrix $\Lambda$.

\begin{lemma}
\label{lemma:matrixid}
For every $f$ that is analytic over an open domain $\Omega$ containing all sufficiently large complex numbers and 
every matrix $X$ whose spectrum is contained in $\Omega$, $\nabla \mathrm{Tr}(f(X)) = f'(X)$.
\end{lemma}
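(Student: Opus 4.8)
The plan is to reduce to the case where $f$ is a polynomial, in which the identity is a one-line computation, and then lift it to an arbitrary analytic $f$ via the holomorphic functional calculus. For a polynomial $f(x)=\sum_{k\ge 0}a_kx^k$ linearity reduces us to $f(x)=x^k$; writing $\mathrm{Tr}(X^k)=\sum_{i_1,\dots,i_k}x_{i_1i_2}x_{i_2i_3}\cdots x_{i_ki_1}$ and differentiating with respect to $x_{pq}$, cyclic invariance of the trace collapses the $k$ surviving terms into $\partial\,\mathrm{Tr}(X^k)/\partial x_{pq}=k\,(X^{k-1})_{qp}$, so $\nabla\,\mathrm{Tr}(X^k)=(kX^{k-1})^{\top}=(f'(X))^{\top}$, and summing over $k$ gives $\nabla\,\mathrm{Tr}(f(X))=(f'(X))^{\top}$ for every polynomial $f$. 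Under the matrix-derivative convention $(\nabla g)_{pq}:=\partial g/\partial x_{qp}$ (or whenever $X$ is symmetric, so that $f'(X)$ is symmetric) this reads $\nabla\,\mathrm{Tr}(f(X))=f'(X)$; in any case the transpose is immaterial downstream, since a circuit computing a matrix $M$ and one computing $M^{\top}$ have the same size.

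To handle a general analytic $f$, fix $X$ with spectrum $S\subseteq\Omega$ and pick a contour $\Gamma\subset\Omega$ (a finite union of small circles, winding number one around the enclosed eigenvalues) enclosing $S$. For $Y$ near $X$ the Riesz--Dunford formula $f(Y)=\frac{1}{2\pi i}\oint_\Gamma f(z)(zI-Y)^{-1}\,dz$ holds and agrees with $Uf(\Lambda)U^{-1}$ when $Y$ is diagonalizable, so we adopt it as the definition of $f(\cdot)$ in general. Then $\mathrm{Tr}(f(X))=\frac{1}{2\pi i}\oint_\Gamma f(z)\,\mathrm{Tr}\!\bigl((zI-X)^{-1}\bigr)\,dz$, and differentiating under the integral via $\partial_{x_{pq}}(zI-X)^{-1}=(zI-X)^{-1}E_{pq}(zI-X)^{-1}$ (from differentiating $(zI-X)(zI-X)^{-1}=I$, with $E_{pq}$ the matrix unit) plus cyclicity gives $\partial_{x_{pq}}\mathrm{Tr}\!\bigl((zI-X)^{-1}\bigr)=\mathrm{Tr}\!\bigl(E_{pq}(zI-X)^{-2}\bigr)=\bigl((zI-X)^{-2}\bigr)_{qp}$. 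Hence $\nabla\,\mathrm{Tr}(f(X))=\bigl(\tfrac{1}{2\pi i}\oint_\Gamma f(z)(zI-X)^{-2}\,dz\bigr)^{\top}$. Finally, since $(zI-X)^{-2}=-\tfrac{d}{dz}(zI-X)^{-1}$, integrating by parts around the closed contour $\Gamma$ removes the total-derivative term and transfers the derivative onto $f$, so $\tfrac{1}{2\pi i}\oint_\Gamma f(z)(zI-X)^{-2}\,dz=\tfrac{1}{2\pi i}\oint_\Gamma f'(z)(zI-X)^{-1}\,dz=f'(X)$, which is the claim.

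I expect the obstacles to be bookkeeping rather than conceptual. The stated definition $f(X)=Uf(\Lambda)U^{-1}$ covers only diagonalizable $X$, whereas both the statement and the eventual use of the lemma (applied to $cI+N$ with $c$ large and $N$ nilpotent---not diagonalizable) need arbitrary $X$ with spectrum in $\Omega$; this is settled by taking the Riesz--Dunford integral as the definition, equivalently by analytic continuation from the dense set of diagonalizable matrices, on which both sides are analytic in the entries of $X$ and already agree. One must also justify differentiating under the contour integral and the integration by parts, which is routine because on a fixed $\Gamma$ the integrand is jointly analytic in $z$ and in the entries of $X$ with locally uniform bounds. For the intended application one can in fact bypass the functional calculus altogether: if $X-cI$ is nilpotent then $f(X)$ equals the Taylor polynomial of $f$ at $c$ of high enough degree evaluated at $X$, so the elementary computation of the first paragraph applies verbatim---and this is precisely where $\alpha=2$ is special, since $f(x)=x^{\alpha}$ has $f'''(x)=\alpha(\alpha-1)(\alpha-2)x^{\alpha-3}$, which fails to vanish for large $|x|$ exactly when $\alpha\notin\{0,1,2\}$.
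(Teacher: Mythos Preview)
Your proof is correct and follows the same overall architecture as the paper's: establish the identity $\nabla\,\mathrm{Tr}\,X^k=kX^{k-1}$ for monomials, then pass to general analytic $f$ via the Cauchy/Riesz--Dunford integral $f(X)=\frac{1}{2\pi i}\oint_\Gamma f(z)(zI-X)^{-1}\,dz$ and differentiate under the integral. The two arguments diverge only in how they evaluate $\nabla\,\mathrm{Tr}\,(zI-X)^{-1}$ and finish. The paper expands the resolvent as a Neumann series $(zI-X)^{-1}=\sum_{k\ge 0}z^{-k-1}X^k$ (valid for $|z|>\Vert X\Vert$) and applies the monomial identity termwise to obtain $(zI-X)^{-2}$, then recognizes $\frac{1}{2\pi i}\oint_\Gamma f(z)(zI-X)^{-2}\,dz$ as the Cauchy representation of $f'(X)$. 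You instead differentiate the resolvent directly via $\partial_{x_{pq}}(zI-X)^{-1}=(zI-X)^{-1}E_{pq}(zI-X)^{-1}$ and then integrate by parts in $z$ using $(zI-X)^{-2}=-\tfrac{d}{dz}(zI-X)^{-1}$. Your route has the mild advantage that it works for any contour $\Gamma\subset\Omega$ enclosing the spectrum, whereas the paper's Neumann-series step forces $\Gamma$ to lie outside the disk of radius $\Vert X\Vert$---this is exactly why the paper builds the otherwise unmotivated hypothesis ``$\Omega$ contains all sufficiently large complex numbers'' into the lemma. Your remarks on the transpose convention and on the nilpotent shortcut (reducing to a Taylor polynomial when $X-cI$ is nilpotent, which covers the actual use in Theorem~\ref{thm:comp}) are apt additions the paper does not spell out.
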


We first illustrate the proof in the special case when $\norm{X}$ is within the radius of convergence of $f$.  Namely, $f(x)$ is represented by the absolutely convergent series $\sum f^{(k)}(0) x^k / k!$ for all $\abs{x} \leq \rho$.  Then $f(X) = \sum f^{(k)}(0) X^k / k!$ assuming $\norm{X} \leq \rho$.  By linearity (and using the fact that derivatives preserve radius of convergence) it is sufficient to show that 
\begin{equation}
\label{eq:trxk}
\nabla \mathrm{Tr} X^k = \frac{dX^k}{dX},
\end{equation}
which can be verified by explicit calculation:  Both sides equal $kX^{k-1}$.  This is sufficient to establish Theorem~\ref{thm:comp} for all integer $\alpha > 2$.
\begin{proof}[Proof of Lemma~\ref{lemma:matrixid}]
The Cauchy integral formula for matrices yields the representation
\[ f(X) = \frac{1}{2\pi i} \int_C f(z) (zI - X)^{-1} dz, \]
for any closed curve $C$ whose interior contains the spectrum of $X$. As $(zI - X)^{-1}$ is continuous along $C$, we can write
\begin{equation}
\label{eq:nablaproof}
 \nabla \mathrm{Tr} f(X) = \frac{1}{2\pi i} \int_C f(z) \nabla \mathrm{Tr} (zI - X)^{-1} dz.
\end{equation}
Choosing $C$ to be a circle of radius $\rho$ larger than the spectral norm of $X$, for all $z$ of magnitude $\rho$ we have the identity
\[ (zI - X)^{-1} = z^{-1}(I - z^{-1}X)^{-1} = z^{-1} \sum_{k=0}^\infty z^{-k}X^k \]  
As the series $\sum z^{-k} \nabla \mathrm{Tr} X^k = \sum kz^{-k} X^{k-1}$ converges absolutely in spectral norm, using \eqref{eq:trxk} we obtain the identity $\nabla \mathrm{Tr} (zI - X)^{-1} = d(zI - X)^{-1}/dX$, namely the lemma holds for the function $f(X) = (zI - X)^{-1}$.  Plugging into \eqref{eq:nablaproof} and exchanging the order of integration and derivation proves the lemma.
\end{proof}

\begin{proof}[Proof of Theorem~\ref{thm:comp}]
Assume $\mathrm{Tr} \rho^\alpha$ (resp., $-\mathrm{Tr} \rho \log \rho$) has circuit size $s(d)$.  By Lemma~\ref{lemma:backprop} and Lemma~\ref{lemma:matrixid}, $\nabla \mathrm{Tr} \rho^\alpha = \alpha \rho^{\alpha - 1}$ (resp., $-\nabla \mathrm{Tr} \rho \log \rho = \log \rho - 1/\ln 2$) has circuit size $O(s(d))$.  For every symmetric matrix $X$ and sufficiently small $t$, the matrix $\rho = I + tX$ is positive semi-definite.  By Lemma~\ref{lemma:forwardprop} the $\R^{d^2}$-valued function $\del^2 \rho / \del t^2$ has circuit size $O(s^d)$.  The value of this function at $t = 0$ is $\alpha(\alpha - 1)(\alpha - 2)X^2$ (resp., $X^2$), namely the square of the input matrix $X$ up to constant.  Finally, computing the product $AB$ reduces to squaring the symmetric matrix 
\[ \begin{pmatrix}
&A^T&B \\ A && \\ B^T &&
\end{pmatrix}. \hfill\qedhere \]
\end{proof}

\subsection{Proof of Theorem \ref{Theorem: FKEA-Guarantee}}

Assuming that the shift-invariant kernel $k(\mathbf{x},\mathbf{x}')=\kappa(\mathbf{x}-\mathbf{x}')$ is normalized (i.e. $\kappa(\mathbf{0})=1$), then the Fourier transform $\widehat{\kappa}$ is a valid PDF according to Bochner's theorem and also an even function because $\kappa$ takes real values. Then, we have
\begin{align*}
    k\bigl(\mathbf{x},\mathbf{x}' \bigr) \, &=\, \kappa_{\sigma}\bigl(\mathbf{x} - \mathbf{x}'\bigr) \\
    &\stackrel{(a)}{=} \,  \int \widehat{\kappa_{\sigma}}(\boldsymbol{\omega}) \exp\bigl(i\boldsymbol{\omega}^\top (\mathbf{x} - \mathbf{x}')\bigr)\mathrm{d}\boldsymbol{\omega} \\
    &\stackrel{(b)}{=} \,  \int \widehat{\kappa_{\sigma}}(\boldsymbol{\omega}) \cos\bigl(\boldsymbol{\omega}^\top (\mathbf{x} - \mathbf{x}')\bigr)\mathrm{d}\boldsymbol{\omega} \\
    &\stackrel{}{=} \, \mathbb{E}_{\boldsymbol{\omega}\sim \widehat{\kappa}}\Bigl[  \cos\bigl(\boldsymbol{\omega}^\top (\mathbf{x} - \mathbf{x}')\bigr)\Bigr] \\
    &= \, \mathbb{E}_{\boldsymbol{\omega}\sim \widehat{\kappa}}\Bigl[  \cos\bigl(\boldsymbol{\omega}^\top\mathbf{x})\cos\bigl(\boldsymbol{\omega}^\top\mathbf{x}')  + \sin\bigl(\boldsymbol{\omega}^\top\mathbf{x})\sin\bigl(\boldsymbol{\omega}^\top\mathbf{x}') \Bigr]   
\end{align*}
Here, (a) comes from the synthesis property of the Fourier transform. (b) holds since $\widehat{\kappa_\sigma}$ is an even function, resulting in a zero imaginary term in the Fourier synthesis. 

Therefore, since $\bigl\vert \cos\bigl(\boldsymbol{\omega}^\top \mathbf{y}\bigr)\bigr\vert\le 1$ for all $\boldsymbol{\omega}$ and $\mathbf{y}$, one can apply Hoeffding's inequality to show for independently drawn $\boldsymbol{\omega}_1 , \ldots , \boldsymbol{\omega}_r\stackrel{\mathrm{iid}}{\sim} \widehat{\kappa}$ the following probably correct bound holds:
\begin{equation*}
    \mathbb{P}\biggl( \Bigl| \frac{1}{r}\sum_{i=1}^r \cos\bigl(\boldsymbol{\omega}_i^\top (\mathbf{x} - \mathbf{x}')\bigr)  - \mathbb{E}_{\boldsymbol{\omega}\sim \widehat{\kappa}}\Bigl[  \cos\bigl(\boldsymbol{\omega}^\top (\mathbf{x} - \mathbf{x}')\bigr)\Bigr]  \Bigr|  \ge \epsilon  \biggr) \le 2\exp\Bigl(-\frac{r\epsilon^2}{2} \Bigr)
\end{equation*}
Therefore, as the identity $\cos(a-b)=\cos(a)\cos(b) + \sin(a)\sin(b)$ reveals $\frac{1}{r}\sum_{i=1}^r \cos\bigl(\boldsymbol{\omega}_i^\top (\mathbf{x} - \mathbf{x}')\bigr) \, =\, \widetilde{\phi}_r(\mathbf{x})^\top \widetilde{\phi}_r(\mathbf{x}')$, the above bound can be rewritten as
\begin{equation*}
    \mathbb{P}\biggl( \Bigl| \widetilde{\phi}_r(\mathbf{x})^\top \widetilde{\phi}_r(\mathbf{x}')  - k(\mathbf{x},\mathbf{x}')  \Bigr|  \ge \epsilon  \biggr) \le 2\exp\Bigl(-\frac{r\epsilon^2}{2} \Bigr).
\end{equation*}
Also, $\widetilde{k}_r(\mathbf{x},\mathbf{x}')= \widetilde{\phi}_r(\mathbf{x})^\top \widetilde{\phi}_r(\mathbf{x}')$ is by definition a normalized kernel, implying that
\begin{equation*}
     \forall \mathbf{x}\in\mathbb{R}^d:\quad  \widetilde{\phi}_r(\mathbf{x})^\top \widetilde{\phi}_r(\mathbf{x})  - k(\mathbf{x},\mathbf{x})  \,  = \, 0.
\end{equation*}
As a result, one can apply the union bound to combine the above inequalities and show for every sample set $\mathbf{x}_1 ,\ldots ,\mathbf{x}_n$:
\begin{equation*}
    \mathbb{P}\biggl( \max_{1\le i, j \le n}  \Bigl( \widetilde{\phi}_r(\mathbf{x}_i)^\top \widetilde{\phi}_r(\mathbf{x}_j)  - k_{\text{\rm Gaussian}(\sigma^2)}(\mathbf{x}_i,\mathbf{x}_j)  \Bigr)^2  \ge \epsilon^2  \biggr) \le 2 \binom{n}{2}\exp\Bigl(-\frac{r\epsilon^2}{2} \Bigr).
\end{equation*}

Considering the normalized kernel matrix $\frac{1}{n}K=\frac{1}{n}\bigl[k(\mathbf{x}_i,\mathbf{x}_j)\bigr]_{1\le i,j\le n}$ and the proxy normalized kernel matrix $\frac{1}{n}\widetilde{K}=\frac{1}{n}\bigl[\widetilde{\phi}_r(\mathbf{x}_i)^\top\widetilde{\phi}_r(\mathbf{x}_j)\bigr]_{1\le i,j\le n}$, the above inequality implies that
\begin{align}
    &\mathbb{P}\Bigl( \bigl\Vert \frac{1}{n}\widetilde{K} - \frac{1}{n}K\bigr\Vert^2_F \, \ge\, n^2\frac{\epsilon^2}{n^2} \Bigr) \; \le \; \binom{n}{2}\exp\Bigl(-\frac{r\epsilon^2}{2}\Bigr). \nonumber\\
    \Longrightarrow\quad&\mathbb{P}\Bigl( \bigl\Vert \frac{1}{n}\widetilde{K} - \frac{1}{n}K\bigr\Vert_F \, \ge\, \epsilon  \Bigr) \, < \, \frac{n^2}{2}\exp\Bigl(-\frac{r\epsilon^2}{2}\Bigr).
\end{align}
Leveraging the eigenvalue-perturbation bound in \cite{hoffman2003variation}, we can translate the above bound to the following for the sorted eigenvalues $\lambda_1\ge \cdots \ge \lambda_n$ of $\frac{1}{n}K$ and the sorted eigenvalues ${\widetilde{\lambda}}_1\ge \cdots \ge {\widetilde{\lambda}}_n$ of $\frac{1}{n}\widetilde{K}$
\begin{equation*}
    \sqrt{\sum_{i=1}^{n}(\widetilde{\lambda}_i - \lambda_i)^2} \,\le\, \Bigl\Vert \frac{1}{n}{\widetilde{K}} - \frac{1}{n}{K}\Bigr\Vert_F 
\end{equation*}
which shows
\begin{align}
\mathbb{P}\Bigl( \sqrt{\sum_{i=1}^{r'}(\widetilde{\lambda}_i - \lambda_i)^2} \, \ge\, \epsilon  \Bigr) \, \le \, \frac{n^2}{2}\exp\Bigl(-\frac{r\epsilon^2}{2}\Bigr)
\end{align}
Defining $\delta= \frac{n^2}{2}\exp\Bigl(-\frac{r\epsilon^2}{2}\Bigr)$, i.e., $\epsilon = \sqrt
{ \frac{8\log\bigl(n/2\delta\bigr)}{r}}$, leads to
\begin{align}
\mathbb{P}\Bigl( \sqrt{\sum_{i=1}^{n}(\widetilde{\lambda}_i - \lambda_i)^2} \, \le\, \epsilon  \Bigr) \, \ge \, 1-\delta.
\end{align}
Noting that the normalized proxy kernel matrix $\widetilde{K}$ and the proxy kernel covariance matrix $\widetilde{C}_X$ share identical non-zero eigenvalues together with the above bound finish the proof of Theorem \ref{Theorem: FKEA-Guarantee}'s first part. 

Concerning Theorem \ref{Theorem: FKEA-Guarantee}'s approximation guarantee for the eigenvectors, note that for each eigenvectors $\widehat{\mathbf{v}}_i$ of 
the proxy kernel matrix $\frac{1}{n}\widetilde{K}$, the following holds:
\begin{align*}
\Bigl\Vert \frac{1}{n}{K} \widehat{\mathbf{v}}_i - {\lambda}_i\widehat{\mathbf{v}}_i\Bigr\Vert_2 \, &\le \, \Bigl\Vert \frac{1}{n}{K} \widehat{\mathbf{v}}_i - \widetilde{\lambda}_i\widehat{\mathbf{v}}_i\Bigr\Vert_2 + \Bigl\Vert \widetilde{\lambda}_i\widehat{\mathbf{v}}_i - {\lambda}_i\widehat{\mathbf{v}}_i \Bigr\Vert_2 \\
    \, &= \, \Bigl\Vert \bigl(\frac{1}{n}{K} - \frac{1}{n}\widetilde{K}\bigr)\widehat{\mathbf{v}}_i\Bigr\Vert_2 + \bigl\vert \tilde{\lambda}_i - {\lambda}_i \bigr\vert
\end{align*}
Therefore, applying  Young's inequality shows that
\begin{align*}
\Bigl\Vert \frac{1}{n}{K} \widehat{\mathbf{v}}_i - {\lambda}_i\widehat{\mathbf{v}}_i\Bigr\Vert^2_2 \, &\le \, 2\Bigl\Vert \bigl(\frac{1}{n}{K} - \frac{1}{n}\widetilde{K}\bigr)\widehat{\mathbf{v}}_i\Bigr\Vert^2_2 + 2\bigl( \tilde{\lambda}_i - {\lambda}_i \bigr)^2 \\
&= \, 2\mathrm{Tr}\Bigl(\widehat{\mathbf{v}}^\top_i \bigl(\frac{1}{n}{K} - \frac{1}{n}\widetilde{K}\bigr)^2\widehat{\mathbf{v}}_i\Bigr) + 2\bigl( \tilde{\lambda}_i - {\lambda}_i \bigr)^2 \\
&= \, 2\mathrm{Tr}\Bigl(\widehat{\mathbf{v}}_i\widehat{\mathbf{v}}^\top_i \bigl(\frac{1}{n}{K} - \frac{1}{n}\widetilde{K}\bigr)^2\Bigr) + 2\bigl( \tilde{\lambda}_i - {\lambda}_i \bigr)^2,
\end{align*}
which implies that
\begin{align*}
  \sum_{i=1}^n \Bigl\Vert \frac{1}{n}{K} \widehat{\mathbf{v}}_i - {\lambda}_i\widehat{\mathbf{v}}_i\Bigr\Vert^2_2 \, &\le  2\mathrm{Tr}\Bigl(\bigl(\sum_{i=1}^n\widehat{\mathbf{v}}_i\widehat{\mathbf{v}}^\top_i\bigr) \bigl(\frac{1}{n}{K} - \frac{1}{n}\widetilde{K}\bigr)^2\Bigr) + 2\sum_{i=1}^n\bigl( \tilde{\lambda}_i - {\lambda}_i \bigr)^2 \\
  &=  2\mathrm{Tr}\Bigl( \bigl(\frac{1}{n}{K} - \frac{1}{n}\widetilde{K}\bigr)^2\Bigr) + 2\sum_{i=1}^n\bigl( \tilde{\lambda}_i - {\lambda}_i \bigr)^2 \\
  &=  2\Bigl\Vert\frac{1}{n}{K} - \frac{1}{n}\widetilde{K}\Bigr\Vert^2_F + 2\sum_{i=1}^n\bigl( \tilde{\lambda}_i - {\lambda}_i \bigr)^2 
  \\
  &
  \le  4\Bigl\Vert\frac{1}{n}{K} - \frac{1}{n}\widetilde{K}\Bigr\Vert^2_F.
\end{align*}
The above proves that
\begin{align*}
    \mathbb{P}\Bigl(\sqrt{\sum_{i=1}^n \Bigl\Vert \frac{1}{n}{K} \widehat{\mathbf{v}}_i - {\lambda}_i\widehat{\mathbf{v}}_i\Bigr\Vert^2_2} \ge \epsilon \Bigr)\, &\le \,   
    \mathbb{P}\Bigl( \bigl\Vert \frac{1}{n}\widetilde{K} - \frac{1}{n}K\bigr\Vert_F \, \ge\, \frac{\epsilon}{2}  \Bigr)\\
    \, &< \, \frac{n^2}{2}\exp\Bigl(-\frac{r\epsilon^2}{8}\Bigr).
\end{align*}
Therefore, considering the provided definition $\delta= \frac{n^2}{2}\exp\Bigl(-\frac{r\epsilon^2}{2}\Bigr)$, i.e., $2\epsilon = \sqrt
{ \frac{32\log\bigl(n/2\delta\bigr)}{r}}$, we will have the following which completes the proof:
\begin{align*}
\mathbb{P}\Bigl(\sqrt{\sum_{i=1}^n \Bigl\Vert \frac{1}{n}{K} \widehat{\mathbf{v}}_i - {\lambda}_i\widehat{\mathbf{v}}_i\Bigr\Vert^2_2} \le 2{\epsilon} \Bigr) \, \ge \, 1-\delta.
\end{align*}

\begin{figure}
  \centering
  \includegraphics[width=\textwidth]{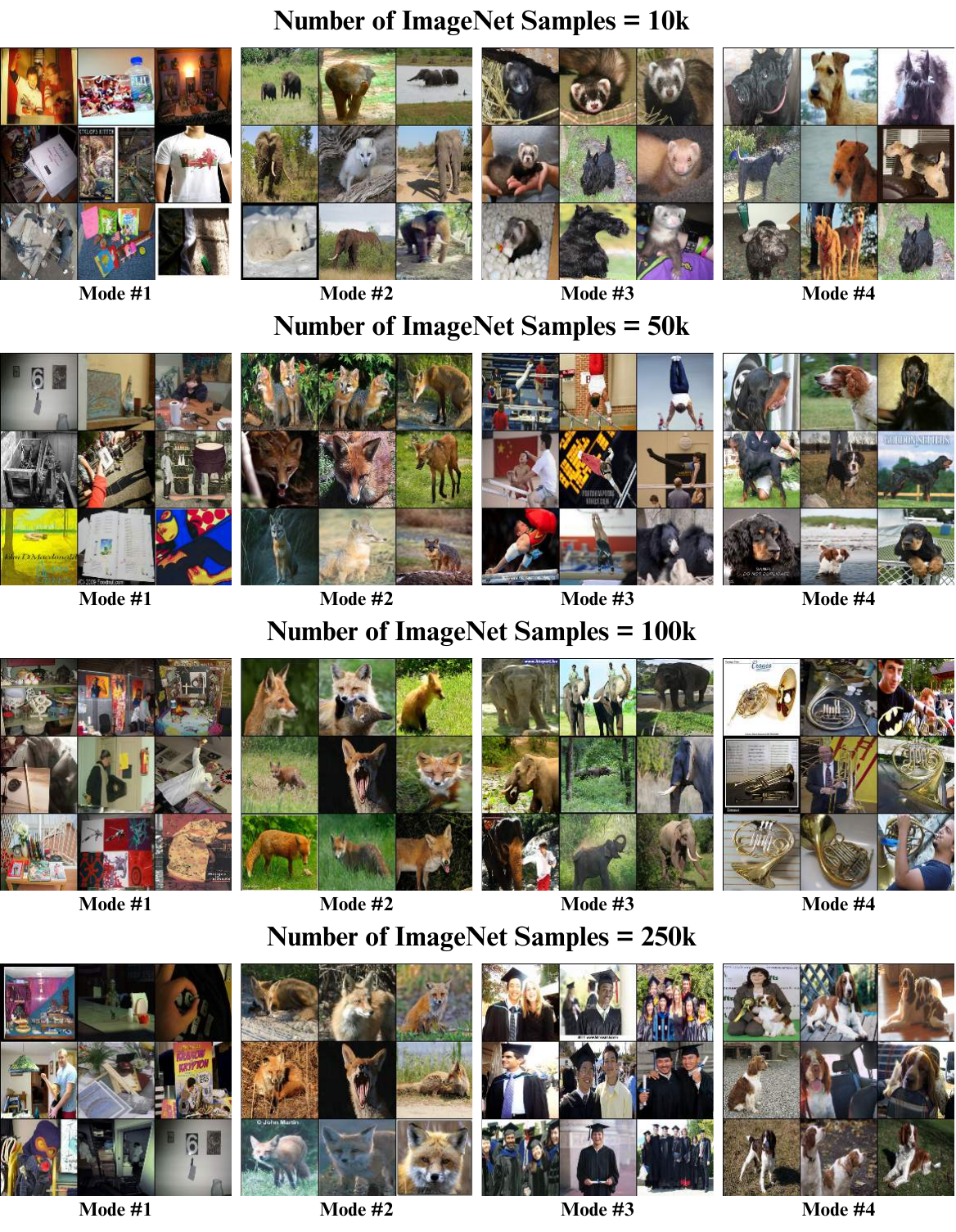}
  \caption{RFF-based identified clusters used in FKEA Evaluation in ImageNet dataset with \textit{DinoV2} embeddings and bandwidth $\sigma = 25$ at varying number of samples $n$}
  \label{fig:imagenet nsamples clusters}
\end{figure}

\section{Limitations}
\textbf{Incompatibility with non shift-invariant kernels}. Our analysis targets a shift-invariant kernel, which does not apply to a general kernel function, such as polynomial kernels. In practice, many ML algorithms rely on simpler kernels that may not have the shift-invariant property. Due to to specifics of FKEA framework, we cannot directly extend the work to such kernels. We leave the framework's extension to other kernel functions for future studies.  

\textbf{Reliance on Embeddings}. FKEA clustering and diversity assessment metrics rely on the quality of the underlying embedding space. Depending on the training and pre-training datasets, the semantic clustering properties may change. We leave in-depth study of embedding space behavior for future research.

\section{Additional Numerical Results}
\subsection{Real Image Dataset Modes}
This section details the results of cluster analyses conducted on various real-world datasets, including FFHQ, AFHQ, MSCOCO, and Fashion-MNIST. Each dataset’s results are organized into clusters identified by RFF method in FKEA evaluation.

\begin{figure}
  \centering
  \includegraphics[width=\textwidth]{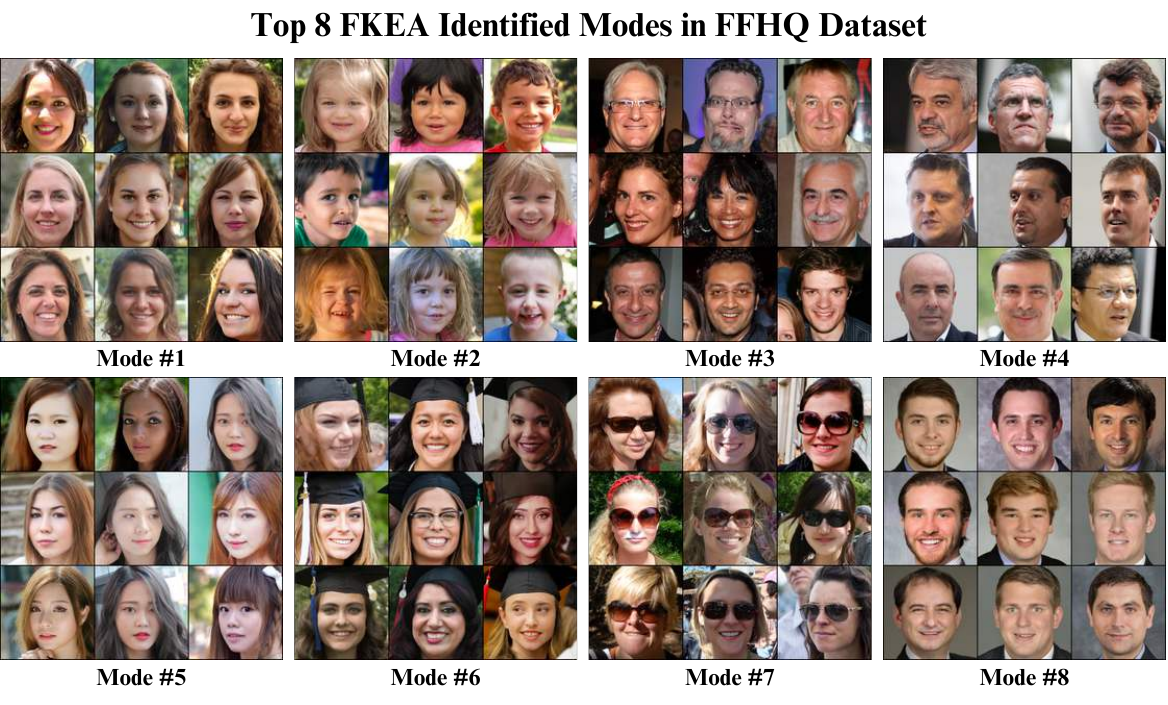}
  \caption{RFF-based identified clusters used in FKEA Evaluation in FFHQ dataset with \textit{DinoV2} embeddings and bandwidth $\sigma = 20$}
  \label{fig:ffhq dinov2 appendix}
\end{figure}

\begin{figure}
  \centering
  \includegraphics[width=\textwidth]{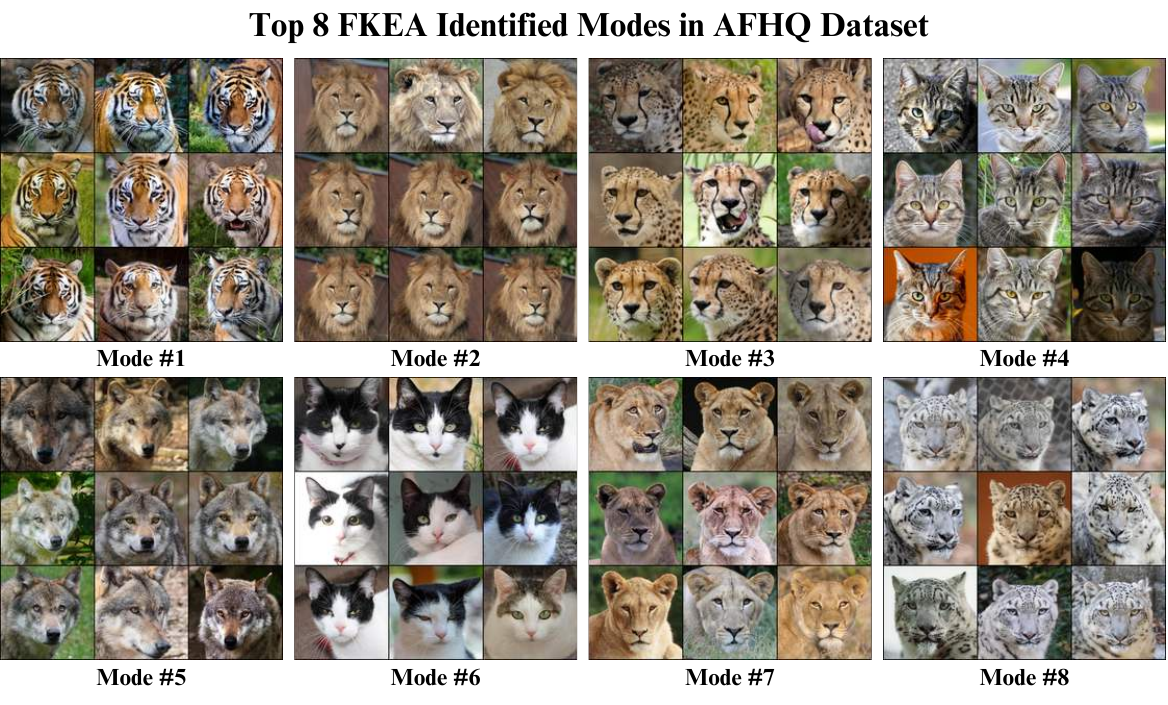}
  \caption{RFF-based identified clusters used in FKEA Evaluation in AFHQ dataset with \textit{DinoV2} embeddings and bandwidth $\sigma = 20$}
  \label{fig:afhq dinov2 appendix}
\end{figure}

\begin{figure}
  \centering
  \includegraphics[width=\textwidth]{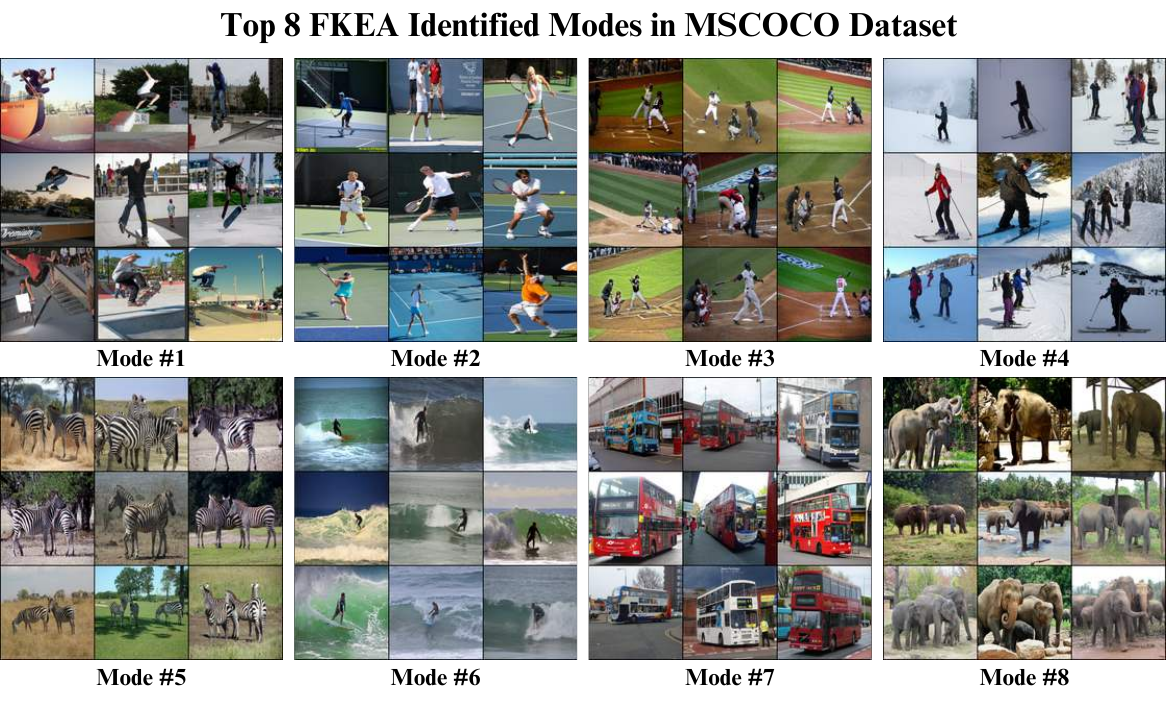}
  \caption{RFF-based identified clusters used in FKEA Evaluation in Microsoft COCO dataset with \textit{DinoV2} embeddings and bandwidth $\sigma = 22$}
  \label{fig:coco dinov2 appendix}
\end{figure}

\begin{figure}
  \centering
  \includegraphics[width=\textwidth]{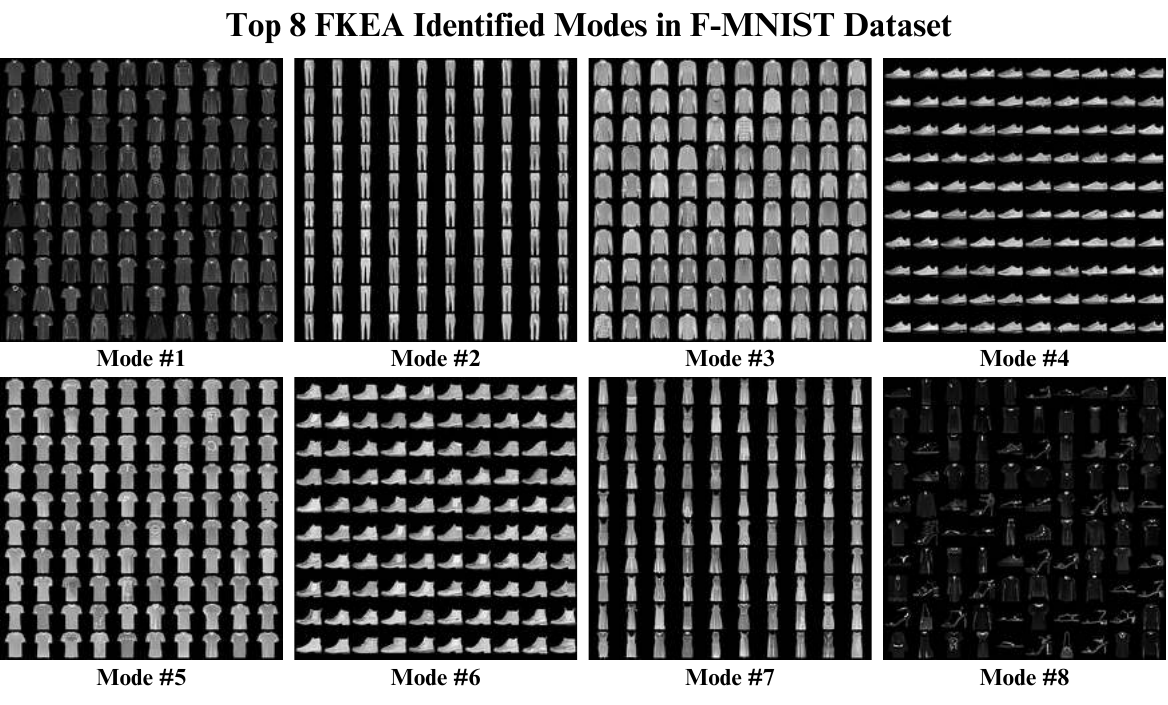}
  \caption{RFF-based identified clusters used in FKEA Evaluation in FASHION-MNIST \protect{\cite{xiao2017_online}} dataset with \textit{pixel} embeddings and bandwidth $\sigma = 15$}
  \label{fig:f-mnist pixel}
\end{figure}

\begin{figure}
  \centering
  \includegraphics[width=\textwidth]{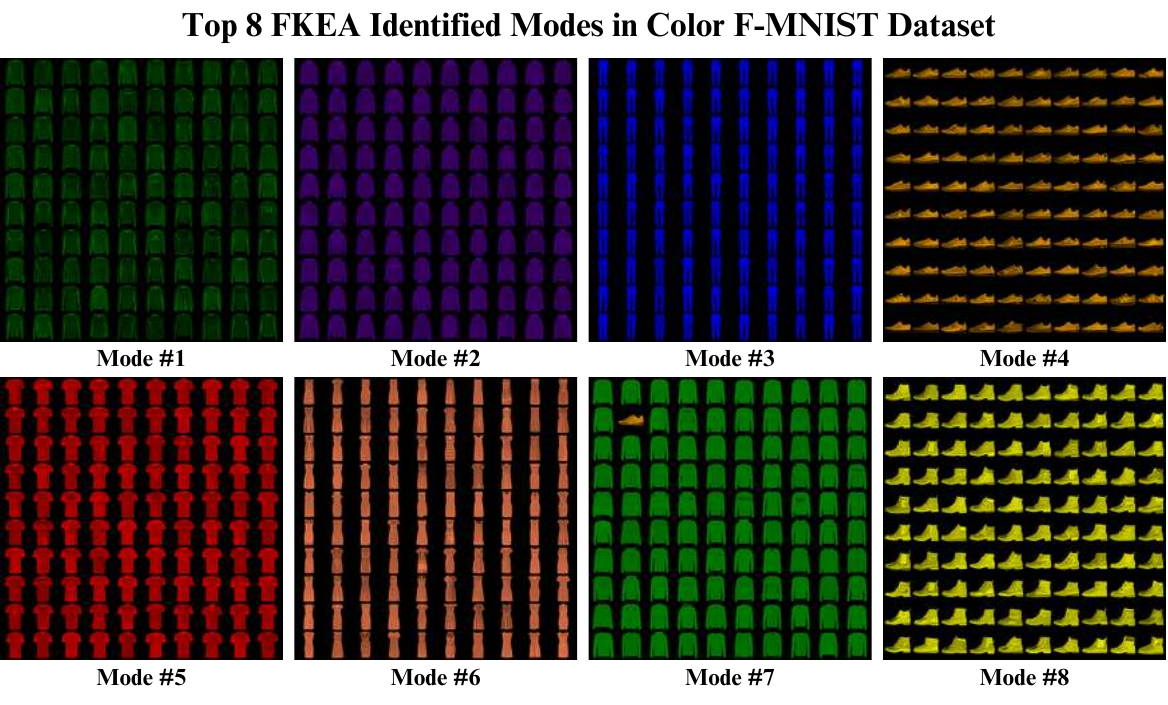}
  \caption{RFF-based identified clusters used in FKEA Evaluation in colored FASHION-MNIST \protect{\cite{xiao2017_online}} dataset with \textit{pixel} embeddings and bandwidth $\sigma = 4.5$}
  \label{fig:color-f-mnist pixel}
\end{figure}

\subsection{The effect of number of datapoints on clustering results with FKEA}

In this section, we evaluate the quality of clusters obtained from the ImageNet dataset as the number of samples $n$ varies. Specifically, we compare clustering results for 10k, 50k, 100k, and 250k samples. Figure \ref{fig:imagenet nsamples clusters} illustrates the first four modes derived from the FKEA framework.

At $n = 10k$, the clusters exhibit noise and often merge unrelated modes, as seen in Mode 2, where elephants and foxes appear within the same cluster. As $n$ increases, the clustering quality improves, becoming more coherent and meaningful. This trend is particularly evident in Modes 1 and 2, where the clusters more accurately reflect distinct semantic groups.

These findings highlight the importance of scaling VENDI and RKE scores, as computational overhead becomes a critical factor in assessing the diversity of generative models. Scaling these metrics allows for a more efficient evaluation, especially when dealing with large datasets and high sample counts.

\subsection{Comparison between Generative Models on ImageNet dataset}
In this section we report the FKEA scores for various generative models on ImageNet dataset. Table \ref{tab:generative model comparison} evaluates the diversity scores of various ImageNet GAN models using the FKEA method applied to VENDI-1 and RKE, with potential extension to the entire VENDI family. The comparison includes baseline diversity metrics such as Inception Score \cite{salimans_improved_2016}, FID \cite{heusel_gans_2018}, Improved Precision/Recall \cite{kynkaanniemi_improved_2019}, and Density/Coverage \cite{naeem_reliable_2020}.
\begin{table}
\centering
\caption{Evaluated scores for ImageNet generative models. The Gaussian kernel bandwidth parameter chosen for RKE, VENDI, FKEA-VENDI and FKEA-RKE is $\sigma=25$ and Fourier features dimension $2r=16k$. The scores were obtained by running the GitHub of \protect{\cite{stein_exposing_2023}} on pre-generated 50k samples.}
\resizebox{\textwidth}{!}{
    \centering
    \begin{tabular}{lcccccccc}
\toprule
Method & IS $\uparrow$ & FID $\downarrow$& Precision $\uparrow$ & Recall $\uparrow$ & Density $\uparrow$ & Coverage $\uparrow$ & FKEA VENDI-1 $\uparrow$ & FKEA RKE $\uparrow$ \\
\midrule
Dataset (100k) & - & - & - & - & - & - & 9176.9 &  996.7 \\
\midrule
ADM \cite{dhariwal_diffusion_2021} & 542.6 & 11.12 & 0.78 & 0.79 & 0.88 & 0.89 & 8360.3 & 633.4\\
ADMG \cite{dhariwal_diffusion_2021} & 659.3 & 5.63 & 0.87 & 0.84 & 0.80 & 0.85 & 8524.2 & 811.5\\
ADMG-ADMU \cite{dhariwal_diffusion_2021} & 701.6 & 4.78 & 0.90 & 0.73 & 1.20 & 0.96 & 8577.6 & 839.8\\
BigGAN \cite{brock_large_2018} & 696.4 & 7.91 & 0.81 & 0.44 & 0.99 & 0.57 & 7120.5 & 492.4\\
DiT-XL-2 \cite{peebles_scalable_2023} & 743.2 & 3.56 & 0.92 & 0.84 & 1.16 & 0.97 & 8626.5 & 855.8\\
GigaGAN \cite{kang2023gigagan} & 678.8 & 4.29 & 0.89 & 0.74 & 0.74 & 0.70 & 8432.5 & 671.6\\
LDM \cite{rombach_high-resolution_2022} & 734.4 & 4.75 & 0.93 & 0.76 & 1.04 & 0.93 & 8573.7 & 811.9\\
Mask-GIT \cite{chang2022maskgit} & 717.4 & 5.66 & 0.91 & 0.72 & 1.01 & 0.82 & 8557.4 & 759.5\\
RQ-Transformer \cite{lee2022autoregressive} & 558.3 & 9.57 & 0.80 & 0.76 & 0.77 & 0.59 & 8078.4 & 512.1\\
StyleGAN-XL\cite{Sauer2021ARXIV} & 675.4 & 4.34 & 0.89 & 0.74 & 1.18 & 0.96 & 8171.9 & 703.5\\
\bottomrule
\end{tabular}

}
\label{tab:generative model comparison}
\end{table}

\subsection{Synthetic Image Dataset Modes}

In addition to running clustering on ImageNet dataset, we also applied FKEA with varying Gaussian Kernel bandwidth parameter $\sigma$ to other datasets. The results are presented for FFHQ (Figure \ref{fig:ffhq dinov2 appendix}), AFHQ (Figure \ref{fig:afhq dinov2 appendix}) Microsoft COCO (Figure \ref{fig:coco dinov2 appendix}) and Mono/Color versions of F-MNIST\cite{xiao2017_online} (Figures \ref{fig:f-mnist pixel} and \ref{fig:color-f-mnist pixel}) up to top 8 modes. 

The experimental setup is similar to figure \ref{fig:imagenet dinov2} with the only change is optimised bandwidth for each dataset, since datasets differ in number and typicality of the samples.

\begin{figure}
  \centering
  \includegraphics[width=\textwidth]{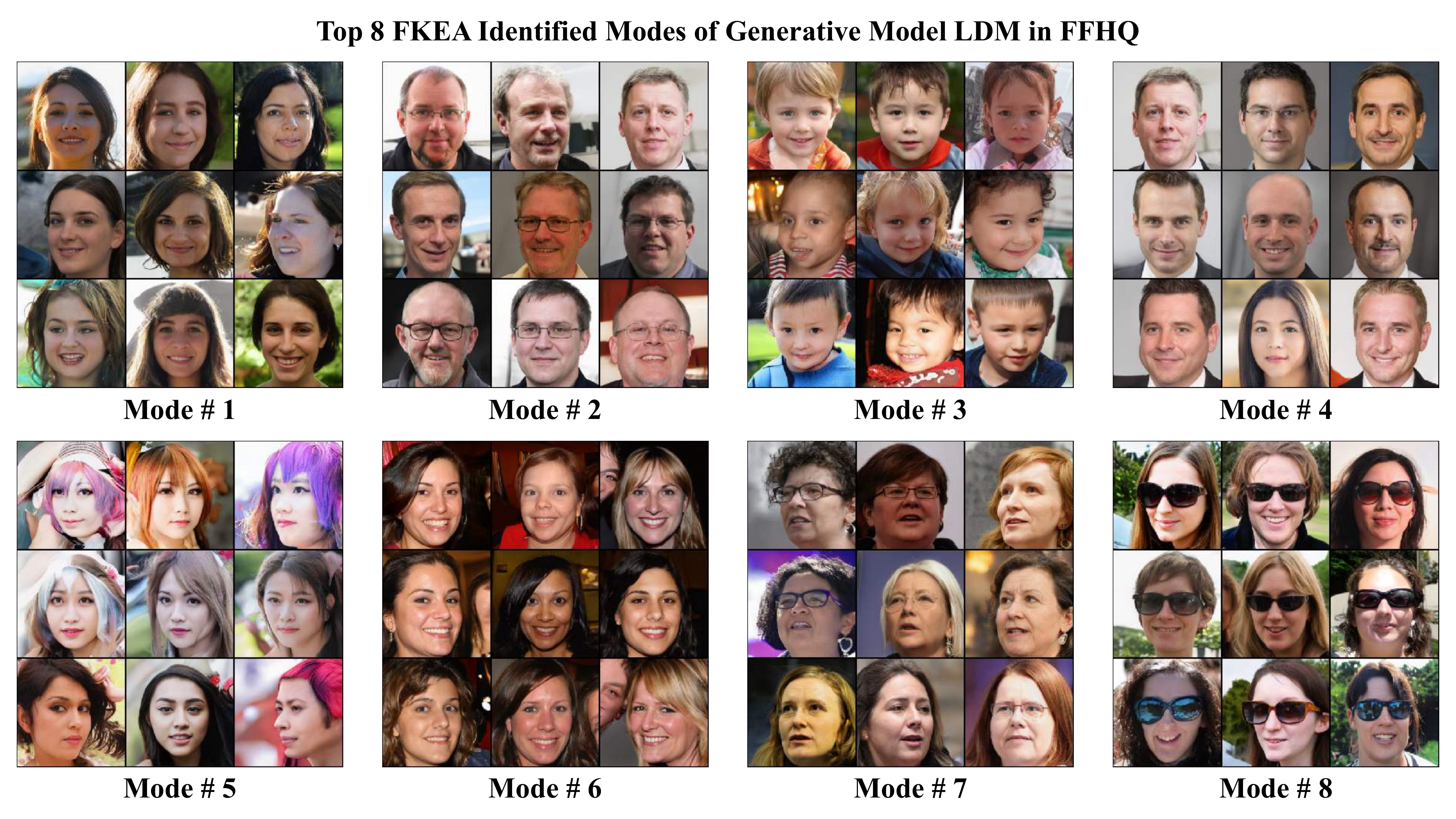}
  \caption{RFF-based identified clusters used in FKEA Evaluation of LDM \protect{\cite{rombach_high-resolution_2022}} generative model in FFHQ with \textit{DINOv2} embeddings and bandwidth $\sigma = 20$}
  \label{fig:ldm-ffhq dinov2}
\end{figure}

\begin{figure}
  \centering
  \includegraphics[width=\textwidth]{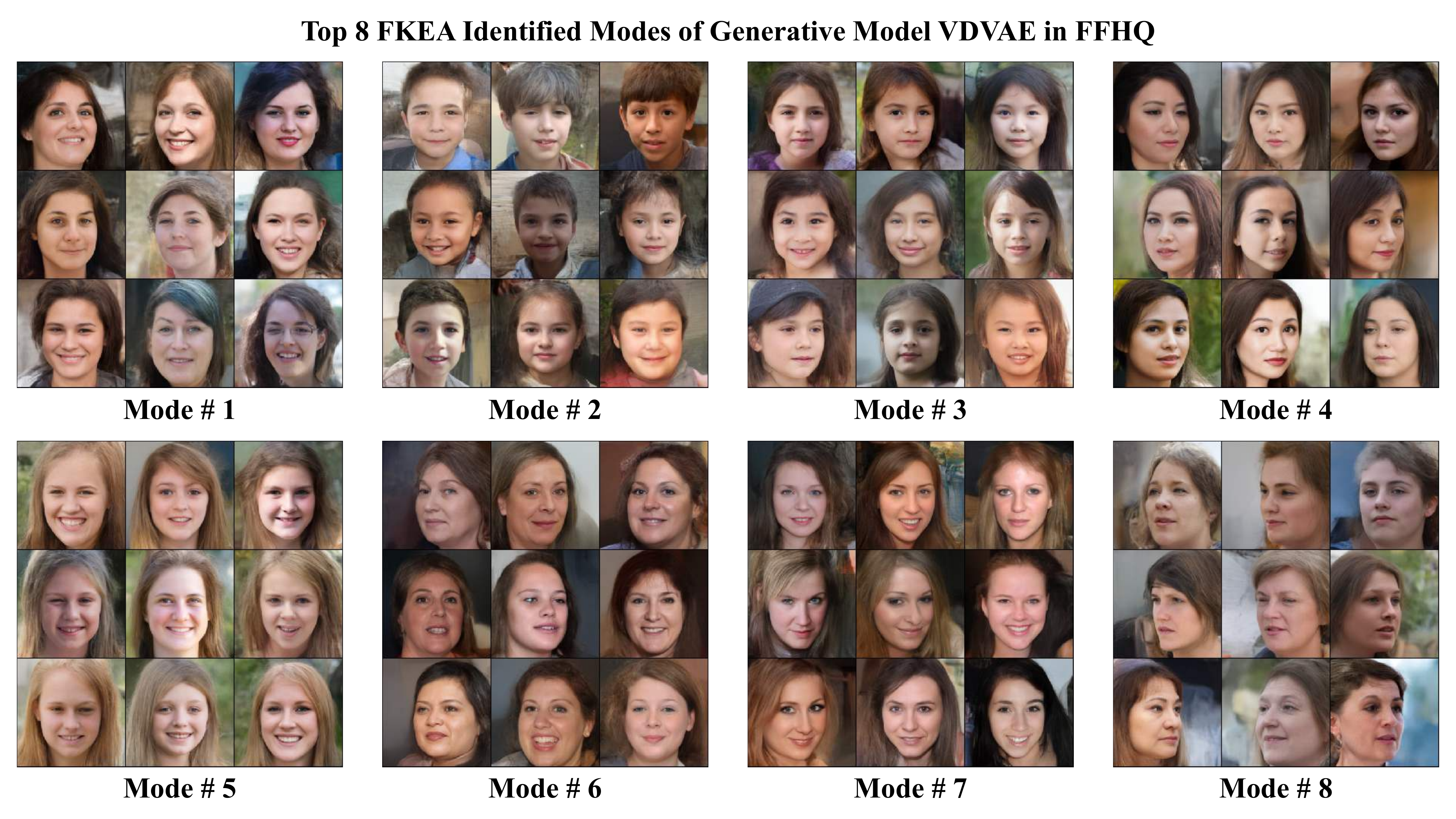}
  \caption{RFF-based identified clusters used in FKEA Evaluation of VDVAE \protect{\cite{child_very_2020}} generative model in FFHQ with \textit{DINOv2} embeddings and bandwidth $\sigma = 20$}
  \label{fig:vdvae-ffhq dinov2}
\end{figure}

\begin{figure}
  \centering
  \includegraphics[width=\textwidth]{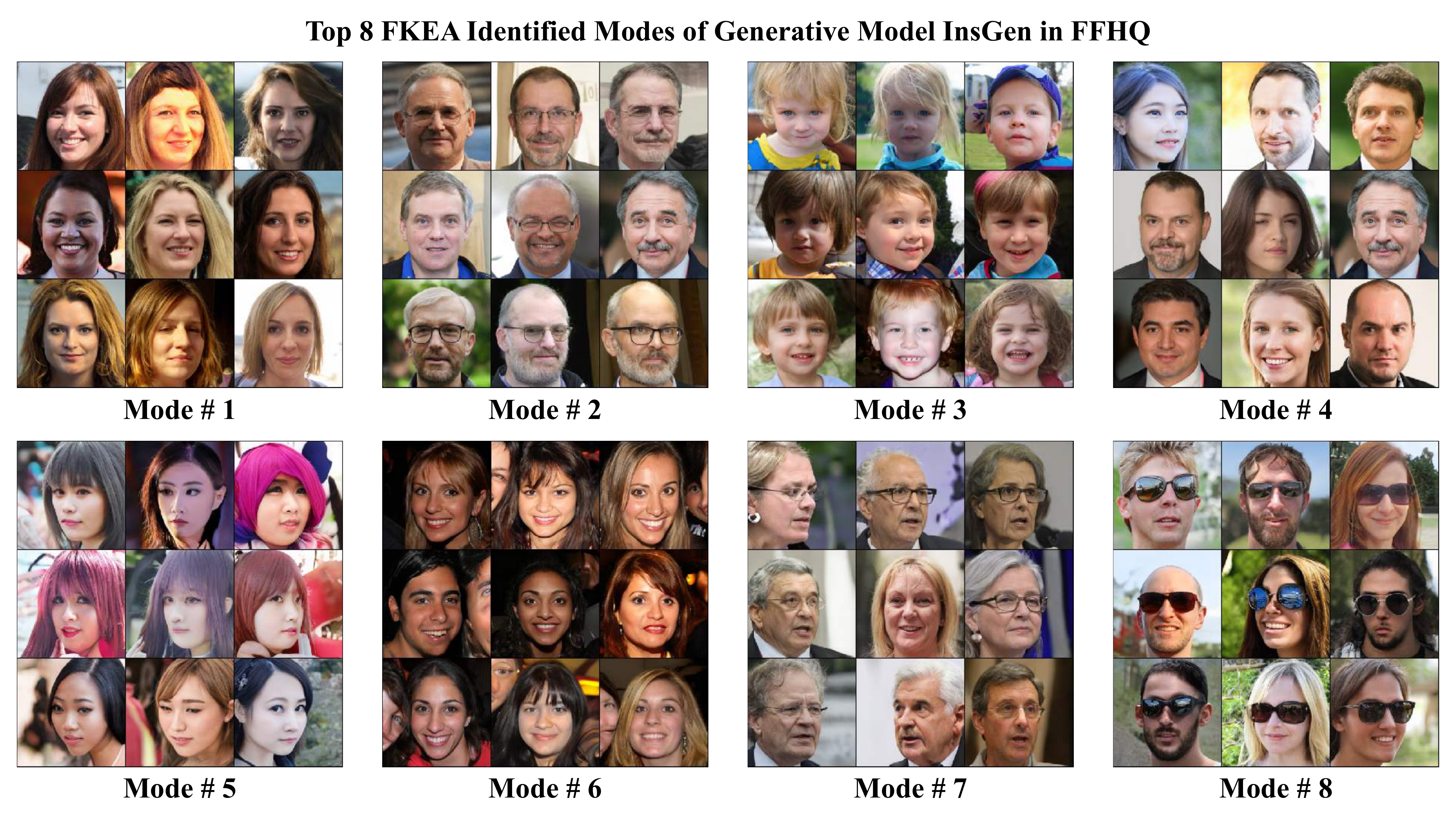}
  \caption{RFF-based identified clusters used in FKEA Evaluation of InsGen \protect{\cite{yang_data-efficient_2021}} generative model in FFHQ with \textit{DINOv2} embeddings and bandwidth $\sigma = 20$}
  \label{fig:insgen-ffhq dinov2}
\end{figure}

\begin{figure}
  \centering
  \includegraphics[width=\textwidth]{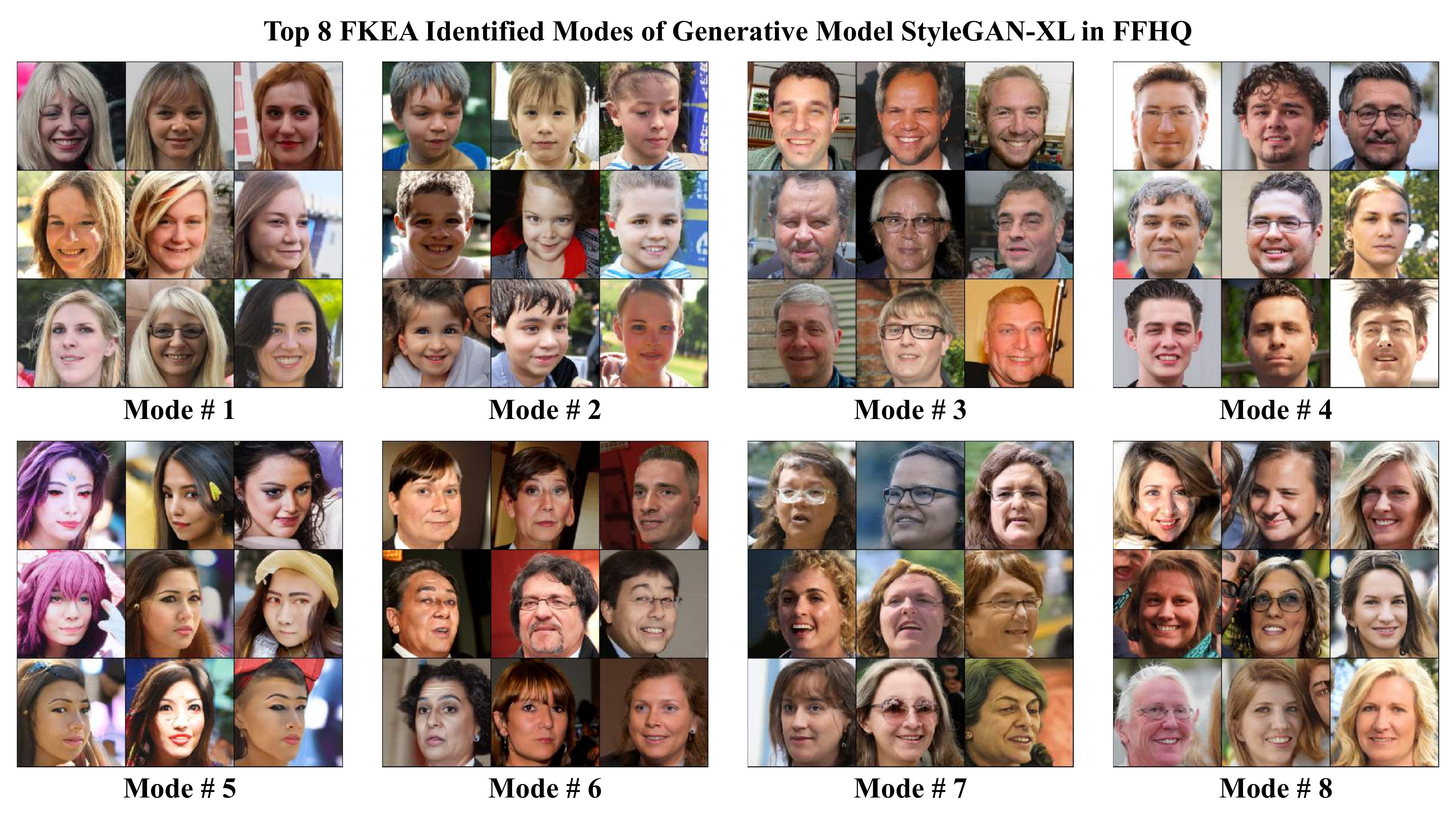}
  \caption{RFF-based identified clusters used in FKEA Evaluation of StyleGAN-XL \protect{\cite{Sauer2021ARXIV}} generative model in FFHQ with \textit{DINOv2} embeddings and bandwidth $\sigma = 20$}
  \label{fig:styxl-ffhq dinov2}
\end{figure}

\subsection{Effect of other embeddings on FKEA clustering}
Even though DinoV2 is a primary embedding in our experimental settings, we acknowldge the use of other embedding models such as SwAV\cite{caron_unsupervised_2020} and CLIP\cite{radford_learning_2021}. The resulting clusters differ from original DinoV2 clusters and require separate bandwidth parameter finetuning. In our experiments, SwAV embedding emphasizes object placement, such as animal in grass or white backgrounds, as seen in Figure \ref{fig:imagenet swav}. CLIP on the other hand clusters by objects, such as birds/dogs/bugs, as seen in Figure \ref{fig:imagenet clip}. These results indicate that FKEA powered by other embeddings will slightly change the clustering features; however, it does not hinder the clustering performance of RFF based clustering with FKEA method.

\begin{figure}
    \centering
    \includegraphics[width=\textwidth]{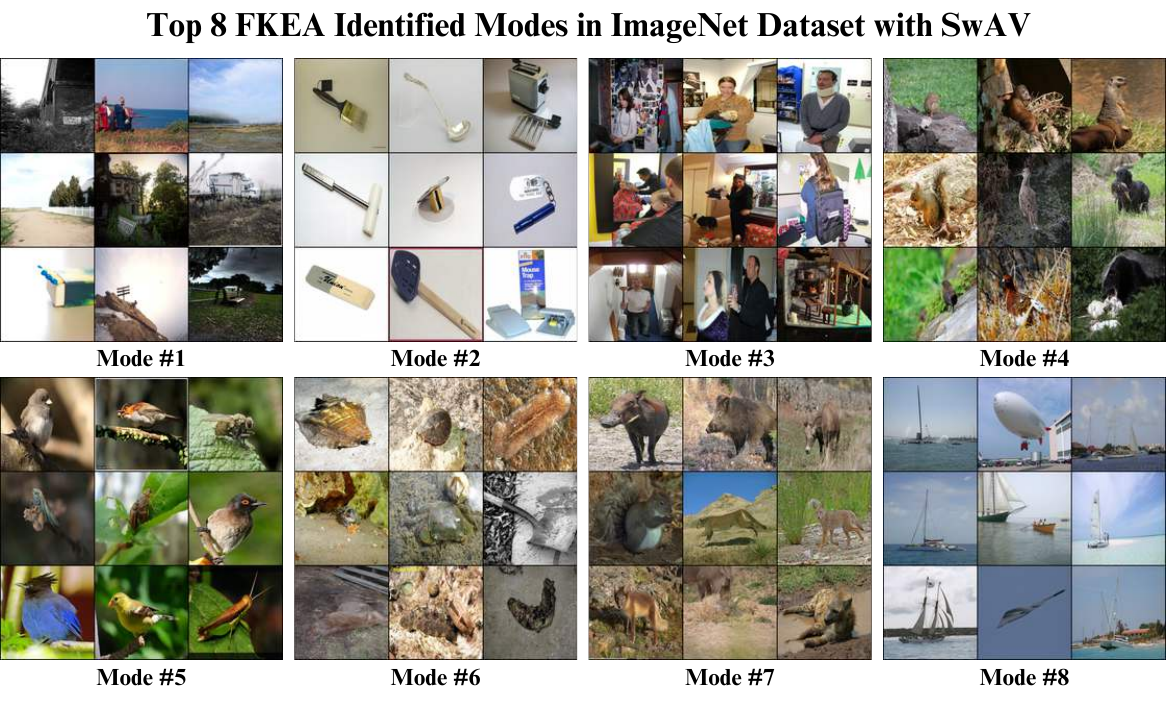}
    \caption{RFF-based identified clusters used in FKEA Evaluation of \textit{SwAV} embedding on ImageNet with bandwidth $\sigma = 0.8$}
    \label{fig:imagenet swav}
\end{figure}

\begin{figure}
    \centering
    \includegraphics[width=\textwidth]{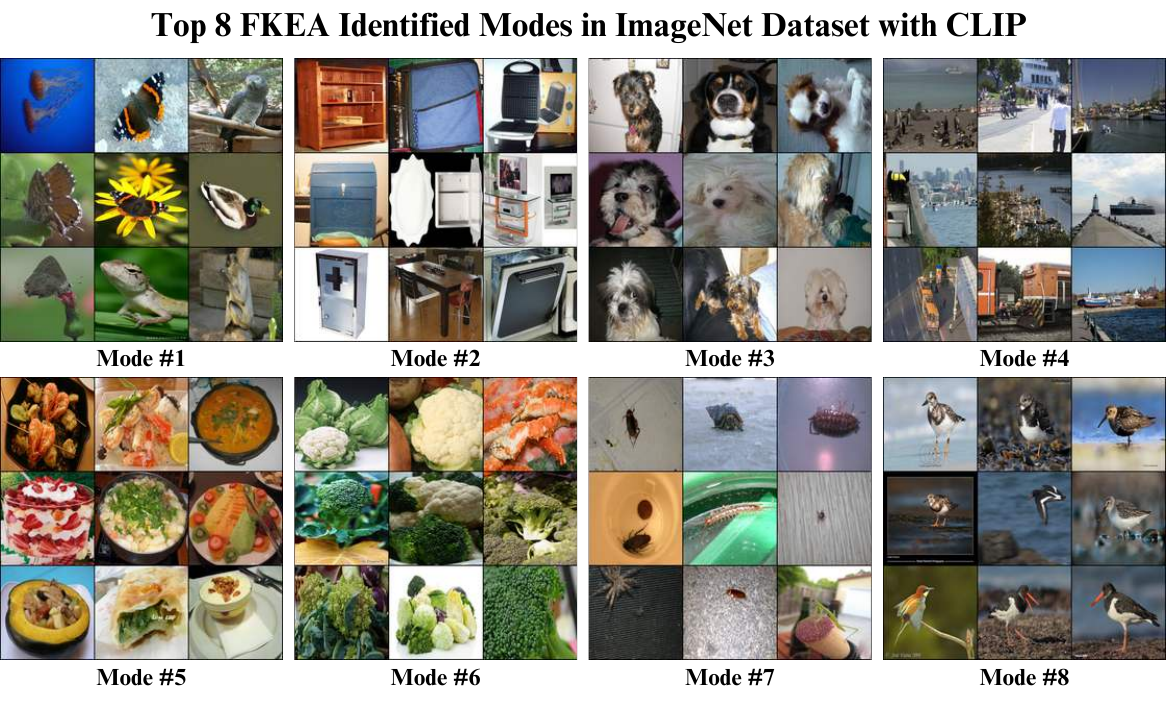}
    \caption{RFF-based identified clusters used in FKEA Evaluation of \textit{CLIP} embedding on ImageNet with bandwidth $\sigma = 7.0$}
    \label{fig:imagenet clip}
\end{figure}

\subsection{Effect of embeddings on score convergence}
To highlight the compatibility of FKEA across diverse embedding spaces, we conducted convergence experiments on various text and image embeddings. Figure \ref{fig:convergence_other_embeddings} presents the convergence results of the VENDI and RKE scores, comparing both FKEA and non-FKEA counterparts. Our findings show that the convergence remains consistent across different embedding spaces, demonstrating the robustness of the proposed method.

\begin{figure}
    \centering
    \subfigure[Diversity convergence on synthetic countries dataset across various text embeddings]{\includegraphics[width=\textwidth]{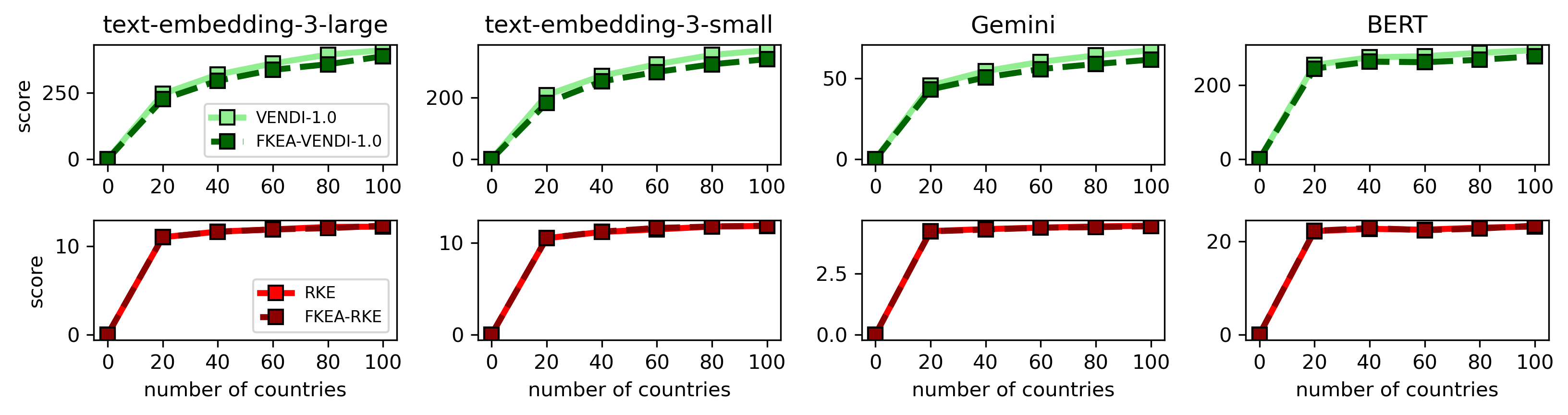}}
    \subfigure[Diversity convergence on ImageNet dataset across various image embeddings]{\includegraphics[width=\textwidth]{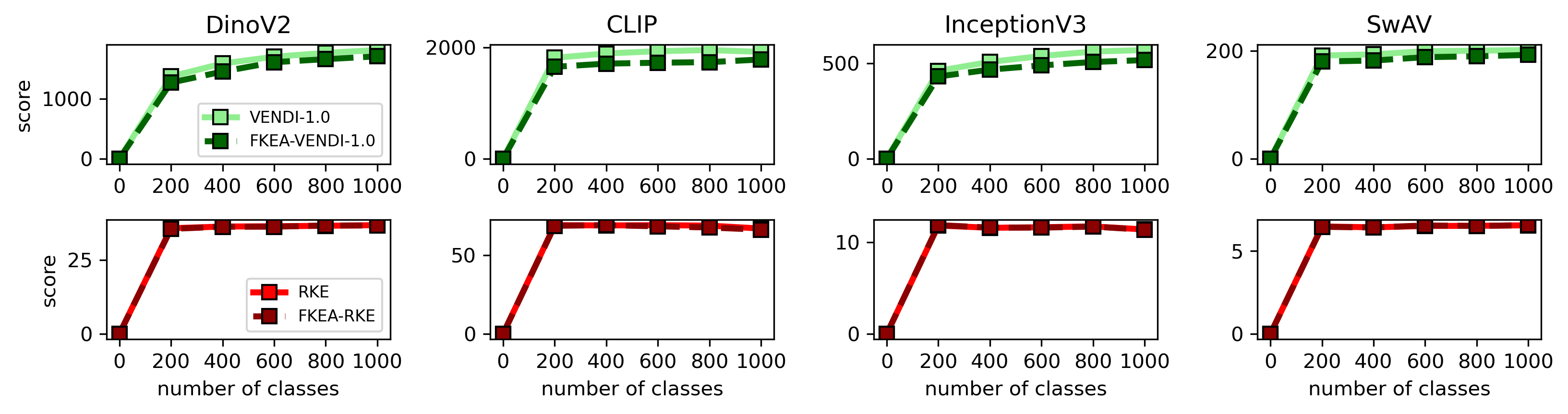}}
    \caption{Summary of diversity convergence with $r=12000$ and sample size $n=20000$.}
  \label{fig:convergence_other_embeddings}
\end{figure}

\subsection{Text Dataset Modes}
To understand the applicability and effectiveness of the FKEA method beyond images, we extended our study to text datasets. We observed that clustering text data poses a more challenging task compared to image data. This increased difficulty arises from the ambiguity in defining clear separability factors within text, a contrast to the more visually distinguishable criteria in images. The process of evaluating text clusters is not straightforward and often varies significantly based on human judgment and perception.

To visualise the results, we use YAKE \cite{campos_yake_2020} algorithm to extract the keywords in each text mode and present the identified unigram and bigram keywords. We demonstrate that the results hold for text datasets and identified clusters are meaningful. 

Table \ref{tab:wikipedia modes} displays the identified clusters associated with Wikipedia article titles and keywords analyzed using the FKEA method. Identified mode 1 correlates most with historical figures/events/places. Mode 2 clusters smaller villages and rural regions together. Mode 3 is exclusively about people in sports, such as athletes and referees. Mode 4 visualises various music bands and albums. Lastly, mode 5 presents the articles about sports events, such as football leagues.

\begin{table}
\centering
\resizebox{\textwidth}{!}{
    \centering
    \begin{tabular}{*5l}
\toprule
\textbf{Mode \#1} & \textbf{Mode \#2} & \textbf{Mode \#3} & \textbf{Mode \#4} & \textbf{Mode \#5} \\
\midrule
Grosse Pointe & Ishkli & Gerson Garca & Girugamesh (album) & 2009 WPSL season \\
Mark Scharf & Khazora & Valentin Mogilny & Japonesque (album) & 2012 Milwaukee... \\
Alexander McKee & Sis, Azerbaijan & Gerald Lehner (referee) & Documentaly & 2020 San Antonio FC... \\
Clay Huffman & Zasun & Dmitri Nezhelev & EX-Girl & 2020 HFX Wanderers FC... \\
Ravenna, Ohio & Zaravat & Grigori Ivanov & Indie 2000 & 2020 Sporting Kansas City... \\
C. M. Eddy Jr. & Bogat & Leonidas Morakis & Triangle (Perfume album) & FC Tucson \\
Hornell, New York & Yakkakhona & Jos Luis Alonso Ber... & Waste Management (album) & 2008 K League \\
Larchmont, New York & Yava, Tajikistan & Giovanni Gasperini & Fush Yu Mang & 201112 New Zealand Football... \\
Robert Hague & Ikizyak & Mohamed Chab & Fantastipo (song) & 200809 Melbourne Victory FC... \\
General Hershy Bar & Khushikat & Louis Darmanin & Xtort & 2012 Pittsburgh Power season \\
\toprule
\multicolumn{5}{c}{\textbf{Keywords}} \\
\midrule
London & populated places & players category & music video & American football \\
American History & Maplandia.com Category & Association football & album & players Category \\
University Press & municipality & FIFA World & studio album & Football League \\
United States & village & World Cup & Records albums & League \\
World War & Osh Region & Summer Olympics & Singles Chart & League Soccer \\
\bottomrule
\end{tabular}

}
\caption{Top 5 Wikipedia Dataset Modes with corresponding eigenvalues with \textit{text-embedding-3-large} embeddings and bandwidth $\sigma = 1.0$}
\label{tab:wikipedia modes}
\end{table}

Table \ref{tab:cnn modes} outlines the largest modes identified within a news dataset analyzed using the FKEA method, with a detailed focus on the content themes of each mode. The most dominant mode is associated with topics related to crime and police activities, indicating a frequent coverage area in the dataset. Mode 2 is closely correlated with President Obama, reflecting a significant focus on political coverage. Mode 3 pertains to dieting, which suggests a presence of health and lifestyle topics. Mode 4 is linked to environmental disasters, highlighting the dataset’s attention to ecological and crisis-related news. Finally, Mode 5 deals with plane crash accidents, underscoring the coverage of major transportation incidents.
\begin{table}
\centering
\resizebox{\textwidth}{!}{
    \centering
    \begin{tabular}{lllll}
\toprule
\textbf{Mode \#1} & \textbf{Mode \#2} & \textbf{Mode \#3}  & \textbf{Mode \#4}  & \textbf{Mode \#5} \\
\midrule
police & President Obama & size & people & died \\
British police & Obama & year & severe weather & family \\
police officer & Barack Obama & weight & Death toll & plane crash \\
Police found & President & dress size & heavy rain & mother \\
family & White House & stone & Environment Agency & plane \\
found & Obama administration & Slimming World & million people & found \\
told police & Obama calls & lost & rain & people \\
court & House & lose weight & flood warnings & children \\
arrested & United States & diet & people dead & hospital \\
home & Obama plan & model & people killed & found dead \\
\bottomrule
\end{tabular}

}
\caption{Top 5 CNN/Dailymail 3.0.0 \protect{\cite{HermannKGEKSB15}}\protect{\cite{see-etal-2017-get}} Dataset Modes with corresponding eigenvalues with \textit{text-embedding-3-large} embeddings and bandwidth $\sigma = 0.8$.}
\label{tab:cnn modes}
\end{table}

Table \ref{tab:movie summary modes} delineates the distribution of genres and production types within a dataset of movie summaries analyzed using the FKEA method. The first mode predominantly covers drama TV shows without focusing on any specific subtopic, indicating a broad categorization within this genre. From mode 2 onwards, the features become more distinct and defined. Mode 2 specifically represents Bollywood movies, with a significant emphasis on the Romance genre. Mode 3 is dedicated to clustering comedy shows. Mode 4 is exclusively associated with cartoons, evidenced by keywords such as "Tom \& Jerry". Lastly, mode 5 clusters together detective and crime fiction shows.

\begin{table}
\centering
\resizebox{\textwidth}{!}{
    \centering
    \begin{tabular}{lrlrlrlrlr}
\toprule

\multicolumn{2} {l}{\textbf{Mode \#1}} & \multicolumn{2} {l}{\textbf{Mode \#2}} & \multicolumn{2} {l}{\textbf{Mode \#3}} & \multicolumn{2} {l}{\textbf{Mode \#4}} & \multicolumn{2} {l}{\textbf{Mode \#5}} \\
\midrule
\multicolumn{2} {l}{The House on Tele.. } &\multicolumn{2} {l}{ Anand } &\multicolumn{2} {l}{ Bring Your Smile Along } &\multicolumn{2} {l}{ Chhota Bheem...} &\multicolumn{2} {l}{ Walk a Crooked Mile } \\
\multicolumn{2} {l}{Seems Like Old Times } &\multicolumn{2} {l}{ I Love You } &\multicolumn{2} {l}{ The Girl Most Likely } &\multicolumn{2} {l}{ Duck Amuck } &\multicolumn{2} {l}{ Assignment to Kill } \\
\multicolumn{2} {l}{Shadows and Fog } &\multicolumn{2} {l}{ Toh Baat Pakki } &\multicolumn{2} {l}{ Hips, Hips, Hooray! } &\multicolumn{2} {l}{ Hare-Abian Nights } &\multicolumn{2} {l}{ The Crime of the Century } \\
\multicolumn{2} {l}{Obsession } &\multicolumn{2} {l}{ Abodh } &\multicolumn{2} {l}{ Lady Be Good } &\multicolumn{2} {l}{ Porky's Five and Ten } &\multicolumn{2} {l}{ Murder at Glen Athol } \\
\multicolumn{2} {l}{Milk Money } &\multicolumn{2} {l}{ Khulay Aasman... } &\multicolumn{2} {l}{ The Courtship of Eddie's...} &\multicolumn{2} {l}{ Sock-a-Doodle-Do } &\multicolumn{2} {l}{ Guns } \\
\multicolumn{2} {l}{Very Bad Things } &\multicolumn{2} {l}{ Kasthuri Maan } &\multicolumn{2} {l}{ You Live and Learn } &\multicolumn{2} {l}{ Buccaneer Bunny } &\multicolumn{2} {l}{ Because of the Cats } \\
\multicolumn{2} {l}{Blame It on the Bell...} &\multicolumn{2} {l}{ Chhaya } &\multicolumn{2} {l}{ Dames } &\multicolumn{2} {l}{ Hare Lift } &\multicolumn{2} {l}{ The House of Hate } \\
\multicolumn{2} {l}{The Miracle Man } &\multicolumn{2} {l}{ Yeh Dillagi } &\multicolumn{2} {l}{ Painting the Clouds...} &\multicolumn{2} {l}{ Scrap Happy Daffy } &\multicolumn{2} {l}{ The Ace of Scotland Yard } \\
\multicolumn{2} {l}{The Sleeping Tiger } &\multicolumn{2} {l}{ Deva } &\multicolumn{2} {l}{ Pin Up Girl } &\multicolumn{2} {l}{ Hic-cup Pup } &\multicolumn{2} {l}{ The World Gone Mad } \\
\multicolumn{2} {l}{The Scapegoat } &\multicolumn{2} {l}{ Bhalobasa Bha... } &\multicolumn{2} {l}{ Too Young to Kiss } &\multicolumn{2} {l}{ The Goofy Gophers } &\multicolumn{2} {l}{ Firepower } \\
\toprule
\multicolumn{10}{c}{\textbf{Keywords}} \\
\midrule
\multicolumn{2}{l}{mystery} &\multicolumn{2}{l}{hindi film} &\multicolumn{2}{l}{musical} &\multicolumn{2}{l}{animation} &\multicolumn{2}{l}{crime} \\
\multicolumn{2}{l}{noir} &\multicolumn{2}{l}{romance} &\multicolumn{2}{l}{theme songs} &\multicolumn{2}{l}{ Tom \& Jerry} &\multicolumn{2}{l}{murder} \\
\multicolumn{2}{l}{kidnap} &\multicolumn{2}{l}{love} &\multicolumn{2}{l}{city} &\multicolumn{2}{l}{Spike} &\multicolumn{2}{l}{detective} \\
\multicolumn{2}{l}{crime} &\multicolumn{2}{l}{marriage} &\multicolumn{2}{l}{romance} &\multicolumn{2}{l}{adventure} &\multicolumn{2}{l}{investigation} \\
\multicolumn{2}{l}{police} &\multicolumn{2}{l}{daughter} &\multicolumn{2}{l}{} &\multicolumn{2}{l}{comedy} &\multicolumn{2}{l}{killer} \\
\bottomrule
\end{tabular}

}
\caption{Top 5 CMU Movie Summary Corpus \protect{\cite{cmu2013}} Dataset Modes with corresponding eigenvalues with \textit{text-embedding-3-large} embeddings and bandwidth $\sigma = 0.8$. The table summarises the assigned genres to each movie in the first 100 paragraphs in each mode.}
\label{tab:movie summary modes}
\end{table}

\subsection{Video Dataset Modes}
In this section, we present additional experiments on the Kinetics-400\cite{kay2017kinetics} video dataset. This dataset comprises 400 human action categories, each with a minimum of 400 video clips depicting the action. Similar to the video evaluation metrics, we used the I3D pre-trained model\cite{Carreira_i3d} which maps each video to a 1024-vector feature. Figure~\ref{fig:kinetics}, the first mode captured broader concepts while the other models focused on specific ones. Also, the plots indicate that increasing the number of classes from 40 to 400 results in an increase in the FKEA metrics.

\begin{figure}
    \centering
    \includegraphics[width=\textwidth]{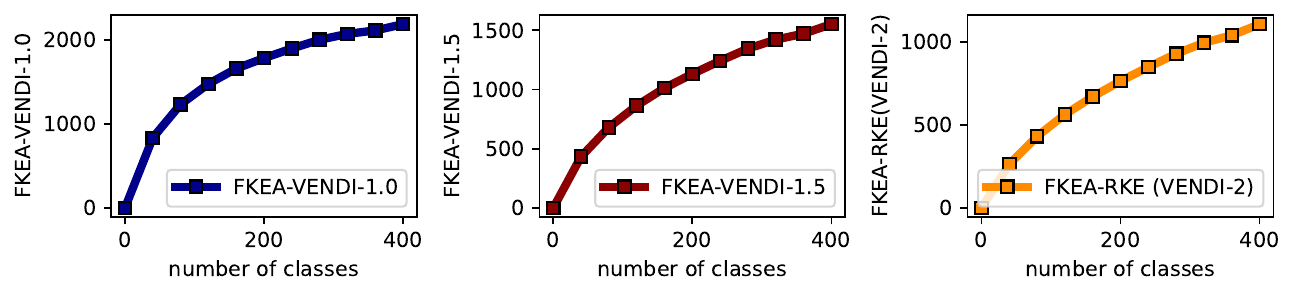}
    \subfigure[Mode \#1]{\includegraphics[width=0.32\textwidth]{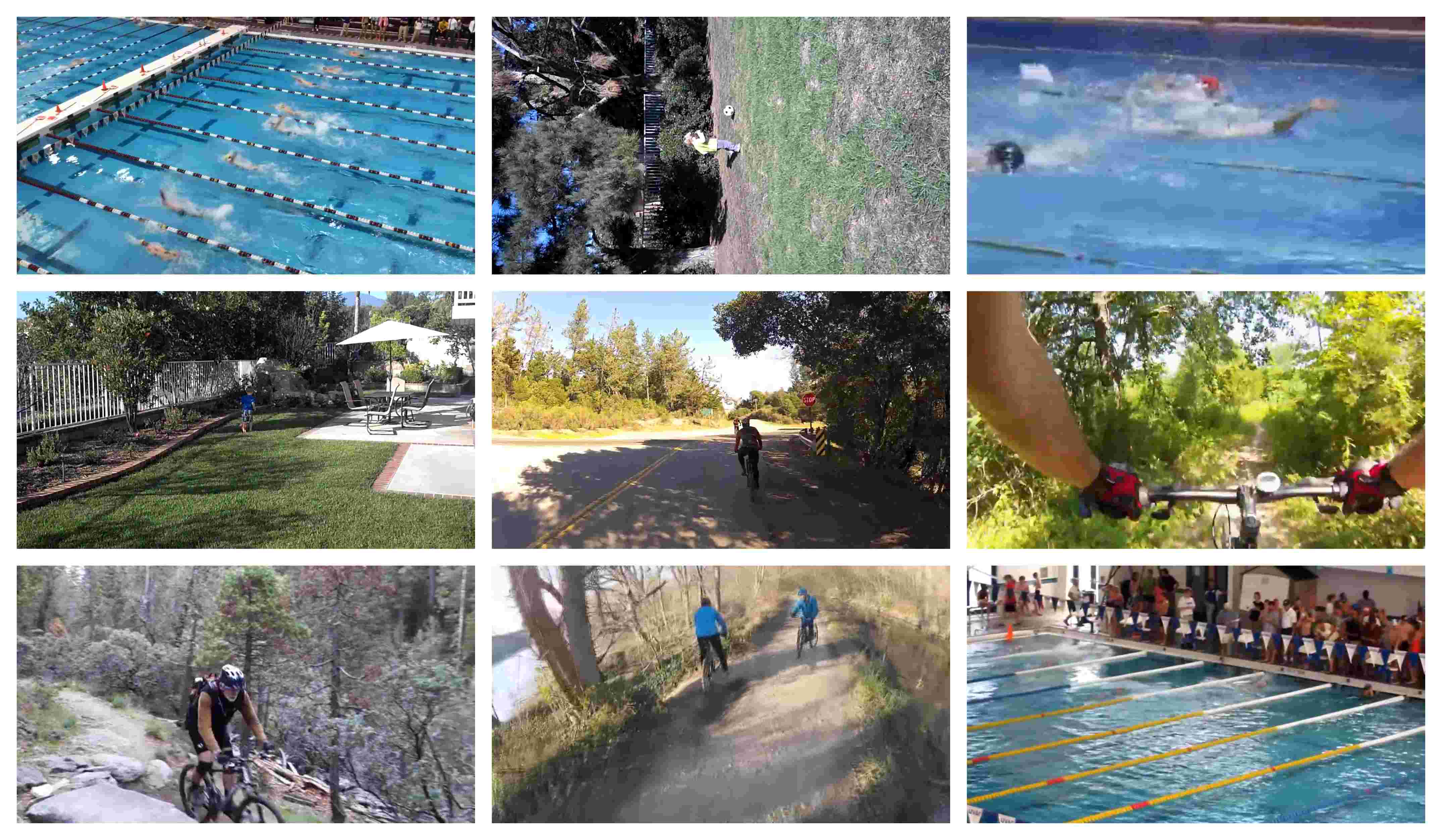}}
    \subfigure[Mode \#2]{\includegraphics[width=0.32\textwidth]{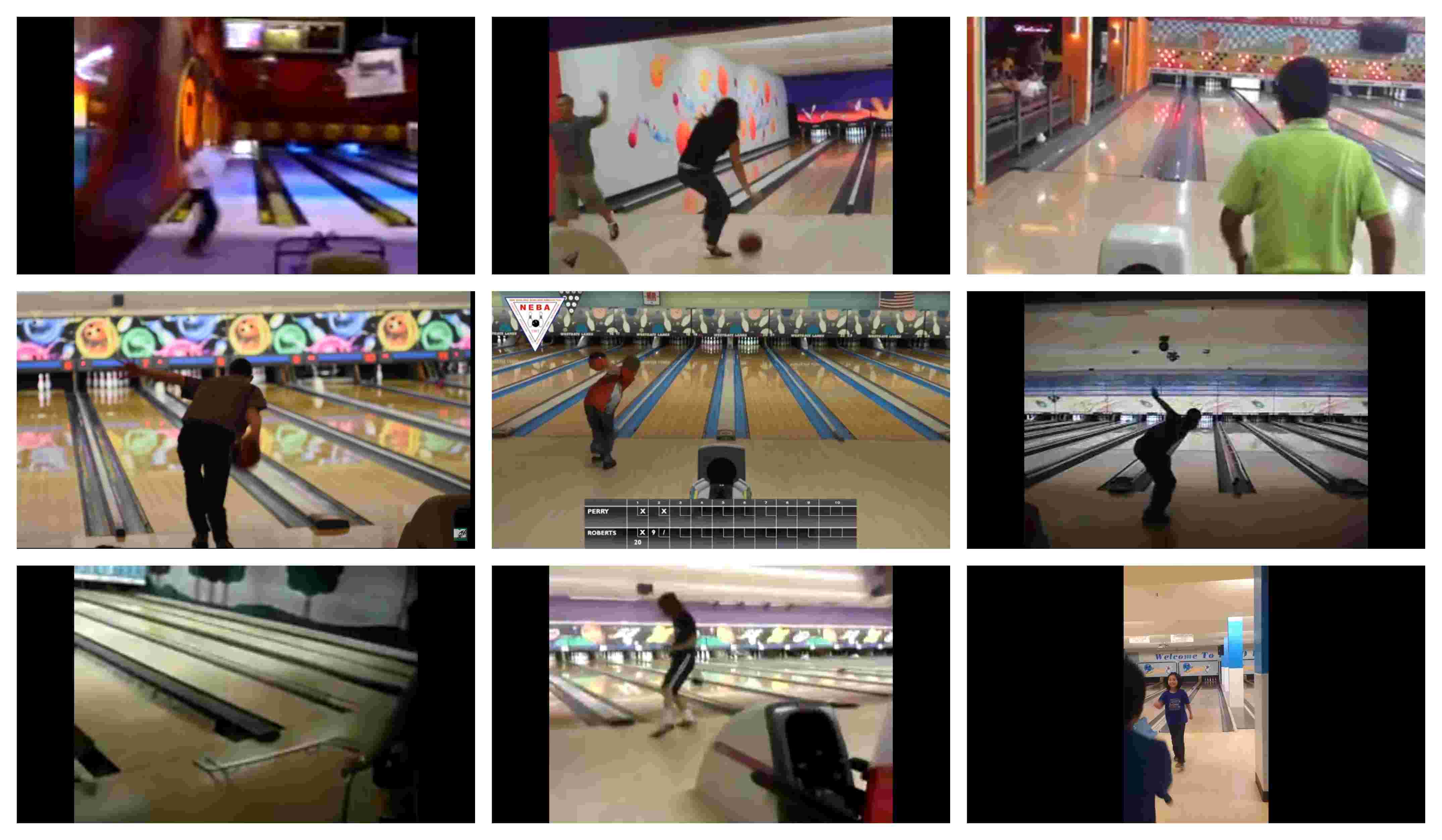}}
    \subfigure[Mode \#3]{\includegraphics[width=0.32\textwidth]{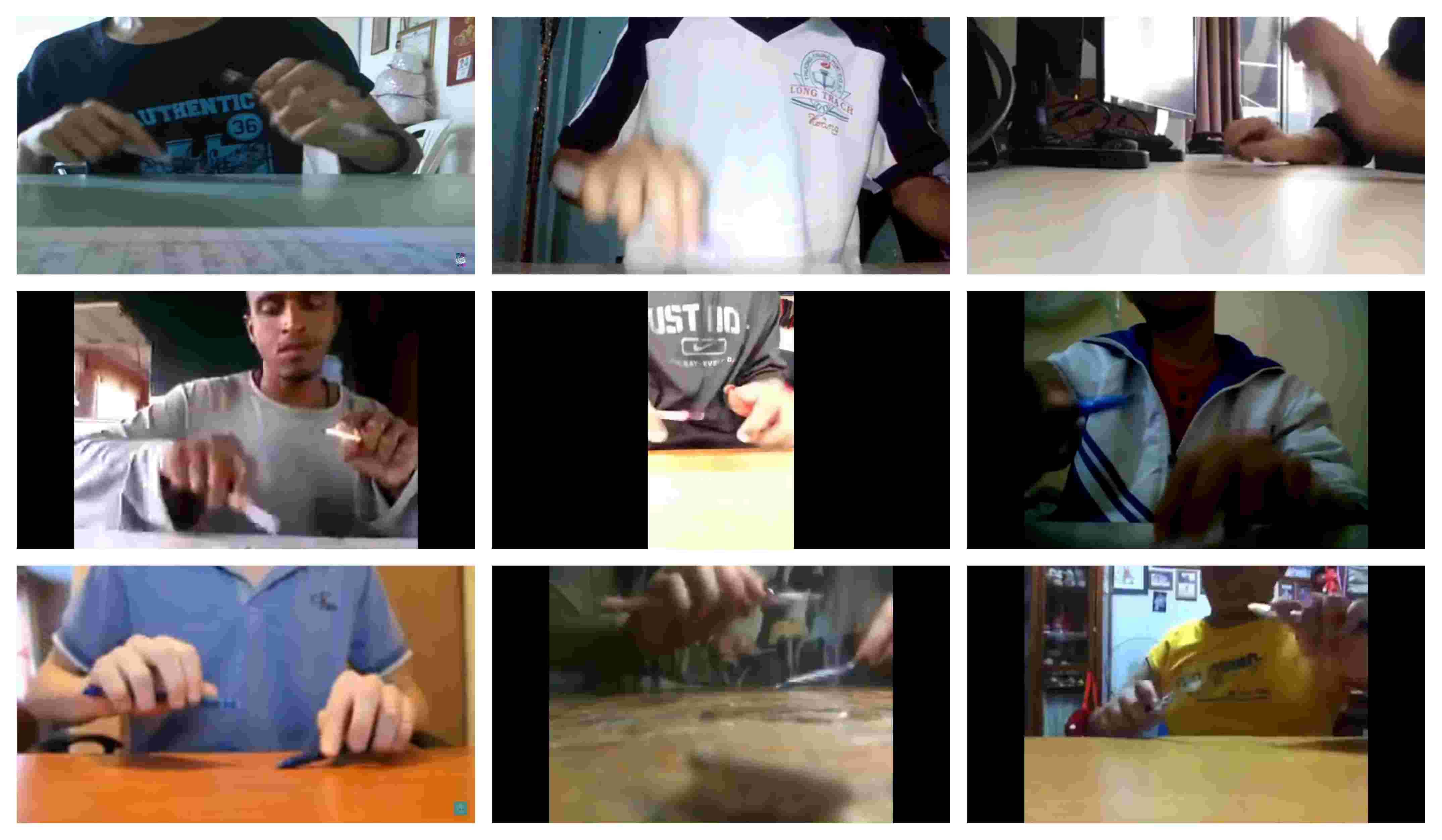}}\\
    \subfigure[Mode \#4]{\includegraphics[width=0.32\textwidth]{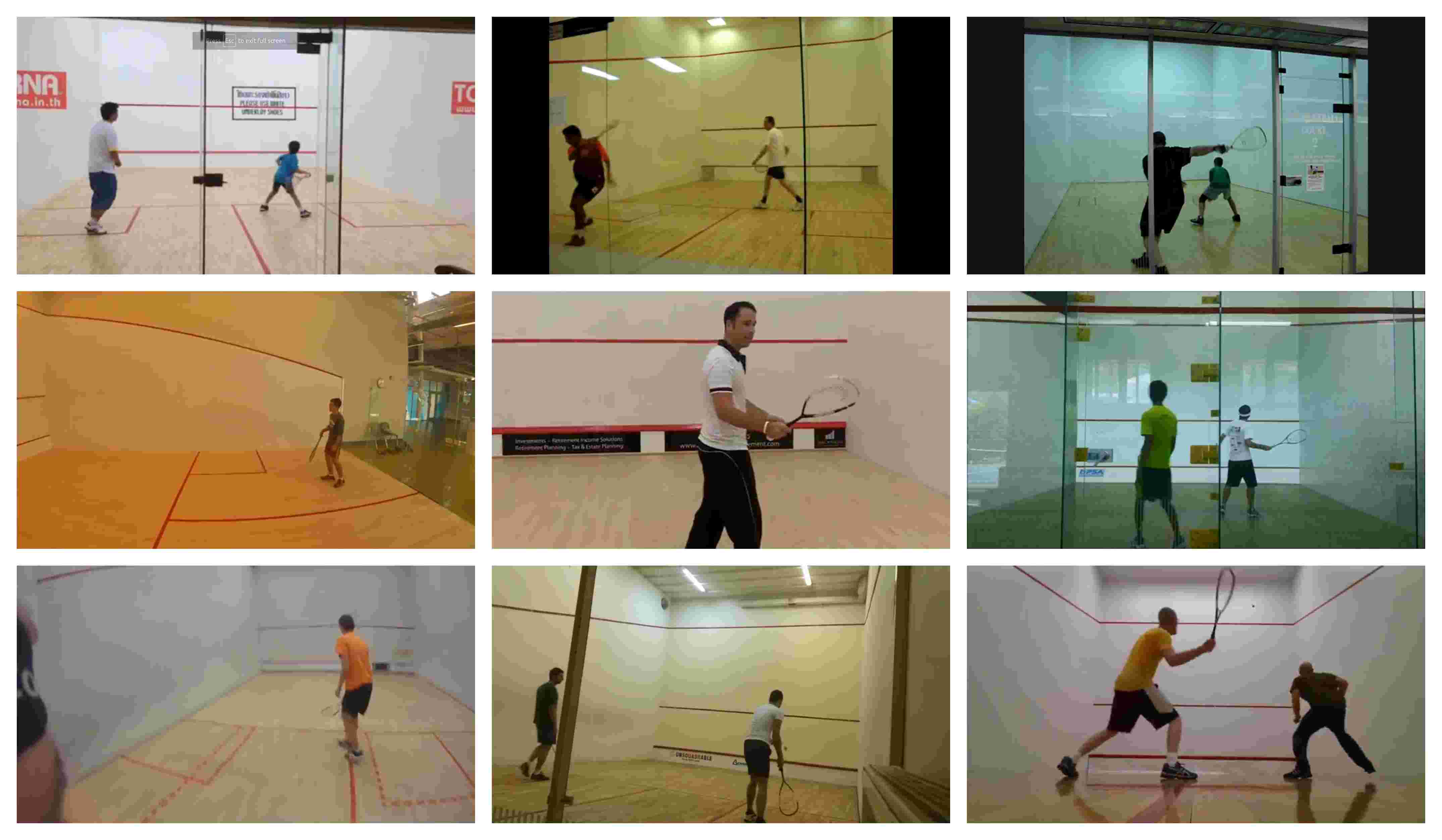}}
    \subfigure[Mode \#5]{\includegraphics[width=0.32\textwidth]{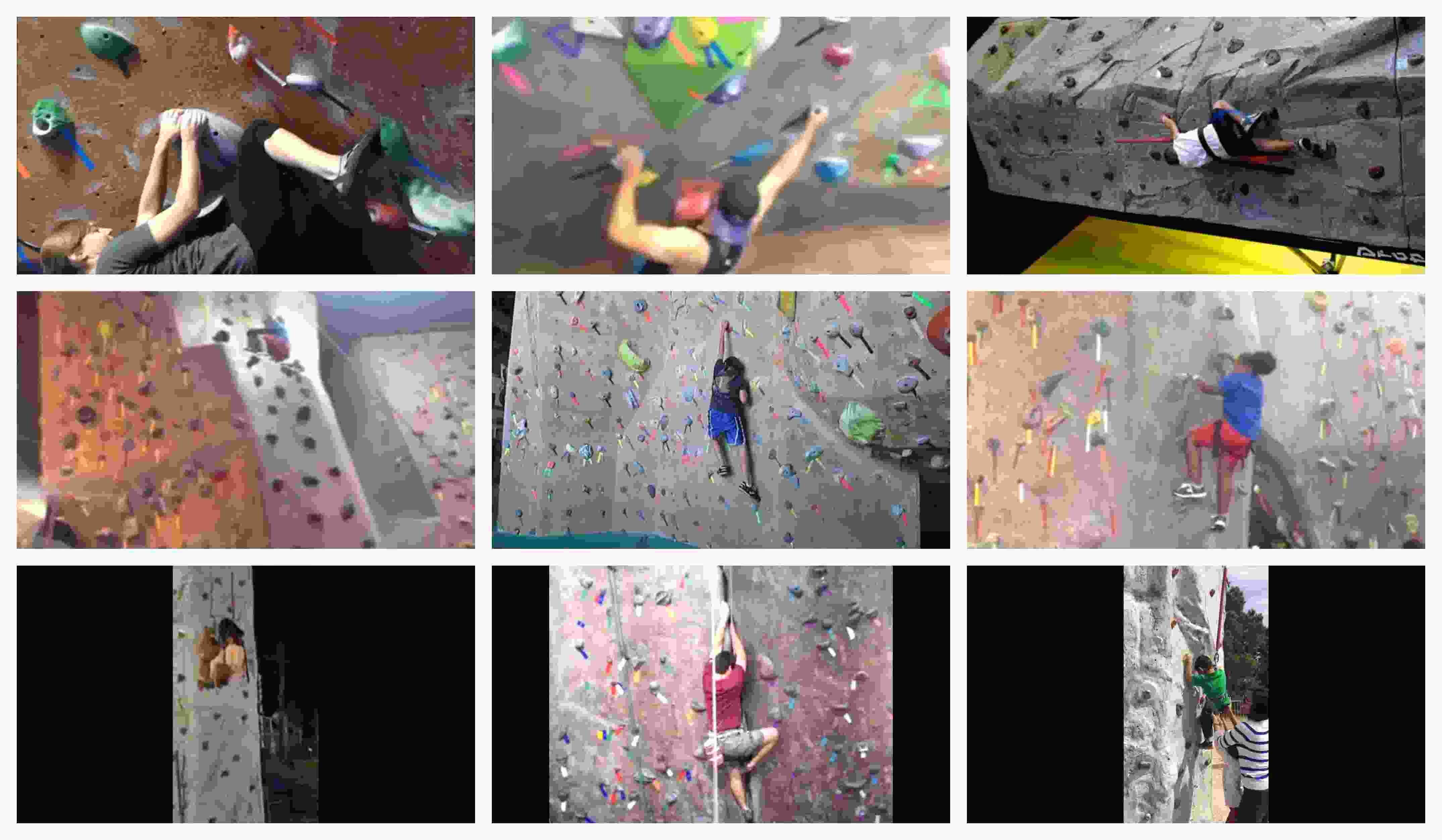}}
    \subfigure[Mode \#6]{\includegraphics[width=0.32\textwidth]{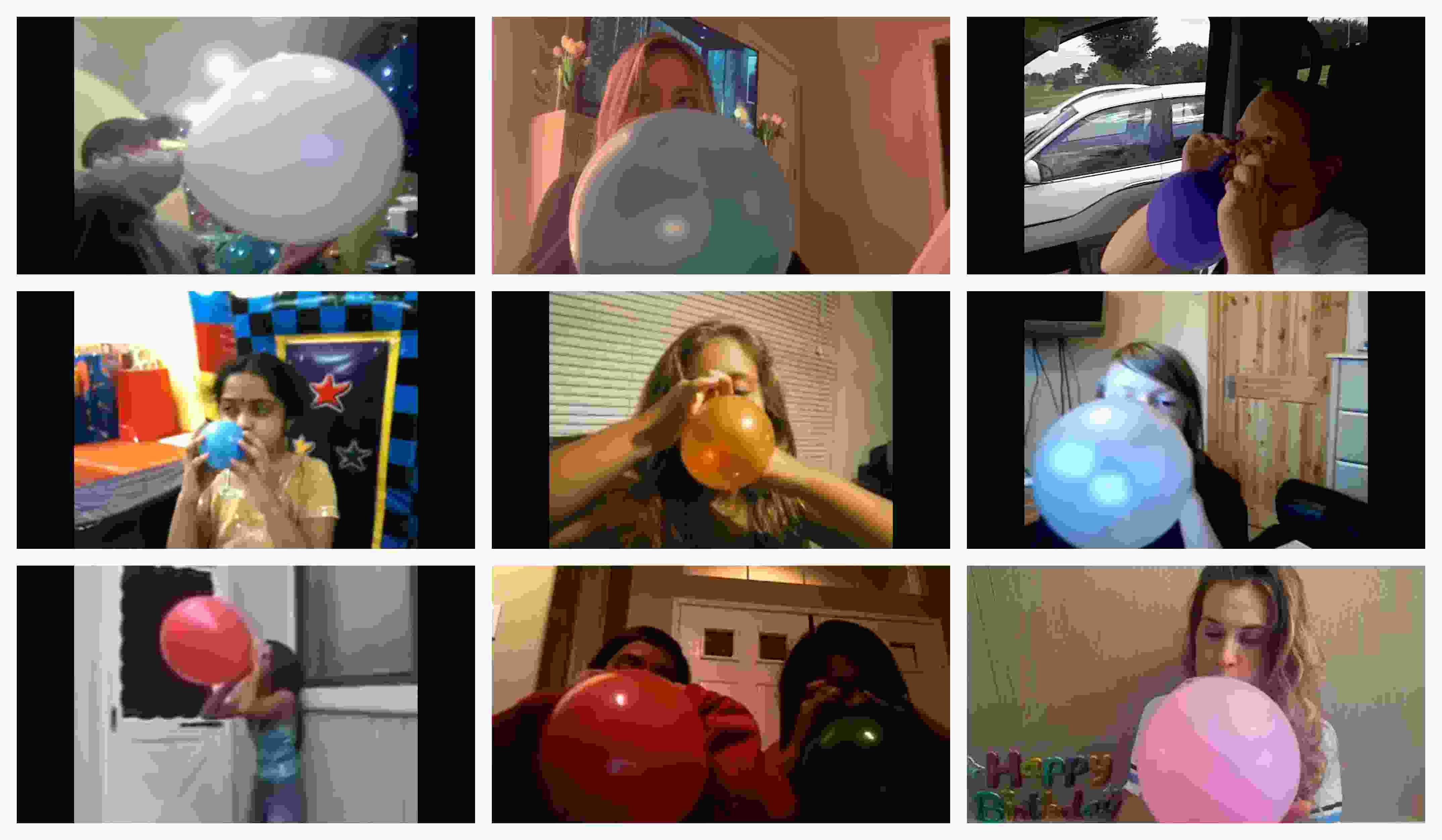}}
    \caption{RFF-based identified clusters used in FKEA Evaluation in Kinetics-400 dataset with \textit{I3D} embeddings. Plots indicate that increasing the number of classes from 40 to 400 results in an increase in the FKEA metrics.}
  \label{fig:kinetics}
\end{figure}




\end{document}